\documentclass[preprint, 12pt]{elsarticle}

\usepackage{amssymb}
\usepackage{amsmath}
\usepackage{amsthm}
\usepackage[utf8]{inputenc}
\usepackage{amscd, mathtools, bm}
\usepackage{faktor}
\usepackage{subcaption}
\usepackage{graphicx}
\usepackage{epstopdf}
\usepackage{makecell}
\usepackage[labelfont=bf]{caption}
\usepackage{multicol}
\usepackage{cancel}
\usepackage{booktabs}
\usepackage{color}
\usepackage{appendix}
\usepackage{tikz}
\usepackage[normalem]{ulem}
\usepackage{multirow}
\usepackage{amsfonts}
\usepackage{algorithm}
\usepackage{algpseudocode}
\usepackage{float}
\usepackage{times}
\usepackage{xcolor}
\usepackage{sidecap}
\usepackage[hmargin=1in,vmargin=1in]{geometry}
\newtheorem{theorem}{\textsc{Theorem}}

\newtheorem{remark}{\textsc{Remark}}
\newtheorem{proposition}{\textsc{Proposition}}
\newcommand{\R}{\mathbb{R}}

\newcommand{\f}{\mathbf{f}}
\newcommand{\F}{\mathbf{F}}

\newcommand{\br}{\mathbf{r}}
\newcommand{\bs}{\mathbf{s}}
\newcommand{\bu}{\mathbf{u}}

\newcommand{\bv}{\mathbf{v}}
\newcommand{\bV}{\mathbf{V}}

\newcommand{\bx}{\mathbf{x}}
\newcommand{\bX}{\mathbf{X}}

\newcommand{\bY}{\mathbf{Y}}
\newcommand{\balpha}{\bm{\alpha}}

\newcommand{\zero}{\mathbf{0}}
\newcommand{\Hspace}{\mathcal{H}}
\newcommand{\phivec}{\bm\phi}
\newcommand{\varphivec}{\bm\varphi}
\newcommand{\estphi}{\hat\phi}
\newcommand{\estphivec}{\bm{\hat\phi}}
\newcommand{\Cmat}{\bm{C}}
\newcommand{\mE}{\mathcal{E}}
\newcommand{\numcl}{K}
\newcommand{\idxcl}{k}
\newcommand{\cl}{C}
\newcommand{\clof}{\mathrm{k}}
\newcommand{\norm}[1]{\left\| #1 \right\|}
\newcommand{\inpro}[1]{\langle #1 \rangle}
\newcommand{\argmin}{\mathop{\mathrm{argmin}}}

\newcommand{\E}{\mathop{\mathbb{E}}}
\newcommand{\supp}[1]{\text{supp}(#1)}

\DeclareMathOperator{\dif}{d \!}
\newcommand{\dt}{\dif t}

\journal{Physica D: Nonlinear Phenomena}

\begin{document}

\begin{frontmatter}



\title{Learning Interaction Kernels from Collective Steady States} 

\author{Baoli Hao} 

\affiliation{organization={Department of Applied Mathematics, Illinois Institute of Technology},
            city={Chicago},
            postcode={60616}, 
            state={IL},
            country={USA}}
\author{Mauro Maggioni} 

\affiliation{organization={Department of Mathematics, Department of Applied Mathematics and Statistics, Johns Hopkins University},
            city={Baltimore},
            postcode={21218}, 
            state={MD},
            country={USA}}
\author{Ming Zhong} 

\affiliation{organization={Department of Mathematics, University of Houston},
            city={Houston},
            postcode={77204}, 
            state={TX},
            country={USA}}            
\begin{abstract}
We propose a learning procedure for system identification in interacting particle systems from single-snapshot observations of collective behaviors, unlike existing approaches that rely on observations of trajectories. This setting leads to a fundamentally ill-posed inverse problem, which we solve by using a regularization strategy based on the empirical distribution of observed configurations, drawn from different, unobserved initial conditions. We test our learning procedure on a variety of representative models with steady-state and quasi-stationary patterns, where collective behaviors encode implicit information about the interaction mechanisms, demonstrating that our approach enables stable and accurate recovery of the underlying interaction laws, leading to faithful reproduction of the collective behavior, and in many cases even of the dynamics leading up to it.
\end{abstract}



\begin{keyword}


Interacting Agent Systems \sep Collective Behaviors \sep Steady State \sep Variational Learning \sep Ill-posed Recovery
\end{keyword}

\end{frontmatter}



\section{Introduction}\label{sec:intro}
Interacting particle systems provide a fundamental framework for modeling emergent phenomena in nonlinear science and statistical physics. Complex systems of many interacting particles often exhibit coherent macroscopic patterns such as clustering~\cite{OD2002, MT2014}, flocking~\cite{PhysRevLett.75.1226, CS2007, AD2021}, synchronization~\cite{STROGATZ20001}, milling and swarming~\cite{PhysRevLett.96.104302, PhysRevE.93.043112, CHUANG200733}. Variants based on these models can be used to describe even more complicated collective behaviors, where  multiple emergent behaviors are mixed together, such as swarmalation~\cite{swarmalator2017, hao2023mixed}, line formation~\cite{GTZ2023}, hot-spot formation in crime~\cite{crime2008, hao2025crime, hao2026crime}, and multi-species interactions~\cite{10.1098/rsif.2013.1208}. These behaviors can arise from microscopic interaction laws that are relatively simple and yet lead to highly nontrivial collective dynamics. Mathematically, such systems are commonly described by nonlinear systems of ordinary or stochastic differential equations, for example in the form
\[
\dot\bx_i = \frac{1}{N}\sum_{j = 1, j \neq i}^N\Phi(\bx_i, \bx_j)(\bx_j - \bx_i), \quad i = 1, \cdots, N.
\]
Here $\bx_i \in \R^d$ describes the state variable of agent $i$, $\Phi: \R^d\times\R^d \rightarrow \R$, a pairwise interaction kernel, dictates how agent $j$ is affecting the change of state of agent $i$. 
In many cases of interest, such interaction structure can be further simplified, for instance through radial kernels depending on pairwise distances between the states.  The averaging by $1/N$ is mainly used for the mean-field limit (as $N \rightarrow \infty$), which consists of kinetic and continuum equations, such as Vlasov-Fokker–Planck type of equations~\cite{Jeans1915, VFP2025}, providing a bridge between microscopic dynamics and macroscopic observables.

While the forward analysis of such systems, such as deriving qualitative behavior and emergent structures from known interaction laws, has been extensively developed~\cite{Tadmor2015, Tadmor2021, Tadmor2025}, corresponding inverse problems remain less understood. For example, given observational data, can one recover the underlying interaction law $\Phi$? This question is central to data-driven discovery of governing equations and has attracted significant attention in recent years~\cite{mauro2017, LZTM2019, ZMM2020, MMQZ2021, FMMZ2022, MTZM2022, LANG2026101867, LANG2023}. Existing approaches typically rely on trajectory data (in continuous time or at a discrete set of time points), which provide direct/indirect information about time-derivatives and enable likelihood-based reconstruction of the dynamics. Other approaches, designed for homogeneous systems with a large number of agents, are based on the mean field limit approximation (a PDE describing the evolution of the probability distribution of the agents), see for example \cite{messenger2022learning,lang2023identifiabilityinteractionkernelsmeanfield}, and only very recent works that do without such an approximation have emerged \cite{wei2026learninginteractingparticlesystems}.

In contrast, many systems arising in physics and related fields are observed only at equilibrium or at a collective steady state. Examples include steady configurations in self-organization, stationary distributions in kinetic systems, behaviors such as flocking and synchronization, and metastable patterns in nonlinear media. In such settings, one observes samples drawn from an invariant measure $\mu_T$ associated with the dynamics, rather than trajectory data, $\{\bx_i(t)\}_{t \in [0, T]}$.  
We consider here the inverse problem of inferring $\Phi$ from such equilibrium and collective steady states: the absence of temporal information removes direct access to the vector field driving the dynamics, and the mapping from interaction laws to invariant measures typically cannot be expected to be one-to-one, and it often is not so in the mean field regime, where many different interaction kernels can lead to the same stationary measure. The problem is ill-posed, with distinct interaction mechanisms inducing (almost) indistinguishable collective steady state patterns.

This work shows that, in perhaps more cases than one may expect, the problem of learning interaction laws from only collective steady state observations can be solved; there is enough structure in the state of the $N$ particles, even when they reach a collective steady state, to estimate the interaction kernel; of course a simple yet consequential observation is that such a state of $N$ particles is very different from sampling $N$ particles independently from a steady state distribution/collective state. 
We provide extensive numerical experiments, ranging from first-order systems with steady states, to semi-steady states in multi-species systems, and quasi-steady states in fish-milling systems.  These numerical experiments demonstrate that the proposed learning procedure can recover the underlying interaction kernels, and it can do so in a robust way, as shown when we increase the level of observation noise.

The remaining sections of our paper are organized as follows: in Section~\ref{sec:model}, we introduce the models from which the observation data is obtained; we develop the learning framework and discuss possible ill-conditioning and identifiability in Section~\ref{sec:learn}; in Section~\ref{sec:example}, we demonstrate our learning tested on nine different types of dynamics, ranging from complete steady-state in first-order systems, semi-steady-state (flocking) in first-order multi-species systems and in second-order systems, and quasi-steady-state (milling); we conclude our paper and note a few possible future directions with this line of research in Section~\ref{sec:conclude}. 
\subsection{Comparison to Other Methods}
Differential-equation-based learning, i.e. learning from the data generated by $\dot\bX = \f(\bX)$, for high-dimensional states $\bX$, has gained increased interest in recent years given the advances in high-dimensional statistics and machine learning.  Representative approaches include SINDy~\cite{sindy2016} along with its weak-formulation variant Weak SINDy~\cite{weak_sindy}, Neural ODE~\cite{NODE2018}, PINN~\cite{pinn2019}, and recurrent neural-network formulations for learning PDE dynamics~\cite{RNN_PDE2018}.

Another line of work~\cite{mauro2017, LZTM2019, ZMM2020, MMQZ2021, FMMZ2022, MTZM2022} focuses on interacting particles, and uses a variational inverse problem approach to infer interaction kernel from time-series-like trajectory data, maximally using the symmetries in the equations. These methods use locally-constructed, partially data-driven bases to estimate the interaction kernels, and demonstrate that in many cases this can be achieved with no curse of dimensionality in the dimension of $\bX$. Similar learning methods using Gaussian Processes and Random Features are reviewed in \cite{feng2023learning}; concurrent learning of the collective forces and non-collective forces is developed in~\cite{James2026}; learning of interaction kernels in the mean-field limit of the particle models was studied in~\cite{LANG2023}. Other related works on learning from such limited amount of data can be found in~\cite{oneshotPINN, LfSS20}. 

Our method is a combination of the variational method and eigenvalue solvers.  We re-cast the learning into an eigenvalue problem, and we are able to recover the unknown interaction kernel with limited time information.
\section{Model Equations}\label{sec:model}
In this paper, we focus on two families of interacting particle systems. The first one is modeled by a first-order dynamical system in gradient flow form with an external force.  It is defined for a system of $N$ agents, where each agent is assigned a time-dependent state vector $\bx_i(t) \in \R^d$ (describing quantities of interest such as position, velocity, opinions, emotion, etc.). The rate of change of such state is governed by the system of ODEs
\begin{equation}\label{eq:first_order}
\dot\bx_i = \f^E(\bx_i) + \sum_{j = 1, j \neq i}^N\frac{1}{N_{\clof_j}}\phi^E_{\clof_i, \clof_j}(\norm{\bx_j - \bx_i})(\bx_j - \bx_i), \quad i = 1, \cdots, N.
\end{equation}
These systems contain heterogeneous agents: the agents are partitioned into $\numcl$ different types, with $\cl_\idxcl$ containing the indices of the agents of type $\idxcl$, for $\idxcl=1, \dots, \numcl$,  forming a partition of $[N]:=\{1,\dots,N\}$:
\[
[N] = \{1, \cdots, N\} = \cl_1\cup\cdots\cup\cl_\numcl \quad \text{and} \quad \cl_{\idxcl_1} \cap \cl_{\idxcl_2} = \emptyset\quad \text{if $\idxcl_1 \neq \idxcl_2$}\,;
\]
$\clof_i\in\{1,\dots,K\}$ returns the type of the $i$-th agent (i.e., $i\in C_{\clof_i}$), and $N_\idxcl$ is the number of agents in $\cl_\idxcl$.  We assume throughout that the type $\clof_i$ is known for all $i$'s (for recent work where the type is unknown, see \cite{lang2026learningmultitypeheterogeneousinteracting}).  Without loss of generality, and only to ease some of the notation later on, we assume that the agents are arranged in increasing type order, i.e.
\[
\{1, \cdots, N_{1}\} = \cl_1, \quad \{N_{1} + 1, \cdots, N_1 + N_2\} = \cl_2, \quad \ldots \quad, \{\sum_{\idxcl = 1}^{\numcl - 1}N_{\idxcl}+1, \cdots, N\} = \cl_\numcl.
\]
The vector field $\f^E:\R^d \rightarrow \R^d$ provides a non-collective force allowing the agent to interact with its surrounding environment, and $\phi_{\idxcl_1, \idxcl_2}^E: \R^+ \rightarrow \R$ is an interaction kernel which defines how an agent of type $\idxcl_2$ influences an agent of type $\idxcl_1$. There are therefore a total of $K^2$ interaction kernels, one for each choice of a pair of types.

The second type of systems we consider are modeled by second-order dynamics, usually derived from  Newton's second law, in the form
\begin{equation}\label{eq:second_order}
\begin{cases}
    \dot\bx_i &= \bv_i, \\
    \dot\bv_i &= \f^{EA}(\bx_i, \bv_i) + \sum_{j = 1, j \neq i}^N\frac{1}{N_{\clof_j}}\big(\phi^E_{\clof_i, \clof_j}(\norm{\bx_j - \bx_i})(\bx_j - \bx_i) \\
    &\qquad\qquad\qquad\qquad\qquad\qquad + \phi^A_{\clof_i, \clof_j}(\norm{\bx_j - \bx_i})(\bv_j - \bv_i)\big), \\
\end{cases}, \quad i = 1, \cdots, N.
\end{equation}
Here, $\f^{EA}: \R^{2d} \rightarrow \R^d$ defines a non-collective and environment-dependent force, $\phi^E_{\idxcl_1, \idxcl_2}: \R^+ \rightarrow \R$ is a potential-energy-based interaction kernel that typically models long-range attraction and short-range repulsion, and $\phi^A_{\idxcl_1, \idxcl_2}: \R^+ \rightarrow \R$ is an interaction kernel modeling alignment. In both cases we assume that $\f^E$, $\f^{EA}$, $\phi^E_{\idxcl_1, \idxcl_2}$, and $\phi^A_{\idxcl_1, \idxcl_2}$ satisfy standard regularity conditions which guarantee the existence and uniqueness of solutions for all time.
\begin{remark}
When $\f^E = \zero$ in \eqref{eq:first_order}, the system, in the case of a single species (we will take $\phi^E = \phi^E_{1, 1}$ to simplify the notation), is a gradient flow system, i.e.
\[
 \dot\bx_i = -\nabla_{\bx_i} E(\bx_1, \bx_2, \cdots, \bx_N), \quad \text{for some system energy $E$}.
\]
In the case of collective dynamical systems, $E$ often has translation, rotation and permutation invariance, leading to models with only radial dependence, i.e.
\[
 E(\bx_1, \bx_2, \cdots, \bx_N) = \frac{1}{2N}\sum_{i \neq j}U(\norm{\bx_j - \bx_i}), \quad U: \R^+ \rightarrow \R^+, \quad \phi^E(r) = \frac{U'(r)}{r}.
\]
Hence, the dynamical system evolves towards a minimum energy state.  Similar remarks apply to multi-species systems.
\end{remark}
\begin{remark}
Note that we have considered interacting agent systems in which the interaction kernels depend on low-dimensional features of the states:
\[
\Phi(\bx_i, \bx_j) = \phi(\xi(\bx_i, \bx_j)), \quad \text{where $\xi:\R^{2d} \rightarrow \R^{d'}$ and $\phi: \R^{d'} \rightarrow \R$},
\]
where $\phi$ is called the reduced interaction kernel and $\xi$ is a known feature map~\cite{FMMZ2022}. Here we assume that $d' \ll 2d$, in fact, the most common assumption for $\xi$ is $\xi(\bx_i, \bx_j) = \norm{\bx_j - \bx_i}$.
While not a strict requirement for the applicability of our ideas, this is motivated both by invariances (e.g., to translations and rotations), and it also facilitates the estimation problem, in the sense that the number of parameters to estimate, which in general scales exponentially in the dimension of the domain of the interaction kernel, is reduced as the dimension drops from $2d$ to $d'$.
\end{remark}
The initial condition of the dynamics above is drawn from some probability distribution $\mu^E_0$ (or $\mu_0^{EA}$) on the state (or, respectively, phase) space, i.e. $\bx_i(0) \sim \mu_0^E$ for first-order and $(\bx_i(0), \bv_i(0)) \sim \mu_0^{EA}$ for second-order systems.  We assume that the dynamics of the systems above converges to a stationary distribution $\mu_\infty^E$ (resp., $\mu^{EA}_\infty$) in state (resp., phase) space as $t \rightarrow \infty$, and that at a sufficiently large time $T$, irrespective of their initial condition, the system will have reached a distribution $\mu_T^E$ (resp., $\mu^{EA}_T$) sufficiently close to $\mu_\infty^E$ (resp., $\mu^{EA}_\infty$). We also consider more general cases where the dynamics converges to an attractive collective steady state, for example a flocking or milling dynamics.
\section{Learning Framework}\label{sec:learn}
The input data consist of a set of single-time snapshots of the dynamical system at some unknown large time $T$, i.e. from the stationary or collective steady state distribution $\mu_T$, in the form
\begin{equation*}
\begin{aligned}
&\{(\bx_{i, T}^m, \dot\bx_{i, T}^m)\}_{i, m = 1}^{N, M} & \text{ with } & \{\bx_{i, T}^m\}_{i = 1}^N \sim \mu_T^E &\text{for first-order }, \\
&\{(\bx_{i, T}^m, \bv_{i, T}^m, \dot\bv_{i, T}^m)\}_{i, m = 1}^{N, M} & \text{ with } &\{(\bx^m_{i, T}, \bv^m_{i, T})\}_{i = 1}^N \sim \mu_T^{EA} &\text{for second-order.}
\end{aligned}
\end{equation*}
These snapshots are assumed to be independent, identically distributed samples from the stationary distribution $\mu_T$; for example, they may have been obtained from trajectories started at independent, identically distributed initial conditions. Here $\bx_{i, T}^m = \bx_i(T)$ and $\bv_{i, T}^m = \bv_i(T)$.  In order to simplify the notation, from here on we suppress the dependence on $T$ from the notation  $\bx_{i, T}^m, \bv_{i, T}^m$, since $T$ is not needed in obtaining the estimators and it is assumed to be unknown unless otherwise specified.  Our goal is to estimate the interaction kernel(s) $\{\hat\phi^E_{\idxcl_1, \idxcl_2}\}_{\idxcl_1, \idxcl_2 = 1}^{\numcl}$ or $\{\hat\phi^E_{\idxcl_1, \idxcl_2}, \hat\phi^A_{\idxcl_1, \idxcl_2}\}_{\idxcl_1, \idxcl_2 = 1}^{\numcl}$ that determine the dynamics.

For first-order systems, we follow \cite{LZTM2019, ZMM2020, MMQZ2021, FMMZ2022, MTZM2022} and use the empirical loss functional
\begin{equation}\label{eq:loss_1st}
 \mE^E(\varphivec^E) = \frac1M\sum_{m, i = 1}^{M, N}\frac{1}{N_{\clof_i}}\norm{\dot\bx_{i}^m - \f_i^{E, m} - \sum_{j = 1, j \neq i}^N\frac{1}{N_{\clof_j}}\varphi^E_{\clof_i, \clof_j}(r_{i, j}^m)\br_{i, j}^m}^2,
\end{equation}
where $\f_{i}^{E, m} = \f^E(\bx_{i}^m)$, $\br_{i, j}^m = \bx_{j}^m - \bx_{i}^m$, $r_{i, j}^m = \norm{\br_{i, j}^m}$ ($\norm{\cdot}$ usual Euclidean norm), $\varphivec^E = (\varphi^E_{\idxcl_1, \idxcl_2})_{\idxcl_1, \idxcl_2 = 1}^{\numcl} \in \Hspace^E = \oplus_{\idxcl_1, \idxcl_2 = 1}^{\numcl}\Hspace^E_{\idxcl_1, \idxcl_2}$ where each $\Hspace^E_{\idxcl_1, \idxcl_2}$ is a finite-dimensional
linear hypothesis space of functions.  For second-order data, we use the following empirical loss functional
\begin{equation}\label{eq:loss_2nd}
\begin{aligned}
\mE^{EA}(\varphivec^{EA}) &= \sum_{m, i = 1}^{M, N}\frac{1}{MN_{\clof_i}}\norm{\dot\bv_i^m - \f_i^{EA, m} - \sum_{j = 1, j \neq i}^N\frac{1}{N_{\clof_j}}\big(\varphi^E_{\clof_i, \clof_j}(r_{i, j}^m)\br_{i, j}^m + \varphi^A_{\clof_i, \clof_j}(r_{i, j}^m)\bu_{i, j}^m\big)}^2\,,
 \end{aligned}
\end{equation}
where $\f_i^{EA, m} = \f^{EA}(\bx_i^m, \bv_i^m)$, $\bu_{i, j}^m = \bv_j^m - \bv_i^m$, $\varphivec^{EA} = (\varphivec^E, \varphivec^A)$ with $\varphivec^E$ defined as before and $\varphivec^A = (\varphi_{\idxcl_1, \idxcl_2}^A)_{\idxcl_1, \idxcl_2 = 1}^{\numcl} \in \Hspace^A = \oplus_{\idxcl_1, \idxcl_2 = 1}^{\numcl}\Hspace_{\idxcl_1, \idxcl_2}^A$ (each $\Hspace_{\idxcl_1, \idxcl_2}^A$ is still a finite-dimensional
linear hypothesis space) and $\Hspace^{EA} = \Hspace^E \oplus \Hspace^A$.  

The estimators for the interaction kernels 
\[
\phivec^E = \{\phi^E_{\idxcl_1, \idxcl_2}\}_{\idxcl_1, \idxcl_2 = 1}^{\numcl}\,, \quad \text{or} \quad \phivec^{EA} = (\phivec^E, \phivec^A) \quad \text{with $\phivec^A = \{\phi_{\idxcl_1, \idxcl_2}^A\}_{\idxcl_1, \idxcl_2 = 1}^{\numcl}$}\,,
\]
are constructed as minimizers of the empirical loss functionals above, over the hypothesis spaces $\Hspace^E$ and $\Hspace^A$:
\[
\estphivec^E = \argmin_{\varphivec^E \in \Hspace^E}\mE^E(\varphivec^E)  \quad \text{or} \quad \estphivec^{EA} = \argmin_{\varphivec^{EA} \in \Hspace^{EA}}\mE^{EA}(\varphivec^{EA})\,.
\]
\subsection{Performance Measures}
We measure the performance in suitably weighted $L_2$ spaces. The weights are given by the probability distribution of pairwise distances, for each type of interaction:
\begin{equation}\label{eq:rhoT}
\rho_{\idxcl_1, \idxcl_2}^T(r) = \E\Big[\frac{1}{N_{\idxcl_1,\idxcl_2}}\sum_{i\in C_{\idxcl_1}}\sum_{j\in C_{\idxcl_2},\,j\neq i}\delta_{r_{i, j}}(r)\Big], \quad \idxcl_1, \idxcl_2 = 1, \cdots, \numcl,
\end{equation}
where  each $N_{\idxcl_1, \idxcl_2}$ is the number of ordered interacting pairs,
\[
N_{\idxcl_1, \idxcl_2} \coloneqq
\begin{cases}
N_{\idxcl_1}N_{\idxcl_2}, & \idxcl_1 \neq \idxcl_2,\\[2pt]
{N_{\idxcl_1}(N_{\idxcl_1} - 1)}, & \idxcl_1 = \idxcl_2,
\end{cases}
\]
and the expectations are over initial conditions distributed according to $\mu_T^{E}$.  We do not have direct access to these distributions, but we can estimate them from training data by constructing the empirical approximations:
\begin{equation}\label{eq:rhoMT}
\rho_{\idxcl_1, \idxcl_2}^{M, T}(r) = \frac{1}{M}\sum_{m = 1}^M \frac{1}{N_{\idxcl_1,\idxcl_2}}\sum_{i\in C_{\idxcl_1}}\sum_{j\in C_{\idxcl_2},\,j\neq i}\delta_{r_{i, j}^m}(r), \quad \idxcl_1, \idxcl_2 = 1, \cdots, \numcl.
\end{equation}
By the law of large numbers, $\rho_{\idxcl_1, \idxcl_2}^{M, T} \rightarrow \rho_{\idxcl_1, \idxcl_2}^T$ as $M \rightarrow \infty$.  The space $L_2(\rho_{\idxcl_1, \idxcl_2}^T)$ is the natural space where the accuracy of the estimators of $\phi^E_{\idxcl_1, \idxcl_2}$ can be measured. The accuracy of the estimators is computed with respect to the weighted inner product and norm defined as
\begin{equation}\label{eq:weighted_norm}
\inpro{\varphi, \psi}_{\rho_{T,\idxcl_1, \idxcl_2}} \coloneqq \int_{r \in \text{supp}(\rho_{T,\idxcl_1, \idxcl_2})} \varphi(r)\,\psi(r)\, r^2 \, d\rho_{T,\idxcl_1, \idxcl_2}(r), \qquad \norm{\varphi}_{\rho_{T,\idxcl_1, \idxcl_2}}^2 \coloneqq \inpro{\varphi, \varphi}_{\rho_{T,\idxcl_1, \idxcl_2}},
\end{equation}
for each $\varphi, \psi \in \Hspace_{\idxcl_1, \idxcl_2}^E$.  The weight $r^2$ is intrinsic to the problem, since each kernel enters the dynamics through the product $\varphi(r_{i,j})\,\br_{i,j}$ with $\norm{\br_{i,j}} = r_{i,j}$.  Here we abuse the notation by letting $\norm{\varphi}_{\rho_{T,\idxcl_1, \idxcl_2}} = \norm{\varphi(\cdot)\,\cdot}_{L^2(\rho_{T,\idxcl_1, \idxcl_2})}$.  Then for $\varphivec^E \in \Hspace^E$, we define the following norm
\[
\norm{\varphivec^E}_{\rho_T}^2 = \sum_{\idxcl_1, \idxcl_2 = 1}^{\numcl}\norm{\varphi^E_{\idxcl_1, \idxcl_2}}_{\rho_{T,\idxcl_1, \idxcl_2}}^2, \quad \rho_T = \oplus_{\idxcl_1, \idxcl_2 = 1}^{\numcl}\rho_{T,\idxcl_1, \idxcl_2}.
\]
As for $\phi^A_{\clof_i, \clof_j}$, we let
\begin{equation}\label{eq:rhoTEA}
\rho_{\idxcl_1, \idxcl_2}^{EA, T}(r, s) = \E\Big[\frac{1}{N_{\idxcl_1,\idxcl_2}}\sum_{i\in C_{\idxcl_1}}\sum_{j\in C_{\idxcl_2},\,j\neq i}\delta_{r_{i, j}, s_{i, j}}(r, s)\Big], \quad s_{i, j} = \norm{\bv_j - \bv_i}\,,
\end{equation}
where the expectation is taken with respect to $\mu_T^{EA}$; a similar definition applies to the empirical version of $\rho_{\idxcl_1, \idxcl_2}^{EA, T}$.  The corresponding inner product and norm are
\[
\inpro{\varphi, \psi}_{\rho^{EA}_{T,k_1,k_2}} \coloneqq \int_{(r, s) \in \text{supp}(\rho^{EA}_{T,k_1,k_2})} \varphi(r)\,\psi(r)\, s^2 \, d\rho^{EA}_{T,k_1,k_2}(r, s), \qquad \norm{\varphi}_{\rho^{EA}_{T,k_1,k_2}}^2 \coloneqq \inpro{\varphi, \varphi}_{\rho^{EA}_{T,k_1,k_2}},
\]
for each $\varphi, \psi \in \Hspace_{\idxcl_1, \idxcl_2}^A$. A similar definition of norm can be used for $\varphivec^A \in \Hspace^A$.

We assess the performance of the estimated kernels from two different points of view: how well $\estphivec$ matches the true kernel $\phivec$, and how well the dynamics driven by $\estphivec$ reproduce the emergent pattern.  Using the weighted inner product $\inpro{\cdot,\cdot}_{\rho_{T,\idxcl_1, \idxcl_2}}$ and the corresponding norm $\norm{\cdot}_{\rho_{T,\idxcl_1, \idxcl_2}}$ in~\eqref{eq:weighted_norm}, we report either the relative $\norm{\cdot}_{\rho_{T,\idxcl_1, \idxcl_2}}$ error of $\estphi^E_{\idxcl_1, \idxcl_2}$ (in well-conditioned regimes) or, when $\estphi^E_{\idxcl_1, \idxcl_2}$ is identified only up to scaling, the angle
\[
\theta_{\idxcl_1, \idxcl_2} \coloneqq \arccos\!\left(\frac{\inpro{\phi_{\idxcl_1, \idxcl_2}, \estphi_{\idxcl_1, \idxcl_2}}_{\rho_{T,\idxcl_1, \idxcl_2}}}{\norm{\phi_{\idxcl_1, \idxcl_2}}_{\rho_{T,\idxcl_1, \idxcl_2}}\,\norm{\estphi_{\idxcl_1, \idxcl_2}}_{\rho_{T,\idxcl_1, \idxcl_2}}}\right), \quad \idxcl_1, \idxcl_2 = 1, \cdots, \numcl\,,
\]
which is invariant to the residual scaling constant.  Since we observe only the final steady-state pattern, we compare patterns through the following trajectory error
\[
\text{Err}_{\text{Traj}} := \frac{1}{M}\sum_{m = 1}^M
\norm{\bX_T^m - \hat\bX_{2T}^m}_{N, K}, \qquad \bX = \begin{bmatrix} \bx_1 \\ \vdots \\ \bx_N\end{bmatrix},
\]
where $\norm{\bX}_{N,\numcl}^2 \coloneqq \sum_{i=1}^N \frac{1}{N_{\clof_i}}\norm{\bx_i}^2$, and the estimated trajectory $\hat\bX_{2T}^m$ is evolved from the observed steady state via
\[
\hat\bX_{2T}^m = \bX_T^m + \int_T^{2T}\F_{\estphivec^E}^E(\hat \bX_t) \, \dt.
\]
For semi-static and quasi-static regimes, where the emergent behavior is characterized by collective velocity alignment or coherent rotation rather than a fixed configuration, $\text{Err}_{\text{Traj}}$ is replaced by regime-specific scores ($I_{\mathrm{flock}}$ and $I_{\mathrm{mill}}$), introduced in Section~\ref{sec:example} where the corresponding patterns first appear.
\subsection{Ill-conditioned estimation of the interaction kernels}\label{sec:ill_conditioned}
The construction of the estimators above can be ill-conditioned, or even ill-posed, both in general (see the discussion in \cite{LZTM2019} on the coercivity condition, and subsequent papers) and, even more so, in our setting.  We consider two regimes, depending on which time derivative vanishes on the observed data from a collective steady state:
\begin{description}
  \item[Static:] The steady state of any first-order system, with $\dot\bx_{i}^m=\zero$ for all $i,m$.
  \item[Semi-static:] A flocking state when all agents share a common constant velocity $\bv_i^m=\bv_*^m\neq\zero$.  Due to the flocking state, the pairwise relative positions are frozen leading to $\dot\bv_{i}^m=\zero$. This occurs not only in second-order systems, but also in multi-species first-order systems with $\dot\bx_{i}^m=\bv_*^m\neq\zero$. However, only the second-order flocking gives rise to ill-conditioned training data.
\end{description}
When the steady state data is given for first-order systems with no non-collective forcing, we have $\dot\bx_i = \zero$ and $\f^E(\bx_i) = \zero$ for $i = 1, \cdots, N$ which leads to
\begin{equation}\label{eq:ill_posed}
\sum_{j = 1, j \neq i}^N\frac{1}{N_{\clof_j}}\phi^E_{\clof_i, \clof_j}(r_{i, j})\br_{i, j} = \zero, \quad i = 1, \cdots, N.
\end{equation}
When flocking data is given for second-order systems with no non-collective forcing, i.e., $\dot\bv_i = \zero$, $\f^{EA}(\bx_i, \bv_i) = \zero$ and $\bv_i = \bv_*$ for $i = 1, \cdots, N$, it leads again to equation \eqref{eq:ill_posed}.  Note that for the flocking data given from second-order systems, the information about $\phi^A_{\idxcl_1, \idxcl_2}$ is lost due to the weighting by $\bv_j - \bv_i$ which is identically $\zero$.  This renders $\phi^A_{\idxcl_1, \idxcl_2}$ unlearnable from the observed data, unless $\phi^A_{\idxcl_1, \idxcl_2}$ is related to $\phi^E_{\idxcl_1, \idxcl_2}$ in some way; we will discuss an example of such a special case in the numerical section.  In order to understand the ill-conditioning better, we introduce the following vectorized notation.  The right-hand side of \eqref{eq:first_order} can be written as
\[
\F_{\varphivec^E}^E(\bX) := \begin{bmatrix}\sum_{j = 2}^N\frac{\varphi^E_{\clof_1, \clof_j}(r_{1, j})}{N_{\clof_j}}\br_{1, j} \\ \vdots \\ \sum_{j = 1}^{N - 1}\frac{\varphi^E_{\clof_N, \clof_j}(r_{N, j})}{N_{\clof_j}}\br_{N, j}\end{bmatrix}\,.
\]
Note that for a fixed and arbitrary $\bX \sim \mu_T^E$, the operator $\F^E_{\bm\cdot}(\bX):\Hspace^E \rightarrow \R^{Nd}$ is linear in the argument in its subscript, i.e. $\F_{c_1\varphivec^E_1+c_2\varphivec^E_2}^E(\bX) = c_1\F_{\varphivec^E_1}^E(\bX)+c_2\F_{\varphivec^E_2}^E(\bX)$ for any $c_1,c_2 \in \R$ and $\varphivec^E_1,\varphivec^E_2\in\Hspace^E$.  Moreover letting
\[
\Cmat := \begin{bmatrix}c_1\bm{I}_{N_1 \times N_1} & \zero_{N_1 \times N_2} & \cdots  & \zero_{N_1\times N_\numcl} \\ \vdots & \vdots & \ddots & \vdots \\ \zero_{N_\numcl\times N_1} & \zero_{N_\numcl \times N_2} & \cdots & c_\numcl\bm{I}_{N_\numcl \times N_\numcl} \end{bmatrix}, \quad \text{for $c_1,\dots,c_\numcl \in \R$},
\]
we still have $\F_{\Cmat\varphivec^E}^E(\bX) = \tilde\Cmat\F_{\varphivec^E}^E(\bX) = \tilde\Cmat\zero = \zero$, where $\tilde\Cmat = \Cmat\otimes\bm{I}_{d\times d}$ ($\otimes$: the Kronecker product).  Furthermore, given the collective steady state training data, both losses defined by \eqref{eq:loss_1st} and \eqref{eq:loss_2nd} simplify to the collective steady state loss
\[
\mE^{\text{CS}}(\varphivec^E) := \sum_{m, i = 1}^{M, N}\frac{1}{MN_{\clof_i}}\norm{\sum_{j = 1, j \neq i}^N\frac{1}{N_{\clof_j}}\varphi^E_{\clof_i, \clof_j}(r^m_{i, j})\br^m_{i, j}}^2.
\]
With the following notation
\[
\bX^{[1:M]} := \begin{bmatrix} \bX^1 \\ \vdots \\ \bX^M\end{bmatrix}, \quad \text{for $\bX^m \in \R^{Nd}$}, \quad \F_{\varphivec^E}(\bX^{[1:M]}) := \begin{bmatrix} \F^E_{\varphivec^E}(\bX^1) \\ \vdots \\ \F^E_{\varphivec^E}(\bX^M)\end{bmatrix},
\]
the loss $\mE^{\text{CS}}$ can be re-written as
\[
\mE^{\text{CS}}(\varphivec^E) = \norm{\F_{\varphivec^E}(\bX^{[1:M]})}_M^2, \quad \text{where $\norm{\bX^{[1:M]}}^2_M := \inpro{\bX^{[1:M]}, \bX^{[1:M]}}_M$},
\]
with the weighted inner product given by
\[
\inpro{\bX^{[1:M]}, \bY^{[1:M]}}_M := \frac{1}{M}\sum_{m = 1}^M\inpro{\bX^m, \bY^m}_{N, \numcl}, \quad \bY^{[1:M]} := \begin{bmatrix} \bY^1 \\ \vdots \\ \bY^M\end{bmatrix}, \quad \text{for $\bY^m \in \R^{Nd}$}.
\]
Here the $\inpro{\cdot, \cdot}_{N, \numcl}$ inner product is defined in Section~\ref{sec:learn}.  Clearly, it has a minimizer on $\Hspace^E$ with $\{\hat\phi^E_{\idxcl, \idxcl'} \equiv \zero\}_{\idxcl, \idxcl' = 1}^{\numcl}$.  The uniqueness of such a minimizer depends on whether the linear operator $\F_{\bm\cdot}(\bX^{[1:M]}):\Hspace^E \rightarrow \R^{MNd}$ has a non-trivial null space.  In order to avoid trivial solutions, we augment the simplified loss $\mE^{\text{CS}}$ with a regularity constraint as follows
\begin{equation}\label{eq:reg_CS_loss}
\mE^{\text{CS}}(\varphivec^E)
=
\norm{\F_{\varphivec^E}(\bX^{[1:M]})}_M^2
=
\sum_{\idxcl=1}^{\numcl}
\mE_{\idxcl}^{\text{CS}}(\varphivec^E_{\idxcl}),
\end{equation}
where
\(
\mE_{\idxcl}^{\text{CS}}(\varphivec^E_{\idxcl})
:=\frac{1}{MN_{\idxcl}}
\norm{
\mathcal A_{\idxcl}^{[1:M]}
(\varphivec^E_{\idxcl})
}^2.
\)
For each receiving type $\idxcl$, we solve
\[
\estphivec^E_{\idxcl}
\in
\argmin_{\varphivec^E_{\idxcl}\in\Hspace^E_{\idxcl}}
\mE_{\idxcl}^{\text{CS}}(\varphivec^E_{\idxcl}),
\qquad
\text{subject to }
\norm{\varphivec^E_{\idxcl}}_{\rho_{T,\idxcl}}=1.
\]
After restriction to a finite-dimensional hypothesis space, \eqref{eq:reg_CS_loss} becomes a generalized Rayleigh-quotient problem, with the equilibrium residual defining the numerator and the $\rho_T$-weighted kernel norm defining the normalization. The corresponding generalized eigenvalue formulation is given in Section~\ref{subsec:spectral-characterization}.
\subsection{Identifiability}\label{sec:identify}
We present the following theorem characterizing when the interaction kernels can be identified from the data.
\begin{theorem}[Identifiability for collective steady state training data]\label{thm:identifiability}
Assume that the data are generated from steady states of first-order systems or from flocking states of second-order systems, so that only \(\{\bX^m\}_{m=1}^M\) is observed.  For each receiving type \(\idxcl\), write \(\varphivec^E_{\idxcl}=\{\varphi^E_{\idxcl,\idxcl'}\}_{\idxcl'=1}^{\numcl}\in\Hspace^E_{\idxcl}\coloneqq\oplus_{\idxcl'=1}^{\numcl}\Hspace^E_{\idxcl,\idxcl'}\) and \(\rho_{T,\idxcl}=\oplus_{\idxcl'=1}^{\numcl}\rho_{T,\idxcl,\idxcl'}\), and define the linear operator \(\mathcal{A}_{\idxcl}^{[1:M]}:\Hspace^E_{\idxcl}\rightarrow\R^{MN_{\idxcl}d}\) by
\[
\mathcal{A}_{\idxcl}^{[1:M]}(\varphivec^E_{\idxcl}) = \begin{bmatrix} \F_{\varphivec^E_{\idxcl}}^{\idxcl, E}(\bX^1) \\ \vdots \\ \F_{\varphivec^E_{\idxcl}}^{\idxcl, E}(\bX^M)\end{bmatrix}, \quad \F^{\idxcl, E}_{\varphivec^E_{\idxcl}}(\bX) = \begin{bmatrix} \vdots \\ \sum_{j = 1, j \neq i}^N\frac{\varphi^E_{\idxcl, \clof_j}(r_{i, j})}{N_{\clof_j}}\br_{i, j} \\ \vdots\end{bmatrix} \in \R^{N_\idxcl d}, \quad i \in \cl_\idxcl.
\]
Let \(\mathcal{N}_{\idxcl}^{[1:M]}=\{\varphivec^E_{\idxcl}\in\Hspace^E_{\idxcl}:\mathcal{A}_{\idxcl}^{[1:M]}(\varphivec^E_{\idxcl})=\zero\}\) be its null space, and let \(\estphivec^E_{\idxcl}\) be a minimizer of the constrained collective-state loss~\eqref{eq:reg_CS_loss} on \(\Hspace^E_{\idxcl}\).  The same formula for \(\mathcal{A}_{\idxcl}^{[1:M]}\) extends it off \(\Hspace^E_{\idxcl}\).

If \(\dim\mathcal N_{\idxcl}^{[1:M]}=1\) and the true block \(\phivec^E_{\idxcl}\) belongs to \(\Hspace^E_{\idxcl}\), then \(\mathcal N_{\idxcl}^{[1:M]}=\operatorname{span}\{\phivec^E_{\idxcl}\}\) and the constrained minimizers are
\[
\estphivec^E_{\idxcl}
=\pm\frac{\phivec^E_{\idxcl}}{\norm{\phivec^E_{\idxcl}}_{\rho_{T,\idxcl}}}\,.
\]
In particular, the direction of \(\phivec^E_{\idxcl}\) is identifiable,
while the scale and the sign cannot be determined from the collective-state
snapshots alone. Equivalently,
\[
\phivec^E_{\idxcl}
=
c_{\idxcl,*}\estphivec^E_{\idxcl},
\qquad
c_{\idxcl,*}
=
\pm\norm{\phivec^E_{\idxcl}}_{\rho_{T,\idxcl}}.
\]
Assembling over types,
\[
\phivec^E=\Cmat_*\estphivec^E,
\qquad 
\Cmat_* =
\begin{bmatrix}
c_{1,*}\bm{I}_{N_1\times N_1}
& \cdots
& \zero_{N_1\times N_\numcl}
\\
\vdots
& \ddots
& \vdots
\\
\zero_{N_\numcl\times N_1}
& \cdots
& c_{\numcl,*}\bm{I}_{N_\numcl\times N_\numcl}
\end{bmatrix}.
\]
\end{theorem}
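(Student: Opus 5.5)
We first show that the true block lies in the null space. Each snapshot \(\bX^m\) is a steady state of \eqref{eq:first_order} with \(\f^E=\zero\), or a flocking state of \eqref{eq:second_order} with \(\f^{EA}=\zero\) and \(\bv_i^m=\bv_*^m\). In either case \eqref{eq:ill_posed} holds for every agent \(i\) and every \(m\). Restricting \eqref{eq:ill_posed} to the rows \(i\in\cl_\idxcl\) gives exactly \(\F^{\idxcl,E}_{\phivec^E_\idxcl}(\bX^m)=\zero\) for all \(m\), so \(\mathcal A_\idxcl^{[1:M]}(\phivec^E_\idxcl)=\zero\). We may assume \(\phivec^E_\idxcl\neq\zero\); otherwise the normalization in the statement is meaningless, so this is implicitly required. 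Since \(\phivec^E_\idxcl\in\Hspace^E_\idxcl\), it is then a nonzero element of \(\mathcal N_\idxcl^{[1:M]}\). Because \(\dim\mathcal N_\idxcl^{[1:M]}=1\), we conclude \(\mathcal N_\idxcl^{[1:M]}=\operatorname{span}\{\phivec^E_\idxcl\}\).

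Next we characterize the constrained minimizers. The loss satisfies \(\mE^{\text{CS}}_\idxcl\ge 0\), with equality exactly on \(\mathcal N_\idxcl^{[1:M]}\). The constraint set \(\{\norm{\varphivec}_{\rho_{T,\idxcl}}=1\}\) meets the null space, since \(\phivec^E_\idxcl/\norm{\phivec^E_\idxcl}_{\rho_{T,\idxcl}}\) lies in both. For this to make sense we need \(\norm{\phivec^E_\idxcl}_{\rho_{T,\idxcl}}>0\), i.e.\ \(\phivec^E_\idxcl\) does not vanish \(\rho_{T,\idxcl}\)-a.e. We note this as an implicit nondegeneracy assumption, the same one that makes \(\norm{\cdot}_{\rho_{T,\idxcl}}\) a genuine norm on \(\Hspace^E_\idxcl\). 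So the minimum value is \(0\), and the minimizers are exactly the unit-norm elements of \(\mathcal N_\idxcl^{[1:M]}\). Writing such an element as \(\varphivec=c\,\phivec^E_\idxcl\), the constraint gives \(|c|\,\norm{\phivec^E_\idxcl}_{\rho_{T,\idxcl}}=1\). Hence \(\estphivec^E_\idxcl=\pm\phivec^E_\idxcl/\norm{\phivec^E_\idxcl}_{\rho_{T,\idxcl}}\), and inverting, \(\phivec^E_\idxcl=c_{\idxcl,*}\estphivec^E_\idxcl\) with \(c_{\idxcl,*}=\pm\norm{\phivec^E_\idxcl}_{\rho_{T,\idxcl}}\).

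Then we show the scale and sign are not determined by the data. Every \(c\,\phivec^E_\idxcl\) produces identical data, since it annihilates all snapshots. More globally, by the computation preceding \eqref{eq:reg_CS_loss}, \(\F_{\Cmat\varphivec^E}=\tilde\Cmat\F_{\varphivec^E}\), so any per-type rescaling \(\Cmat\phivec^E\) leaves every observed steady state a steady state. No functional of \(\{\bX^m\}\) alone can therefore distinguish \(c_{\idxcl,*}\). This step is informal; it is really the observation that the observation map is invariant under \(\Cmat\).

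Finally we assemble over types. The loss \eqref{eq:reg_CS_loss} decouples as \(\sum_\idxcl\mE^{\text{CS}}_\idxcl(\varphivec^E_\idxcl)\), and the constraints are imposed separately for each \(\idxcl\). The blocks can therefore be treated independently, and stacking the identities \(\phivec^E_\idxcl=c_{\idxcl,*}\estphivec^E_\idxcl\) gives \(\phivec^E=\Cmat_*\estphivec^E\). Here \(\Cmat_*\) acts on the receiving index, matching the definition of \(\Cmat\) earlier in the section.

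The only delicate point is the nondegeneracy \(\norm{\phivec^E_\idxcl}_{\rho_{T,\idxcl}}\neq 0\), together with the requirement that \(\norm{\cdot}_{\rho_{T,\idxcl}}\) be a norm rather than a seminorm on \(\Hspace^E_\idxcl\). If it were only a seminorm, some null element could have zero norm and the normalization would fail. We state both as standing assumptions; the rest is linear algebra.
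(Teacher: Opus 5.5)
Your proposal is correct and follows essentially the same argument as the paper: the equilibrium identities \eqref{eq:ill_posed} place $\phivec^E_\idxcl$ in $\mathcal N_\idxcl^{[1:M]}$, the dimension hypothesis forces equality, and intersecting the zero set of the decoupled loss with the unit $\rho_{T,\idxcl}$-sphere gives the two minimizers, which are then assembled blockwise into $\Cmat_*$. The nondegeneracy condition $\norm{\phivec^E_\idxcl}_{\rho_{T,\idxcl}}>0$ that you state explicitly is only implicit in the paper. Since the null space is one-dimensional, it is also the only such condition the argument actually needs, so your separate norm-versus-seminorm concern reduces to it.
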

\begin{proof}
The observed snapshots satisfy the equilibrium identities~\eqref{eq:ill_posed}: $\F_{\phivec^E}^E(\bX^m)=\zero$ for each $m=1,\dots,M$.  The same identities hold for second-order flocking, since $\bv_i^m\equiv\bv_*^m$ forces $\bu_{i,j}^m=\zero$ and the alignment kernels drop out of~\eqref{eq:loss_2nd}.  By the linearity of $\F^E_{\bm\cdot}(\bX^m)$ recorded in Section~\ref{sec:ill_conditioned}, the identities persist along rays, $\F_{c\,\phivec^E}^E(\bX^m)=\zero$ for every $c\in\R$.  Moreover the residual of an agent of type $\idxcl$ depends only on the receiving-type block $\varphivec^E_{\idxcl}=\{\varphi^E_{\idxcl,\idxcl'}\}_{\idxcl'=1}^{\numcl}$.  Consequently the collective-state loss decouples over receiving types,
\begin{equation}\label{eq:CS-decouple}
\mE^{\text{CS}}(\varphivec^E)
=\sum_{\idxcl=1}^{\numcl}
\frac1M\sum_{m=1}^M
\frac1{N_{\idxcl}}
\Bigl\|\F^{\idxcl,E}_{\varphivec^E_{\idxcl}}(\bX^m)\Bigr\|^2
=\sum_{\idxcl=1}^{\numcl}\frac{1}{MN_{\idxcl}}
\bigl\|\mathcal A_{\idxcl}^{[1:M]}(\varphivec^E_{\idxcl})\bigr\|^2\,,
\end{equation}
and it is enough to identify each block separately.

Fix a receiving type $\idxcl$.  If the true block $\phivec^E_{\idxcl}$ lies in $\Hspace^E_{\idxcl}$, the equilibrium identities give $\mathcal A_{\idxcl}^{[1:M]}(\phivec^E_{\idxcl})=\zero$, hence
\[
\operatorname{span}\{\phivec^E_{\idxcl}\}\;\subset\;\mathcal N_{\idxcl}^{[1:M]}\,.
\]
In particular \(\dim\mathcal N_{\idxcl}^{[1:M]}\ge 1\).  The hypothesis \(\dim\mathcal N_{\idxcl}^{[1:M]}=1\) therefore forces \(\mathcal N_{\idxcl}^{[1:M]}=\operatorname{span}\{\phivec^E_{\idxcl}\}\).

The constrained estimator~\eqref{eq:reg_CS_loss} on type \(\idxcl\) is the unit-norm problem
\[
\min_{\varphivec^E_{\idxcl}\in\Hspace^E_{\idxcl}}
\bigl\|\mathcal A_{\idxcl}^{[1:M]}(\varphivec^E_{\idxcl})\bigr\|^2
\quad\text{subject to}\quad
\norm{\varphivec^E_{\idxcl}}_{\rho_{T,\idxcl}}=1\,.
\]
By~\eqref{eq:CS-decouple} the unconstrained minimum is zero and is attained precisely on \(\mathcal N_{\idxcl}^{[1:M]}\).  Intersecting with the unit sphere in \(L^2(\rho_{T,\idxcl})\) yields the two minimizers asserted in the statement.  The scale and sign of \(\phivec^E_{\idxcl}\) are invisible to~\eqref{eq:reg_CS_loss}: both correspond to the same vanishing residual.  The sign is selected by the energy comparison of Section~\ref{sec:min_energy}; the scale is recovered from stopping times in Algorithm~\ref{alg:timescale}.  Assembling over types gives the block-diagonal \(\Cmat_*\) of the statement.
\end{proof}
Note that if \(\phivec^E_{\idxcl}\notin\Hspace^E_{\idxcl}\), the same argument identifies \(\mathcal N_{\idxcl}^{[1:M]}=\ker\bigl(\mathcal A_{\idxcl}^{[1:M]}\big|_{\Hspace^E_{\idxcl}}\bigr)\).  Linearity of the extended operator gives
\[
\mathcal A_{\idxcl}^{[1:M]}(P_{\Hspace^E_{\idxcl}}{\phivec^E_{\idxcl}})
=-\mathcal A_{\idxcl}^{[1:M]}(\phivec^E_{\idxcl}-P_{\Hspace^E_{\idxcl}}{\phivec^E_{\idxcl}})\,,
\]
so \(P_{\Hspace^E_{\idxcl}}{\phivec^E_{\idxcl}}\in\mathcal N_{\idxcl}^{[1:M]}\) if and only if the approximation error produces no net force on the observed configurations.  When that identity holds and \(\dim\mathcal N_{\idxcl}^{[1:M]}=1\), the unit-norm minimizers are those of the statement.  If the residual is merely small, the same conclusion holds approximately, in the sense of the spectral gap of Proposition~\ref{prop:spectral-gap} which we now discuss.

\begin{remark}
Note that the theorem provides guarantees in the weighted $L^2(\rho_T)$ norm. This is meaningful if $\rho_T$ is spread out on the support of the interaction kernel, but it becomes less useful if $\rho_T$ concentrates at one or a few points, or has significant mass outside of the support of the interaction kernel. Both these situations can occur, for example for a locally, or globally, attractive interaction kernel particles may collapse at one or a few points when they reach a collective state (e.g., consensus in opinion dynamics). Clearly, in these situations the problem is fully ill-posed, and the conclusion of the Theorem, even if its hypotheses were satisfied, is not useful, and the interaction kernel is not learnable in any stronger, useful norm.
\end{remark}

\subsection{Finite-dimensional spectral characterization}\label{subsec:spectral-characterization}
Fix a receiving type \(k\). For
\(
    \balpha=(\alpha_1,\ldots,\alpha_{n_k})^\top
\),
write
\(
    \varphivec^E_{k,\balpha}
    =
    \sum_{\ell=1}^{n_k}\alpha_\ell\psi_{k,\ell},
\)
where the basis functions are understood componentwise over
\(
    \Hspace_k^E
    =
    \bigoplus_{k'=1}^{\numcl}\Hspace^E_{k,k'}.
\)
Define the matrices
\[
    (H_k)_{\ell q}
    :=
\frac{1}{MN_k}
\left\langle
\mathcal A_k^{[1:M]}\psi_{k,\ell},
\mathcal A_k^{[1:M]}\psi_{k,q}
\right\rangle,
    \qquad
    (G_k)_{\ell q}
    :=
    \left\langle
        \psi_{k,\ell},
        \psi_{k,q}
    \right\rangle_{\rho_k^T},
\]
where
\(
    \rho_k^T=\bigoplus_{k'=1}^{\numcl}\rho_{k,k'}^T
\).
Assuming that \(G_k\) is positive definite, the constrained problem
\eqref{eq:reg_CS_loss}, restricted to \(\Hspace_k^E\), is equivalent to
the generalized eigenvalue problem
\[
    H_k\balpha=\lambda G_k\balpha,
    \qquad
    \balpha^\top G_k\balpha=1.
\]
Let
\(
    0\leq\lambda_{k,1}
    \leq\lambda_{k,2}
    \leq\cdots\leq\lambda_{k,n_k}
\)
be its generalized eigenvalues, with \(G_k\)-orthonormal eigenvectors \(\{\balpha_{k,j}\}_{j=1}^{n_k}\).
\begin{proposition}[Spectral gap and directional stability]\label{prop:spectral-gap}
Assume that the smallest generalized eigenvalue is simple, namely
\(
    \gamma_k
    :=
    \lambda_{k,2}-\lambda_{k,1}
    >0.
\)
Then every \(\balpha\in\mathbb R^{n_k}\) satisfying
\(
    \balpha^\top G_k\balpha=1
\)
obeys
\[
    \sin^2
    \angle_{G_k}
    \bigl(
        \balpha,\balpha_{k,1}
    \bigr)
    \leq
    \frac{
        \balpha^\top H_k\balpha-\lambda_{k,1}
    }{
        \lambda_{k,2}-\lambda_{k,1}
    }.
\]
In particular, if the data are exact,
\(
    \dim(\mathcal N_k^{[1:M]})=1
\),
and the corresponding kernel belongs to \(\Hspace_k^E\), then
\(
    \lambda_{k,1}=0<\lambda_{k,2}
\)
and
\[
    \sin
    \angle_{G_k}
    \bigl(
        \balpha,\balpha_{k,1}
    \bigr)
    \leq
    \frac{
        \bigl(
            \balpha^\top H_k\balpha
        \bigr)^{1/2}
    }{
        \lambda_{k,2}^{1/2}
    }.
\]
\end{proposition}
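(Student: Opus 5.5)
The plan is to reduce the generalized eigenproblem to a standard symmetric one by working in the $G_k$-orthonormal eigenbasis. Since $G_k$ is positive definite and $H_k$ is symmetric positive semidefinite, the pencil $(H_k,G_k)$ admits a $G_k$-orthonormal basis of eigenvectors $\{\balpha_{k,j}\}_{j=1}^{n_k}$, with $\balpha_{k,i}^\top G_k\balpha_{k,j}=\delta_{ij}$ and $\balpha_{k,i}^\top H_k\balpha_{k,j}=\lambda_{k,j}\delta_{ij}$. Given $\balpha$ with $\balpha^\top G_k\balpha=1$, I would expand $\balpha=\sum_j\beta_j\balpha_{k,j}$ with $\beta_j=\balpha_{k,j}^\top G_k\balpha$. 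Then $\sum_j\beta_j^2=1$ and $\balpha^\top H_k\balpha=\sum_j\lambda_{k,j}\beta_j^2$.

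The $G_k$-angle is defined by $\cos\angle_{G_k}(\balpha,\balpha_{k,1})=|\balpha^\top G_k\balpha_{k,1}|$, since both vectors have unit $G_k$-norm; this is the sign-invariant angle, consistent with the $\pm$ ambiguity in Theorem~\ref{thm:identifiability}. Hence $\cos^2\angle_{G_k}(\balpha,\balpha_{k,1})=\beta_1^2$ and $\sin^2\angle_{G_k}(\balpha,\balpha_{k,1})=1-\beta_1^2=\sum_{j\ge2}\beta_j^2$. The key estimate is then the elementary Rayleigh-quotient bound
\[
\balpha^\top H_k\balpha-\lambda_{k,1}=\sum_{j\ge2}(\lambda_{k,j}-\lambda_{k,1})\beta_j^2\ \ge\ (\lambda_{k,2}-\lambda_{k,1})\sum_{j\ge2}\beta_j^2 .
\]
This uses $\sum_j\beta_j^2=1$ to subtract $\lambda_{k,1}$ inside the sum, together with the ordering of the eigenvalues. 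Dividing by $\gamma_k>0$ gives the first inequality.

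For the special case, I would invoke Theorem~\ref{thm:identifiability}. With exact data and the kernel in $\Hspace^E_k$, the null space $\mathcal N_k^{[1:M]}$ is one-dimensional and spanned by $\phivec^E_k$. Because $\balpha^\top H_k\balpha=\frac{1}{MN_k}\|\mathcal A_k^{[1:M]}\varphivec^E_{k,\balpha}\|^2$, we have $\balpha^\top H_k\balpha=0$ if and only if $\varphivec^E_{k,\balpha}\in\mathcal N_k^{[1:M]}$. Linear independence of the basis $\{\psi_{k,\ell}\}$, which follows from $G_k\succ0$, turns this into the statement that the kernel of $H_k$ is one-dimensional. The minimal eigenvalue is therefore $\lambda_{k,1}=0$, and it has multiplicity one, so $\lambda_{k,2}>0$. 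Substituting $\lambda_{k,1}=0$ and taking square roots yields the second bound.

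I expect the only delicate point to be the bookkeeping: the angle must be defined in the $G_k$ geometry, and the equivalence between a one-dimensional $\mathcal N_k^{[1:M]}$ and a one-dimensional $\ker H_k$ relies on $G_k\succ0$. Both are routine.
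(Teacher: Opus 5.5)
Your proof is correct and follows the paper's route exactly. You expand $\balpha$ in the $G_k$-orthonormal generalized eigenbasis, use $\sum_j\beta_j^2=1$ to get $\balpha^\top H_k\balpha-\lambda_{k,1}=\sum_{j\ge2}(\lambda_{k,j}-\lambda_{k,1})\beta_j^2\ge\gamma_k\sin^2\angle_{G_k}(\balpha,\balpha_{k,1})$, and divide by $\gamma_k$. You also observe that $\balpha^\top H_k\balpha=\frac{1}{MN_k}\|\mathcal A_k^{[1:M]}\varphivec^E_{k,\balpha}\|^2$ makes $\ker H_k$ isomorphic to $\mathcal N_k^{[1:M]}$, so $\dim\mathcal N_k^{[1:M]}=1$ forces $\lambda_{k,1}=0<\lambda_{k,2}$; this fills in a step the paper's proof leaves implicit, and it needs only the nullity hypothesis, not exactness or $\phivec^E_k\in\Hspace^E_k$.
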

\begin{proof}
Expand
\(
    \balpha=\sum_{j=1}^{n_k}c_j\balpha_{k,j}
\)
in the \(G_k\)-orthonormal generalized eigenbasis. Since
\(
    \balpha^\top G_k\balpha=1
\),
we have \(\sum_j c_j^2=1\), and therefore
\[
\begin{aligned}
    \balpha^\top H_k\balpha-\lambda_{k,1}
    =
    \sum_{j=2}^{n_k}
    (\lambda_{k,j}-\lambda_{k,1})c_j^2 \geq
    \gamma_k\sum_{j=2}^{n_k}c_j^2
    =
    \gamma_k
    \sin^2\angle_{G_k}
    (\balpha,\balpha_{k,1}).
\end{aligned}
\]
Rearranging proves the claim.
\end{proof}
The quantity \(\gamma_k\) measures the separation of the learned kernel direction from the remaining directions in the hypothesis space. Hence, one-dimensional nullity gives exact directional identifiability, whereas a positive and sufficiently large spectral gap gives quantitative stability. Small gaps indicate that additional kernel directions produce nearly the same equilibrium residual and are therefore difficult to distinguish under sampling error, observation noise, or model mismatch.
\begin{remark}[Connection with equilibrium stresses in rigidity theory]\label{rem:rigidity-self-stress}
Consider the homogeneous static setting, with \(\numcl=1\) and \(\f^E=\zero\). Let \(G=(V,E)\) be the interaction graph of an observed configuration \(\bX=(\bx_1,\ldots,\bx_N)\), and let \(R(\bX)\) denote its rigidity matrix. An edge-weight vector
\(
    \boldsymbol{\omega}
    =(\omega_{ij})_{\{i,j\}\in E}
\)
is a self-stress of the framework \((G,\bX)\) if $ R(\bX)^\top\boldsymbol{\omega}=\zero$, or, equivalently,
\[
    \sum_{j:\{i,j\}\in E}
    \omega_{ij}(\bx_j-\bx_i)=\zero,
    \qquad i=1,\ldots,N;
\]
see, e.g.,~\cite{connelly1982rigidity}.

In the present problem, the edge weights are constrained to be generated by a common radial kernel. Define the evaluation operator
\[
    \mathcal E_{\bX}:\Hspace^E\longrightarrow\R^{|E|},
    \qquad
    [\mathcal E_{\bX}\varphi^E]_{ij}
    =
    \varphi^E(r_{ij}).
\]
The static observation operator for one configuration then factors as $\mathcal A_{\bX} = R(\bX)^\top\mathcal E_{\bX}$ and the operator \(\mathcal A^{[1:M]}\) is obtained by stacking these factorizations over the \(M\) observed configurations. Consequently,
\[
    \mathcal N^{[1:M]}
    =
    \bigcap_{m=1}^M
    \mathcal E_{\bX^m}^{-1}
    \left(
        \ker R(\bX^m)^\top
    \right).
\]
Thus, learning a kernel from static configurations amounts to identifying radial self-stresses shared by the observed data. Directional identifiability is precisely the condition $\dim(\mathcal N^{[1:M]})=1$. When the stacked evaluation operator is injective on \(\Hspace^E\), this is equivalent to requiring a one-dimensional intersection between the admissible radial stress space and the self-stress constraints. Its dimension and conditioning depend on both the observed configurations and the chosen hypothesis space; in particular, geometric symmetries may reduce the number of independent equilibrium constraints.

For heterogeneous or nonreciprocal interactions, the same factorization has a directed block structure and should be viewed as an extension of the classical self-stress formulation.
\end{remark}
The condition, $\text{dim}(\mathcal{N}^{[1:M]}_{\idxcl}) \le 1$, can be relaxed; however in our experiments, $\text{dim}(\mathcal{N}^{[1:M]}_{\idxcl})$ is never bigger than $1$.  When $\text{dim}(\mathcal{N}^{[1:M]}_{\idxcl}) > 1$, (again in the homogeneous agents case), 
\[
\phi^E \approx c_{1, *}\hat\phi_1 + \cdots + c_{\beta, *}\hat\phi_{\beta}, \quad \{\hat\phi_\eta\}_{\eta = 1}^{\beta} \text{a basis}(\mathcal{N}^{[1:M]}_{\idxcl}).
\]
Via the minimal-energy direction, we can figure out the signs of $c_{\eta, *}$'s, however to pin down the exact values of each $c_{\eta, *}$, we will need at least $\{\bX^m(0), \{T_m^{(\idxcl)}\}_{\idxcl=1}^{\numcl}\}_{m = 1}^{\beta}$.  Note that data distributions $\mu^E_T$ and $\rho_T$ are two different indicators.  One might need a large number of samples to approximate $\mu^E_T$ well, but for $\rho_T$, sometimes $M = 1$ can have $\rho_T^M = \rho_T$, see the example presented in Section~\ref{sec:r-1}.
\subsection{Minimal Energy Direction}\label{sec:min_energy}
The estimator $\estphivec^E$ can only be learned up to a block-structure scaling matrix $\Cmat$ with each independent $c_\idxcl$ being either positive or negative, since we only use the information $\F^E_{\phivec}(\bX^m) = \zero$ for $m = 1, \cdots, M$.  However, for gradient-flow systems, we can use the additional information that when it reaches steady state, it also reaches its minimal energy state.  Hence, given $\{\bX^m\}_{m = 1}^M$, we obtain $\estphivec^{E, \text{test}}$ and the corresponding energy $\hat{E}_{\estphivec^{E, \text{test}}}$; then we can consider a small perturbation $\bX^m + \Delta\bX^m$, and find a sign matrix $\Cmat_{\text{sign}}^{j_*}$, constructed with $c_\idxcl$'s in $\{\pm 1\}$, such that $\hat{E}_{\Cmat_{\text{sign}}^{j_*}\estphivec^{E, \text{test}}}(\bX^m + \Delta\bX^m) > \hat{E}_{\Cmat_{\text{sign}}^{j_*}\estphivec^{E, \text{test}}}(\bX^m)$ for all (or most) $m$'s. We then take $\estphivec^E = \Cmat_{\text{sign}}^{j_*}\estphivec^{E, \text{test}}$.
\subsection{Time-Scale Recovery}\label{sec:timescale}
\begin{algorithm}[t]
\caption{Time-scale recovery of $\Cmat_*=\mathrm{diag}(c_{1, *}\bm{I}_{N_1\times N_1},\dots,c_{\numcl, *}\bm{I}_{N_\numcl\times N_\numcl})$}
\label{alg:timescale}
\begin{algorithmic}[1]
\Require $\estphivec^E$,
$\{\bX^m(0), \{T_m^{(\idxcl)}\}_{\idxcl=1}^{\numcl}\}_{m=1}^{M_{\mathrm{est}}}$,
tolerance $\mathrm{Tol}$; search domain $\mathcal C\subset\R_{+}^{\numcl}$.
\Statex
\If{$\numcl=1$} \State set $T_m := T_m^{(1)}$ for $m=1,\dots,M_{\mathrm{est}}$
  \For{$m=1,\dots,M_{\mathrm{est}}$}
    \State evolve $\dot\bX=\F_{\hat\phi^E}(\bX)$
    from $\bX^m(0)$ to its stopping time $\hat T_m$
  \EndFor
  \State \Return
  $\displaystyle
  \hat c_*=
  \frac{1}{M_{\mathrm{est}}}
  \sum_{m=1}^{M_{\mathrm{est}}}
  \frac{\hat T_m}{T_m}$
\ElsIf{$\numcl > 1$}
  \For{each candidate $\Cmat = \mathrm{diag}(c_1\bm{I}_{N_1\times N_1},\dots,c_{\numcl}\bm{I}_{N_\numcl\times N_\numcl}) \in \mathcal C$}
    \For{each initial condition $\bX^m(0)$}
      \State evolve $\dot\bX = \F_{\Cmat\hat\phivec}(\bX)$ from $\bX^m(0)$
      \For{each species $\idxcl = 1, \dots, \numcl$}
        \State $\hat T^{(\idxcl)}_m(\Cmat) \gets$ first time $t$ with $\big(\tfrac{1}{N_\idxcl}\sum_{i\in \cl_\idxcl}\norm{\dot\bx_i(t)}^2\big)^{1/2}\le\mathrm{Tol}$
      \EndFor
    \EndFor
    \State $R(\Cmat) \gets \displaystyle\sum_{m=1}^{M_{\mathrm{est}}}\sum_{k=1}^{\numcl} \big(\hat T^{(k)}_m(\Cmat) - T^{(k)}_m\big)^2$
  \EndFor
  \State \Return $\hat\Cmat_* \gets \arg\min_{\Cmat \in\mathcal C} R(\Cmat)$
\EndIf
\end{algorithmic}
\end{algorithm}
By Theorem~\ref{thm:identifiability}, when collective steady state training data is given, $\phivec^E$ is identifiable only up to the block-structure scaling matrix $\Cmat_*$, i.e. $\phivec^E \approxeq \Cmat_*\estphivec^E$, where
\[
\Cmat_* = \begin{bmatrix}c_{1, *}\bm{I}_{N_1 \times N_1} & \cdots & \zero_{N_1\times N_\numcl} \\
\vdots & \ddots & \vdots \\
\zero_{N_\numcl\times N_1} & \cdots & c_{\numcl, *}\bm{I}_{N_\numcl \times N_\numcl} \end{bmatrix}, \quad \text{for each $c_{\idxcl, *} > 0$ once the sign is fixed (Section~\ref{sec:min_energy})}.
\]
Without any additional information, $\Cmat_*$ remains unlearnable.  If, in addition to the final state patterns, we are given some $\{\bX^m(0), \{T_m^{(\idxcl)}\}_{\idxcl=1}^{\numcl}\}_{m=1}^{M_{\mathrm{est}}}$ for some $M_{\mathrm{est}} \ge 1$, $\Cmat_*$ can be recovered.  For single species systems, the trajectory driven by $c\hat\phi^E$ follows the same path as the one driven by $\hat\phi^E$ but reaches steady state faster as $c > 0$ increases, so stopping times determine $c_*$ in closed form. For multi-species systems, we recover the receiving-type scaling
parameters by matching the simulated and observed stopping times,
as described in Algorithm~\ref{alg:timescale}.
\subsection{Data-driven partitions}
It is of utmost importance that the spaces ($\Hspace^E$ and $\Hspace^A$) which we use to approximate the interaction kernels do not introduce additional ill-conditioning.  We require that for each $\varphi_{\idxcl, \idxcl'} \in \Hspace_{\idxcl, \idxcl'}^E$ (or $\Hspace_{\idxcl, \idxcl'}^A$), $\text{supp}(\varphi_{\idxcl, \idxcl'}) \cap \text{supp}(\rho^{E, T}_{\idxcl, \idxcl'}) \neq \emptyset$.  Hence we construct the basis functions for $\Hspace_{\idxcl, \idxcl'}^E$ (or $\Hspace_{\idxcl, \idxcl'}^A$) with splines or piecewise polynomials on a data-driven partition of $[r^{\min}_{\idxcl, \idxcl'}, r^{\max}_{\idxcl, \idxcl'}]$ for $\idxcl, \idxcl' = 1, \cdots, \numcl$ where
\[
r^{\min}_{\idxcl, \idxcl'} = \min_{i \in \cl_\idxcl, j \in \cl_{\idxcl'}} \norm{\bx_j^m - \bx_i^m},
\qquad
r^{\max}_{\idxcl, \idxcl'} = \max_{i \in \cl_\idxcl, j \in \cl_{\idxcl'}} \norm{\bx_j^m - \bx_i^m}.
\]
A uniform partition of $[r^{\min}_{\idxcl, \idxcl'}, r^{\max}_{\idxcl, \idxcl'}]$ may place basis functions where $\rho_{T,\idxcl, \idxcl'}$ carries little or no mass, wasting degrees of freedom and worsening the conditioning of the learning problem.  Concentrating the basis where $\rho_{T,\idxcl, \idxcl'}$ has mass keeps the function space $\Hspace^{E/A}_{\idxcl, \idxcl'}$ well-conditioned on the observable subspace.  We refine $[r^{\min}_{\idxcl, \idxcl'}, r^{\max}_{\idxcl, \idxcl'}]$ adaptively so that no sub-interval carries more empirical mass than a prescribed tolerance $P^{\mathrm{tol}}_{\idxcl, \idxcl'} \in (0, 1)$, i.e., for any sub-interval $I \subset [r^{\min}_{\idxcl, \idxcl'}, r^{\max}_{\idxcl, \idxcl'}]$, the empirical mass is $\mu_{\idxcl, \idxcl'}(I) = \int_I \rho_{T,\idxcl, \idxcl'}(r)\, dr$ evaluated from the histogram estimate of $\rho_{T,\idxcl, \idxcl'}$. The refinement procedure is described in Algorithm~\ref{alg:adaptive-partition}.
\begin{algorithm}[H]
\caption{Adaptive mass-balanced partition of $[r^{\min}_{\idxcl, \idxcl'}, r^{\max}_{\idxcl, \idxcl'}]$.}
\label{alg:adaptive-partition}
\begin{algorithmic}[1]
\Require $r^{\min}_{\idxcl, \idxcl'} < r^{\max}_{\idxcl, \idxcl'}$; $P^{\mathrm{tol}}_{\idxcl, \idxcl'}$; $\text{Iter}_{\max}$; $\rho^{M}_{T,\idxcl, \idxcl'}$
\State set $\{r^{\min}_{\idxcl, \idxcl'}, r^{\max}_{\idxcl, \idxcl'}\} \rightarrow \mathcal{P}_{\idxcl, \idxcl'}$ and $0 \rightarrow \text{Iter}$
\While{$\text{Iter} < \text{Iter}_{\max}$}
    \State flag every sub-interval $I$ of $\mathcal{P}$ with $\mu_{\idxcl, \idxcl'}(I) > P^{\mathrm{tol}}_{\idxcl, \idxcl'}$
    \If{no sub-interval is flagged}
        \State \textbf{break}
    \EndIf
    \State bisect each flagged sub-interval and insert the midpoint into $\mathcal{P}_{\idxcl, \idxcl'}$
    \State $\text{Iter} \leftarrow \text{Iter} + 1$
\EndWhile
\Return $\mathcal{P}_{\idxcl, \idxcl'}$
\end{algorithmic}
\end{algorithm}
The output partition is approximately $\rho_{T,\idxcl, \idxcl'}$-equidistributed: each cell carries at most $P^{\mathrm{tol}}_{\idxcl, \idxcl'}$ amount of the empirical mass, so basis functions concentrate where the  data does.  We perform this refinement  for $1 \le \idxcl \le \idxcl' \le \numcl$.

In the finite-dimensional formulation of Section~\ref{subsec:spectral-characterization}, this construction also helps keep the Gram matrix \(G_\idxcl\) well-conditioned by removing basis directions that are weakly observed under \(\rho_\idxcl^T\).
\section{Examples}\label{sec:example}
We validate the proposed learning framework in the three data regimes, static and semi-static identified in Section~\ref{sec:learn} and an additional regime called quasi-static where neither positions nor velocities reach a fixed value, yet an organized pattern of dynamics, i.e. a collective steady state, emerges. We use the examples below to illustrate how the data regimes and the observed pairwise-distance distribution $\rho_T$ affect kernel recovery and dynamical prediction. We consider static first-order patterns, including homogeneous rings, hexagonal crystals, soccer-ball configurations, and heterogeneous predator--prey aggregates; semi-static flocking states in first- and second-order systems; and quasi-static milling states in second-order self-propelled particle systems.

We evaluate the learned interactions at both the kernel and dynamical levels.
At the kernel level, we distinguish between the nonzero-right-hand-side
regime, where the kernels can be compared directly in the weighted norm
$\norm{\cdot}_{\rho_T}$, and the scale-ambiguous zero-right-hand-side regime,
where we report angular errors and recover the missing scaling constants.
At the dynamical level, we test whether the learned interaction preserves
the observed terminal pattern and whether it generates the correct emergent
pattern from new initial configurations.

All experiments use simulated training data. The numerical integration is performed with \texttt{solve\_ivp} from the SciPy package in Python, using the LSODA method with tolerances $\mathrm{rtol}=10^{-9}$ and $\mathrm{atol}=10^{-12}$. Unless otherwise specified, the common testing parameters are summarized in Table~\ref{table:common_param}.  Here $M$ is the number of training initial conditions (ICs), and $M_{\rm true}$ is the larger reference ensemble used to approximate the background measure $\rho_T^{M_{\rm true}}$ shown in the kernel-comparison figures.
\begin{table}[H]
\centering
\begin{tabular}{c | c | c | c | c | c} 
\hline
$M$ & $M_{\rm true}$ & $N$ & $d$ & \# Trials & Basis \\
\hline
$250$ & $2000$ & $40$ & $2$ & $10$ & Adaptive B-spline \\
\hline
\end{tabular}
\caption{Common testing parameters used. }
\label{table:common_param}
\end{table}
To test robustness, we perturb the observed positions as $\tilde\bx_i^m = \bx_i^m + \gamma_i^m(\cos\vartheta_i^m, \sin\vartheta_i^m)^\top$, with independent $\gamma_i^m \sim \mathcal N(0,\sigma^2)$ and $\vartheta_i^m \sim \mathcal U[0,2\pi)$, where the range of $\sigma$ is chosen per example and shown in the corresponding noise figures.

We use different stopping criteria according to the type of emergent pattern. For static first-order states, the simulation is terminated when the root-mean-squared speed falls below $\mathrm{Tol}$:
\begin{equation}
   \left(\sum_{i=1}^N  \frac{1}{N_{\clof_i}}\norm{\dot{\bx}_i}^2 \right)^{1/2} \le \mathrm{Tol}.
   \label{eq:c1}
\end{equation}
For first-order flocking states, where the agents may translate with a common nonzero velocity, convergence is monitored by the relative velocity dispersion:
\begin{equation}
   \left( \sum_{i=1}^N\frac{1}{N_{\clof_i}}\norm{\dot{\bx}_i-\bar{\bv}}^2 \right)^{1/2} \le \mathrm{Tol},
   \qquad
   \bar{\bv}:=\frac{1}{N}\sum_{i=1}^N \dot{\bx}_i .
   \label{eq:c1-flock}
\end{equation}
For semi-static states in second-order systems, the corresponding criterion is applied to the acceleration:
\begin{equation}
    \left(\sum_{i=1}^N\frac{1}{N_{\clof_i}}\norm{\dot{\bv}_i}^2 \right)^{1/2} \le \mathrm{Tol}.
    \label{eq:c2}
\end{equation}
For quasi-static milling states, we first define two order parameters, the flocking score $I_{\mathrm{flock}}$ and milling score $I_{\mathrm{mill}}$, as follows

\begin{minipage}{0.45\textwidth}
\begin{equation}\label{eq:Imill}
I_{\mathrm{mill}} \coloneqq
\frac{
\displaystyle\sum_{i=1}^N \left| (\bx_i - \bar{\bx}) \times \bv_i \right|
}{
\displaystyle\sum_{i=1}^N \| \bx_i - \bar{\bx} \| \, \| \bv_i \|
}
\end{equation}
\end{minipage}\,
\begin{minipage}{0.45\textwidth}
\begin{equation}\label{eq:Iflock}
I_{\mathrm{flock}} \coloneqq
\frac{\left\| \displaystyle\sum_{i=1}^N \bv_i \right\|}{\displaystyle\sum_{i=1}^N \|\bv_i\|}
\end{equation}
\end{minipage}

We report the scores for flocking and milling in the corresponding semi-static and quasi-static examples. For first-order moving flocks, we evaluate $I_{\mathrm{flock}}$ with $\bv_i$ replaced by the instantaneous velocity $\dot{\bx}_i$. Unless otherwise specified, we set $\mathrm{Tol}=10^{-9}$ for static and semi-static regimes and $\mathrm{Tol}=10^{-6}$ for quasi-static milling states.
\subsection{Static Patterns from First-Order Systems}
We begin with static configurations of the first-order system~\eqref{eq:first_order}, where the dynamics evolves to an equilibrium with $\dot\bx_i=\zero$. All four examples below therefore fall into the collective steady state training data scenario of Theorem~\ref{thm:identifiability}: each kernel is identifiable only up to the scaling matrix $\Cmat_*$. Accordingly, we report the angle $\theta_{\idxcl, \idxcl'}$ between $\hat\phi^E_{\idxcl, \idxcl'}$ and $\phi^E_{\idxcl, \idxcl'}$ (invariant to the scaling), the constant recovered by the first-stopping-time procedure, and the trajectory error $\mathrm{Err}_{\mathrm{Traj}}$. Figures~\ref{fig:1_phi_comp} and~\ref{fig:MS-static_phi_comp}, together with Tables~\ref{tab:traj-performance} and \ref{tab:cstar-recovery}, collect these quantities across all four cases. Throughout, each entry is reported as mean $\pm$ standard deviation across the $10$ learning trials; for $\mathrm{Err}_{\mathrm{Traj}}$ and $\hat\Cmat_*$ the per-trial value is itself the mean (or std) over the $M$ trajectories. The subsections that follow examine each in turn.
\begin{figure}[H]
    \centering
    \begin{subfigure}[t]{0.32\linewidth}
        \centering
        \includegraphics[width=\linewidth]{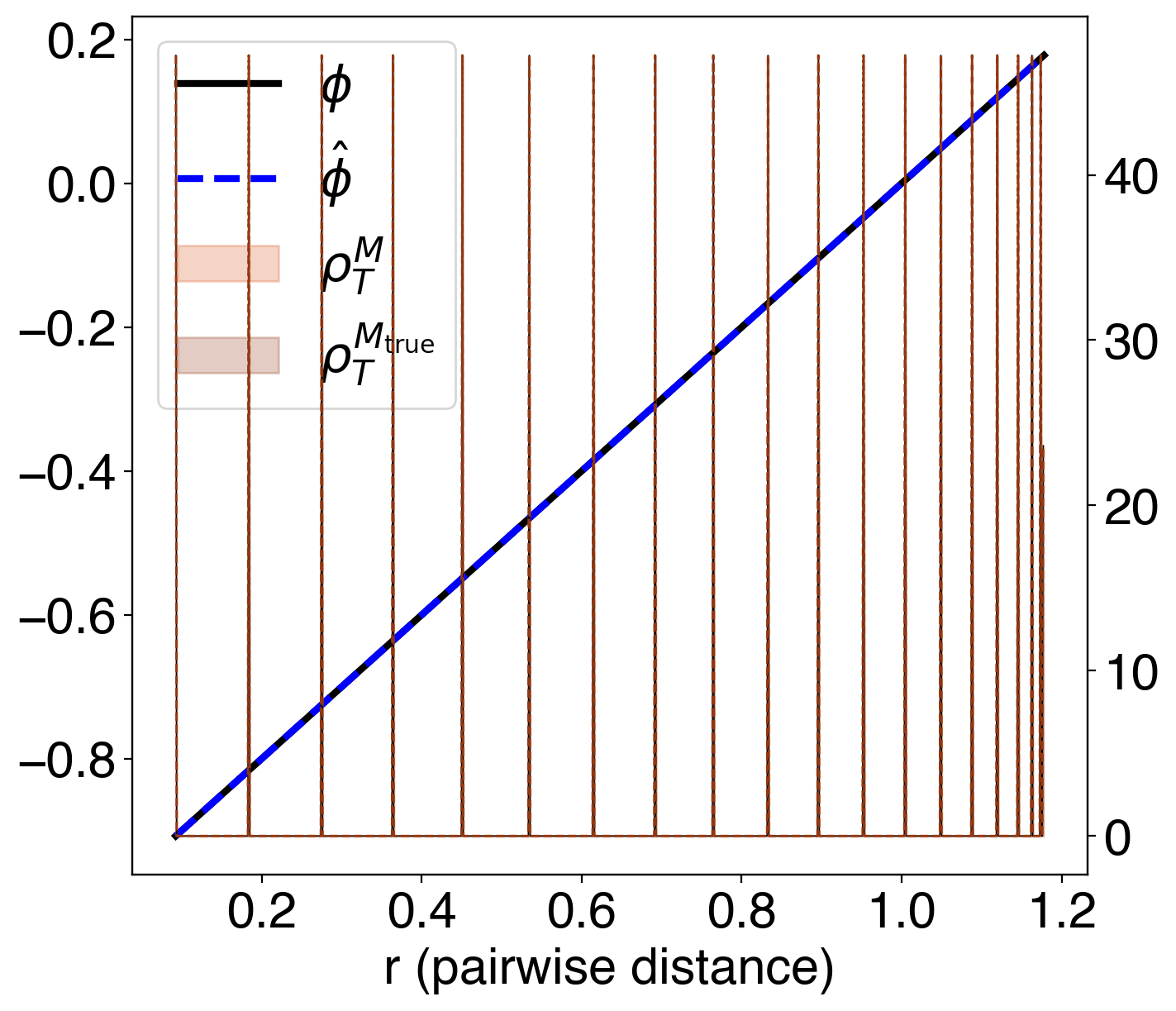}
        \caption{$r-1$: ring}
        \label{fig:r-1_phi_comp}
    \end{subfigure}
    \hfill
    \begin{subfigure}[t]{0.32\linewidth}
        \centering
        \includegraphics[width=\linewidth]{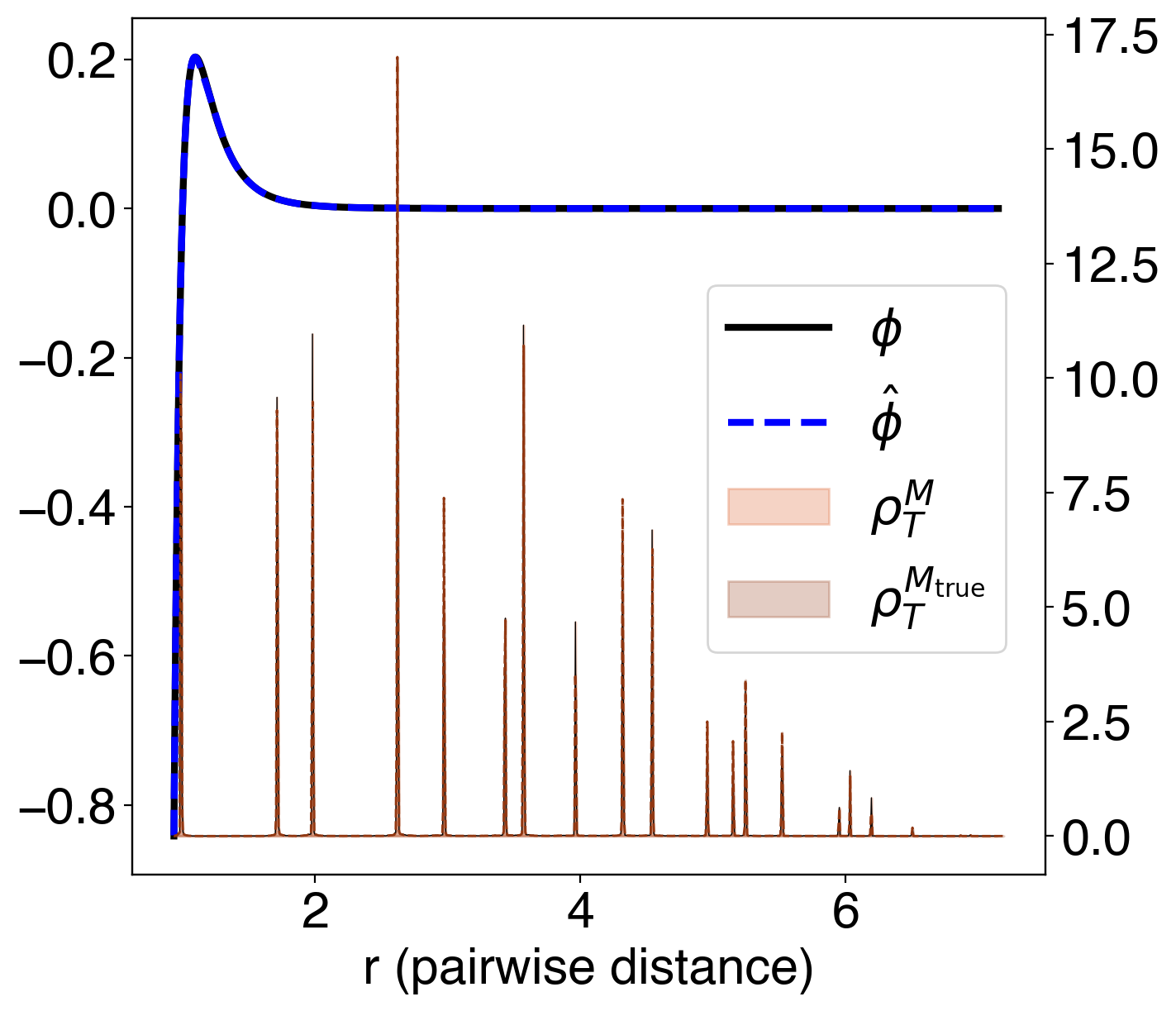}
        \caption{ $LJ$: crystal }
        \label{fig:LJ_phi_comp}
    \end{subfigure}
    \hfill
    \begin{subfigure}[t]{0.32\linewidth}
        \centering
        \includegraphics[width=\linewidth]{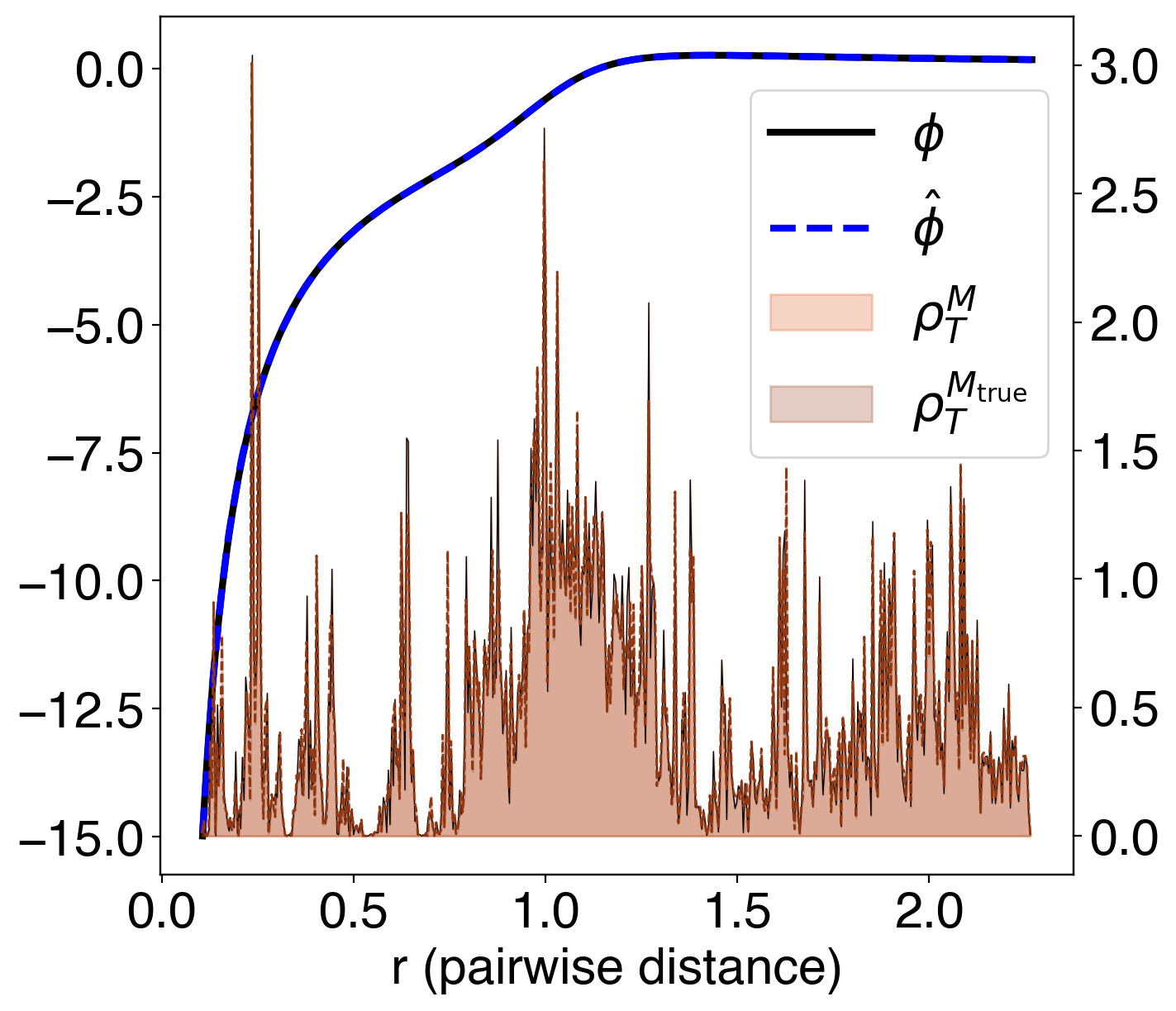}
        \caption{$\tanh$: soccer ball}
        \label{fig:tanh_phi_comp}
    \end{subfigure}
    \caption{Static patterns: $\phi$ vs.\ $\hat\phi$ (normalized); background
$\rho_T^{M}$ vs.\ $\rho_T^{M_{\text{true}}}$.}
    \label{fig:1_phi_comp}
\end{figure}
At a steady state the only information about $\phi^E$ is the equilibrium pairwise-distance distribution $\rho_T$, and $\phi$ is recoverable only on $\supp{\rho_T}$. What is learnable is thus governed by the geometry of that support. We make this the organizing axis of the three homogeneous examples, choosing kernels whose equilibria yield progressively richer $\rho_T$: the ring kernel $\phi(r)=r-1$ (Section~\ref{sec:r-1}) places all agents on one circle and supports $\rho_T$ on a \emph{finite point set}; the Lennard--Jones kernel (Section~\ref{sec:LJ}) forms a crystal whose $\rho_T$ concentrates on a few \emph{sharp clusters}; and the $\tanh$ kernel (Section~\ref{sec:tanh}) yields an equilibrium whose $\rho_T$ fills a \emph{continuous interval}. This finite~$\to$~clustered~$\to$~continuous progression runs from the most to the least ill-conditioned steady-state data. Within each example we further vary the agent number $N$, which sets how densely $\rho_T$ samples the interaction range, and compare against a B-spline interpolation of $\phi$ at the observed distances; this isolates how far learnability is capped by the coverage of $\rho_T$.
\begin{figure}[H]
    \centering
    \includegraphics[width=\linewidth]{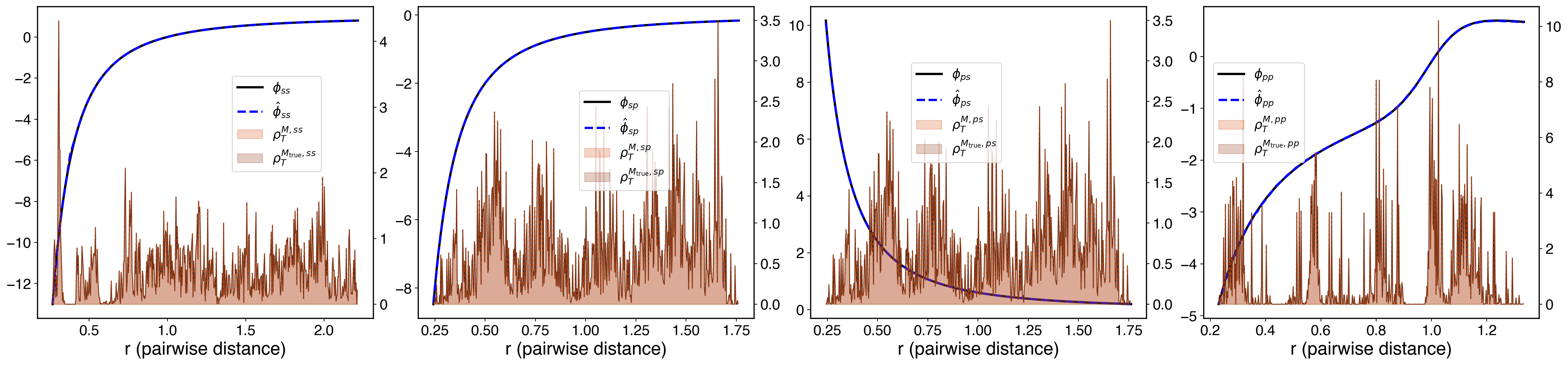}
    \caption{Predator--prey (static): $\phi_{k_1,k_2}$ vs.\ $\hat\phi_{k_1,k_2}$
(normalized) per type-pair; background $\rho^{M}_{T,k_1,k_2}$ vs.\
$\rho^{M_{\text{true}}}_{T,k_1,k_2}$. }
    \label{fig:MS-static_phi_comp}
\end{figure}
The fourth case, predator--prey (Section~\ref{sec:ms-static}), carries the same question into a heterogeneous system with $\numcl=2$ types: each ordered pair now has its own distribution $\rho_{T,k_1,k_2}$, while kernels with the same receiving type
share a common scaling constant, and four kernels must be recovered simultaneously from  shared configurations. Initial positions are drawn from $\mu^{\bx}$ uniform on $[0,1]^d$ in the homogeneous cases, and from $\bm{\mu}^{\bx}=(\mu^{\bx}_s,\mu^{\bx}_p)$ with $\mu^{\bx}_s$ uniform on $B([0,0],0.4)$ and $\mu^{\bx}_p$ uniform on $B([2.5,0],0.1)$ for predator--prey. All kernels use the $\rho_T$-adaptive B-spline basis of Section~\ref{sec:learn} (Table~\ref{table:common_param}), with $P^{\mathrm{tol}}$ set per example to match the sharpness of $\rho_T$.
\begin{table}[t]
\small
\centering
\setlength{\tabcolsep}{5pt}
\begin{tabular}{@{}lccc@{}}
\toprule
Pattern & $\theta$
& $\mathrm{Err}_{\mathrm{Traj}}$ (Mean)
& $\mathrm{Err}_{\mathrm{Traj}}$ (Std)  \\
\midrule
Ring
& $0$
& $(4.92 \pm 0.03)\times 10^{-7}$
& $(5.10 \pm 0.21)\times 10^{-8}$ \\

Crystal  
& $9.81\times 10^{-4} \pm 6.46\times 10^{-8}$
& $(1.19 \pm 1.74)\times 10^{-4}$
& $(1.64 \pm 2.67)\times 10^{-3}$ \\

Soccer ball 
& $2.54\times 10^{-2} \pm 3.29\times 10^{-5}$
& $(3.87 \pm 0.28)\times 10^{-3}$
& $(4.32 \pm 1.22)\times 10^{-3}$ \\

\addlinespace
\multirow{4}{*}{\begin{tabular}[c]{@{}c@{}}Predator--Prey\\(Static)\end{tabular}}
& $(s,s):1.06\times 10^{-3} \pm 2.82\times 10^{-7}$
& \multirow{4}{*}{$(3.42\pm 1.31)\times 10^{-3}$}
& \multirow{4}{*}{$(3.03\pm 1.71)\times 10^{-3}$} \\
& $(s,p): 1.77\times 10^{-3} \pm 3.24\times 10^{-7}$
&  &  \\
& $(p,s): 1.98 \times 10^{-2}\pm 2.43\times 10^{-4} $
&  &  \\
& $(p,p): 4.41\times 10^{-3} \pm 2.73\times 10^{-6}$
&  &  \\
\bottomrule
\end{tabular}
\caption{Static patterns from first-order systems: angle $\theta$ and trajectory error $\mathrm{Err}_{\mathrm{Traj}}$ ($M=250$, $N=40$; $N_s=28$, $N_p=12$ for predator--prey, with per-pair angles $\theta_{ss},\theta_{sp},\theta_{ps},\theta_{pp}$).}
\label{tab:traj-performance}
\end{table}
\begin{table}[t]
\centering
\setlength{\tabcolsep}{6pt}
\begin{tabular}{@{}lccc@{}}
\toprule
Pattern
& $c_*$
& $\hat c_*$ (Mean)
& $\hat c_*$  (Std)\\
\midrule
Ring
& $0.38 \pm 2.11\times 10^{-8}$
& $0.37 \pm 0$
& $0$ \\
Crystal
& $0.41\pm 0.03$
& $0.40 \pm 9.03\times 10^{-3}$
& $(1.87 \pm 0.13)\times 10^{-2}$ \\
Soccer ball
& $41.89 \pm 0.59$
& $41.37 \pm 1.36$
& $0.75\pm 0.08$\\
Prey:  $s$  & $44.85\pm 0.53$
& $43.79\pm 0.92$
& $0.79\pm 0.43$\\
Predator: $p$&$42.12\pm 0.17$
& $43.74\pm 1.24$
& $0.87\pm 0.33$\\
\bottomrule
\end{tabular}
\caption{Static patterns from first-order systems: recovered scaling $\hat c_*$ vs.\ true $c_*$ ($M=250$, $N=40$).}
\label{tab:cstar-recovery}
\end{table}
\subsubsection{Ring Pattern}\label{sec:r-1}
\begin{figure}[H]
    \centering
    \begin{subfigure}[t]{0.45\linewidth}
        \centering
        \includegraphics[width=\linewidth]{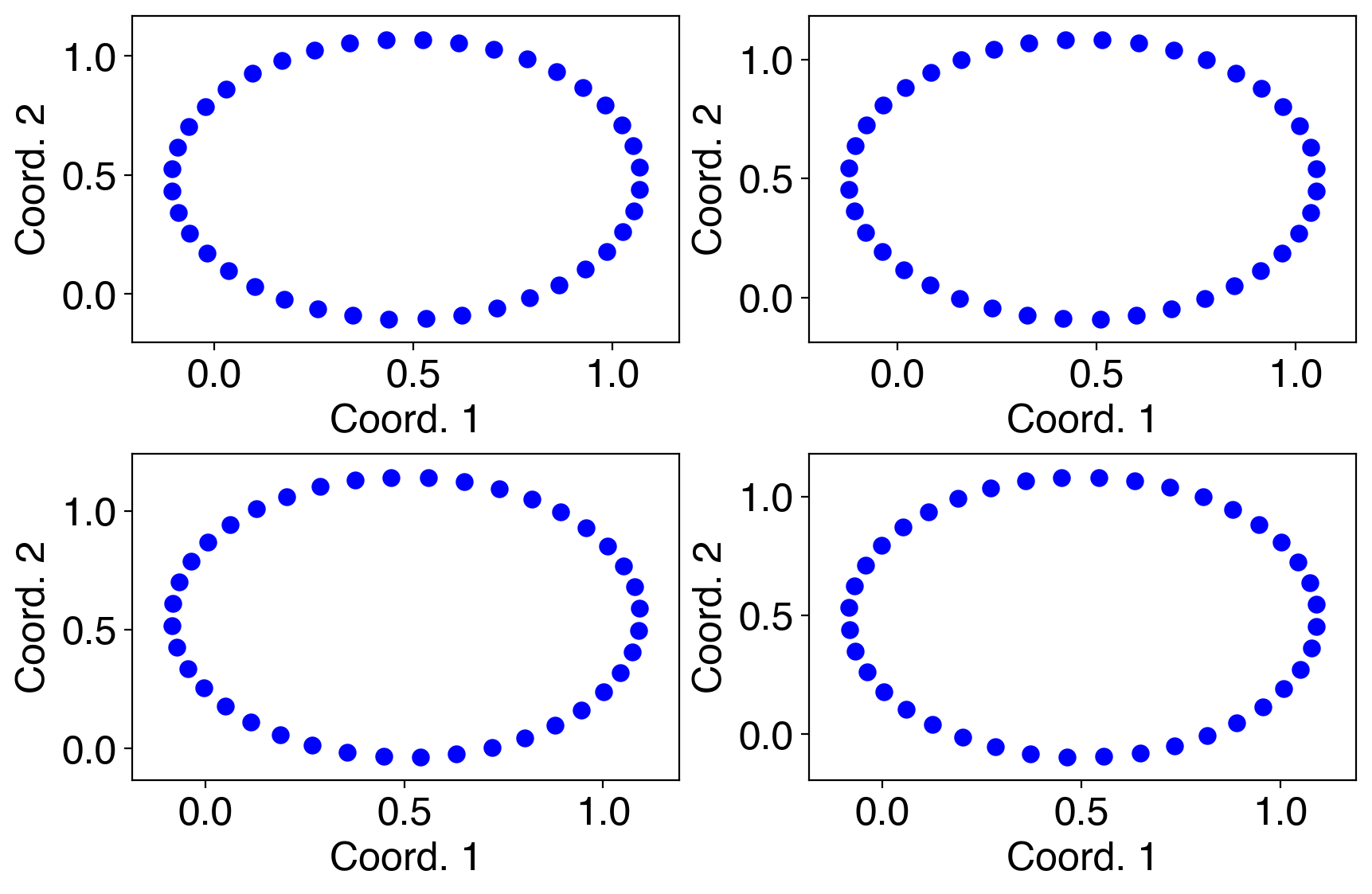}
        \caption{Steady state evolved under $\hat\phi$.}
        \label{fig:r-1_traj_comp}
    \end{subfigure}
    \hfill
    \begin{subfigure}[t]{0.45\linewidth}
        \centering
        \includegraphics[width=\linewidth]{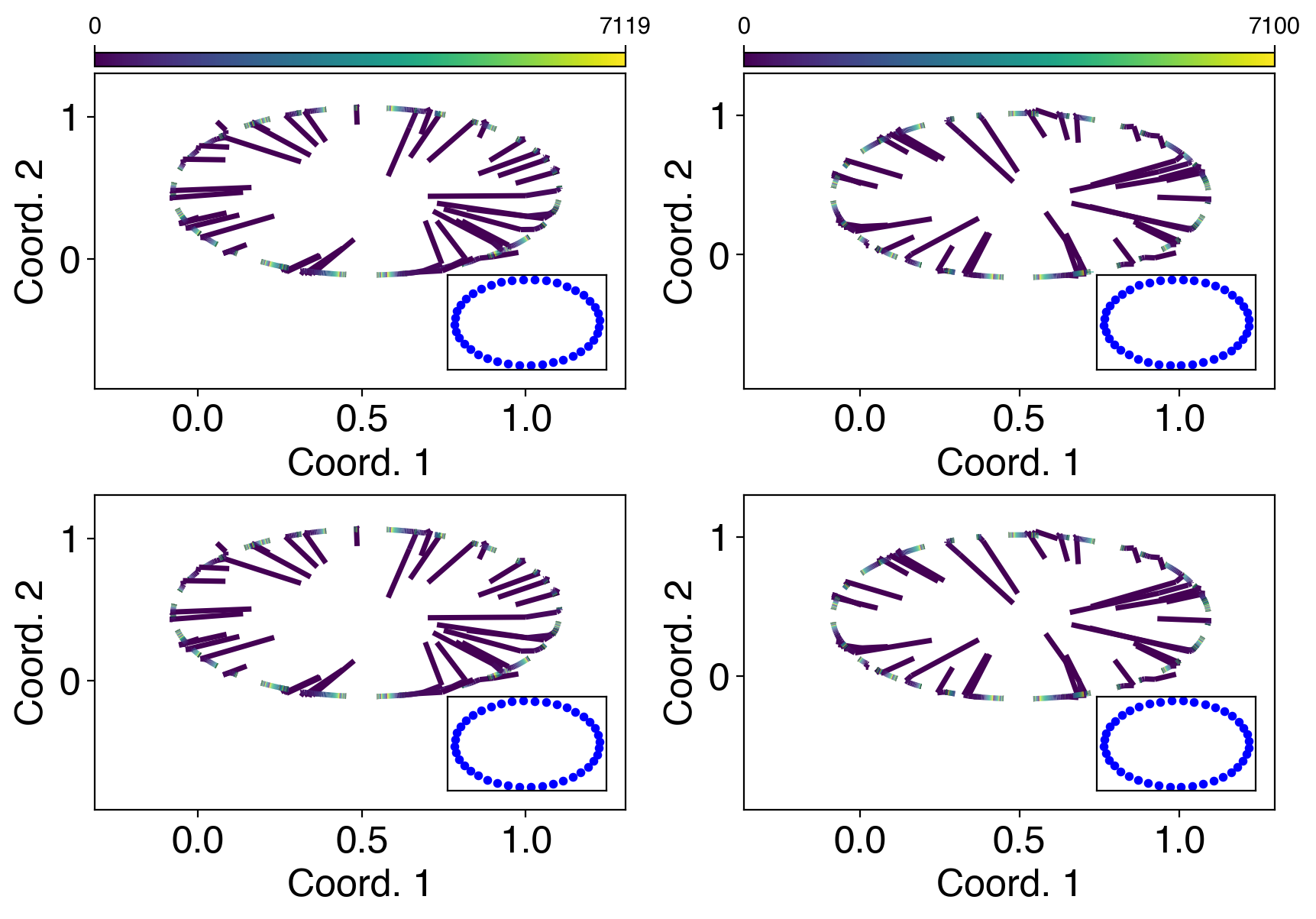}
        \caption{True (top) vs.\ learned $\hat c_*\hat\phi$ (bottom). }
        \label{fig:r-1_ic_traj_comp}
    \end{subfigure}
    \caption{Ring recovery: (left) observed steady state evolved under $\hat\phi$;
(right) trajectories from two initial conditions, true (top) vs.\ learned
$\hat c_*\hat\phi$ (bottom), final configurations inset.}
    \label{fig:r-1_combined}
\end{figure}
As the finite-support extreme of the progression, we take $\phi(r)=r-1$ from~\cite{pattern2011}, whose equilibrium spaces the agents evenly on a unit circle~\cite{pattern2011,pattern2012}. The resulting $\rho_T$ is supported on only $\lfloor \tfrac{N}{2} \rfloor$ distinct distances, the sparsest data of any example considered here. We use a degree-$1$ B-spline on a coarse $\rho_T$-adaptive partition ($P^{\mathrm{tol}}=0.5$).
\begin{figure}[H]
    \centering
    \begin{subfigure}[t]{0.46\linewidth}
        \centering
        \includegraphics[width=\linewidth]{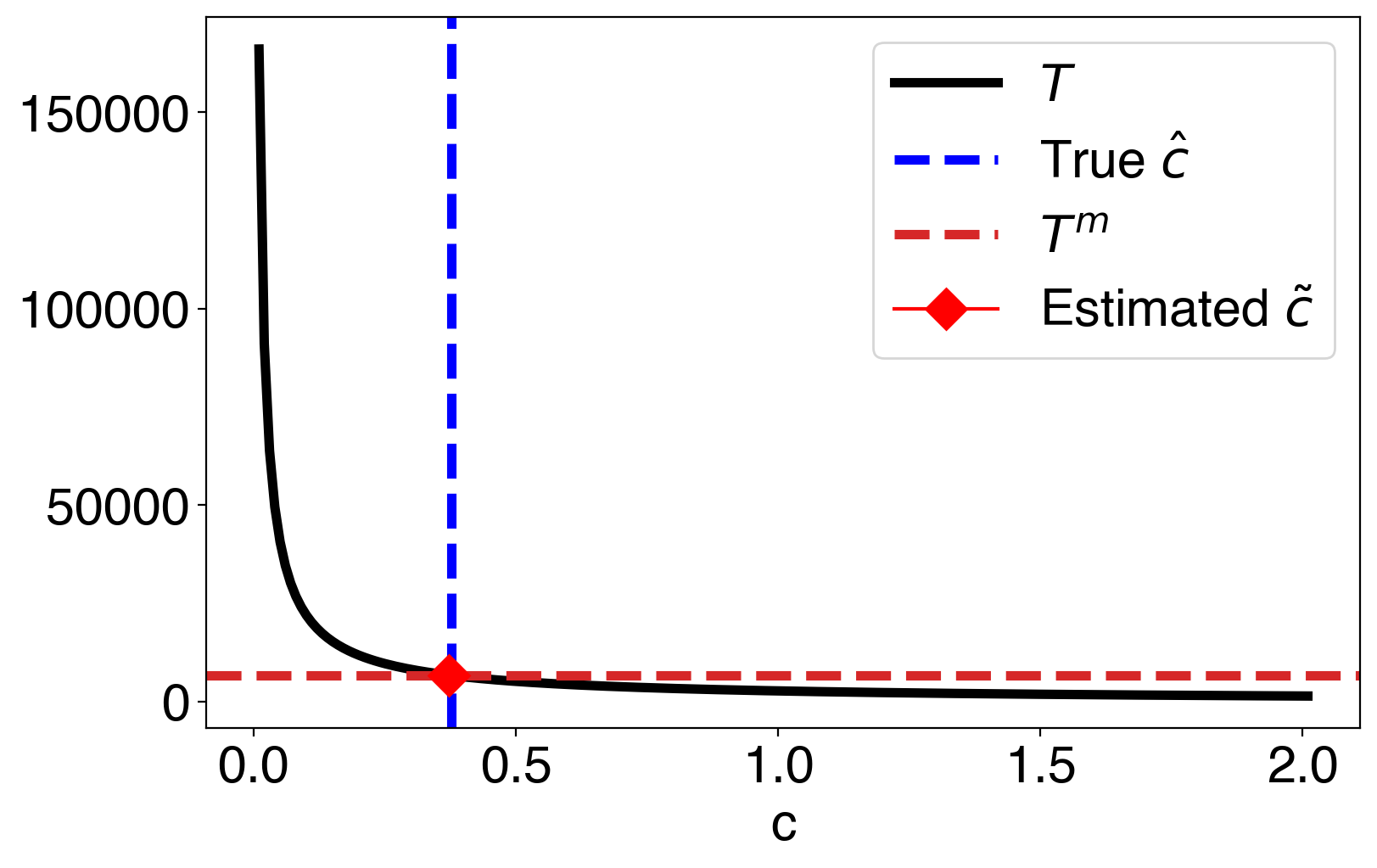}
        \caption{Estimated optimal $C_*$.}
        \label{fig:r-1_C_comp}
    \end{subfigure}
    \hfill 
    \begin{subfigure}[t]{0.46\linewidth}
        \centering
        \includegraphics[width=\linewidth]{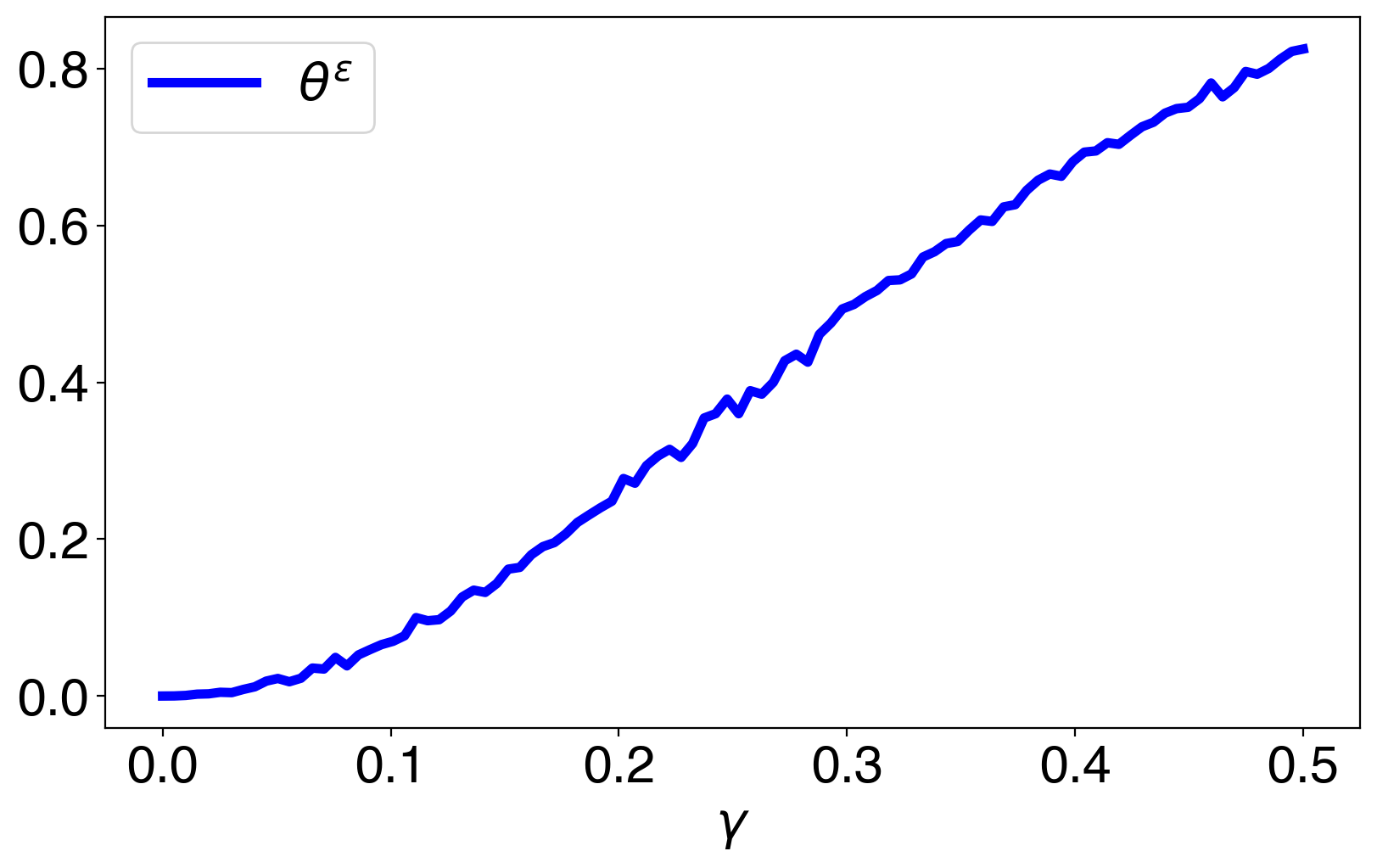}
        \caption{Noise effects in learning.}
        \label{fig:r-1_noise}
    \end{subfigure}
    \caption{Additional learning from ring steady patterns.}
\end{figure}
The two panels of Figure~\ref{fig:r-1_combined} verify the recovery at two levels. The left panel evolves the observed steady state under $\hat\phi$ for a further $T\gg1$: the configuration does not drift, confirming $\F_{\hat\phi}(\bX_T)\approx\zero$, i.e.\ $\hat\phi$ admits the observed pattern as a fixed point. The right panel makes the stronger check, evolving $\hat c_*\hat\phi$ from two \emph{fresh} initial conditions; the learned trajectories track the true ones from start to convergence, so the steady-state snapshots suffice to recover the full dynamics, not merely one equilibrium. We use this left/right reading in the examples that follow.
\begin{figure}[H]
    \centering
    \includegraphics[width=\linewidth]{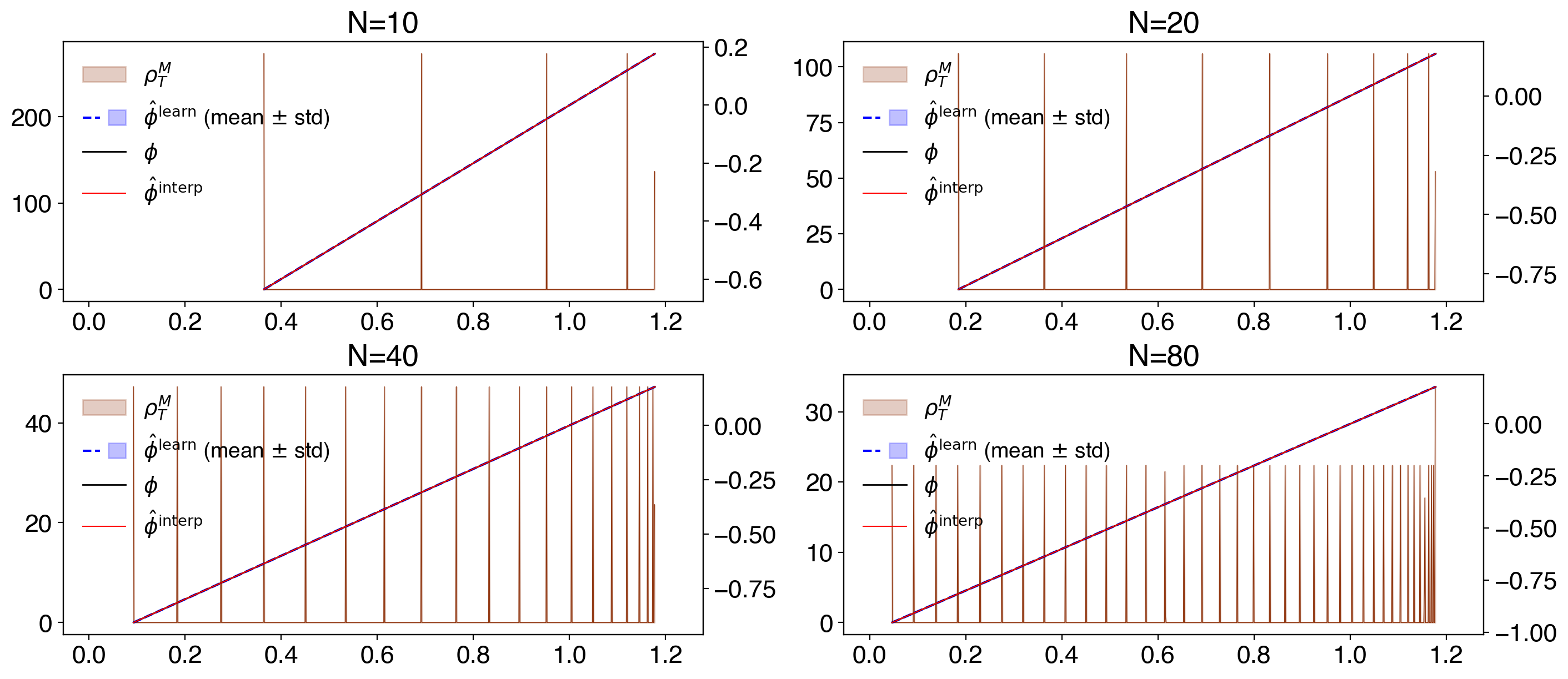}
    \caption{Ring: learning of $\phi$ with $N \in \{10, 20, 40, 80\}$, compared 
   to B-spline interpolation $\hat\phi^{\mathrm{interp}}$. Background shows 
   $\rho_T^{M}$.}
    \label{fig:r-1_rhoT}
\end{figure}
On the kernel level (panel $(a)$ of Figure~\ref{fig:1_phi_comp}), $\hat\phi$ matches $\phi$ exactly on $\supp{\rho_T}$: the ring row of Table~\ref{tab:traj-performance} reports $\theta=0$ to machine precision. This is sharper than in the later examples and is intrinsic to the finite support. A degree-$1$ basis on the adaptive partition reproduces $\phi$ exactly at the few realized distances, so the observable component carries no approximation error and all residual ambiguity lies in the scaling null space, recovered as $\hat c_*$ in Table~\ref{tab:cstar-recovery}.

Because $\rho_T$ collapses onto isolated points, the ring is also the most stringent test against an interpolation baseline $\hat\phi^{\mathrm{interp}}$
built from the true values of $\phi$ at the observed distances (Figure~\ref{fig:r-1_rhoT}). As $N$ decreases and the points grow sparser, $\hat\phi^{\mathrm{interp}}$ becomes unstable between them, whereas the $\rho_T$-weighted variational estimate stays close to $\phi$ across all tested $N$. Even at this degenerate end of the progression, learnability is limited only by the coverage of $\rho_T$ and not by the conditioning of the recovery, a baseline that the richer-support cases below build on.
\subsubsection{Crystal Formation}\label{sec:LJ}
The Lennard--Jones (LJ) potential~\cite{FISCHER2023113876, https://doi.org/10.1002/andp.202400115} is a classical model for intermolecular interactions, combining short-range repulsion with long-range attraction, and induces crystalline structures at equilibrium~\cite{FISCHER2023113876}. It sits at the middle rung of the progression: the crystal equilibrium concentrates $\rho_T$ on a few narrow clusters, richer than the ring's point set but still far from continuous. We take the first-order system~\eqref{eq:first_order} with
\[
\phi(r) =
\begin{cases}
\varepsilon\left(\dfrac{A}{r^{8}} - \dfrac{B}{r^{14}}\right), & r \ge r_c, \\[6pt]
\varepsilon\left(\dfrac{A}{r_c^{8}} - \dfrac{B}{r_c^{14}}\right), & r < r_c,
\end{cases}
\qquad \varepsilon = 0.1,\; A=B = 10,\; r_c = 0.5,
\]
where $\phi(r)=U'(r)/r$ is the gradient-flow form of the $12$--$6$ LJ potential~\cite{jones1924determination}, so the $r^{-14}$ term gives short-range repulsion and the $r^{-8}$ term long-range attraction, and $r_c$ regularizes the singularity of $\phi$ near the origin. The clusters are sharp, so we use a finer partition ($P^{\mathrm{tol}}=0.01$) than for the ring.
\begin{figure}[H]
    \centering
    \begin{subfigure}[t]{0.46\linewidth}
        \centering
        \includegraphics[width=\linewidth]{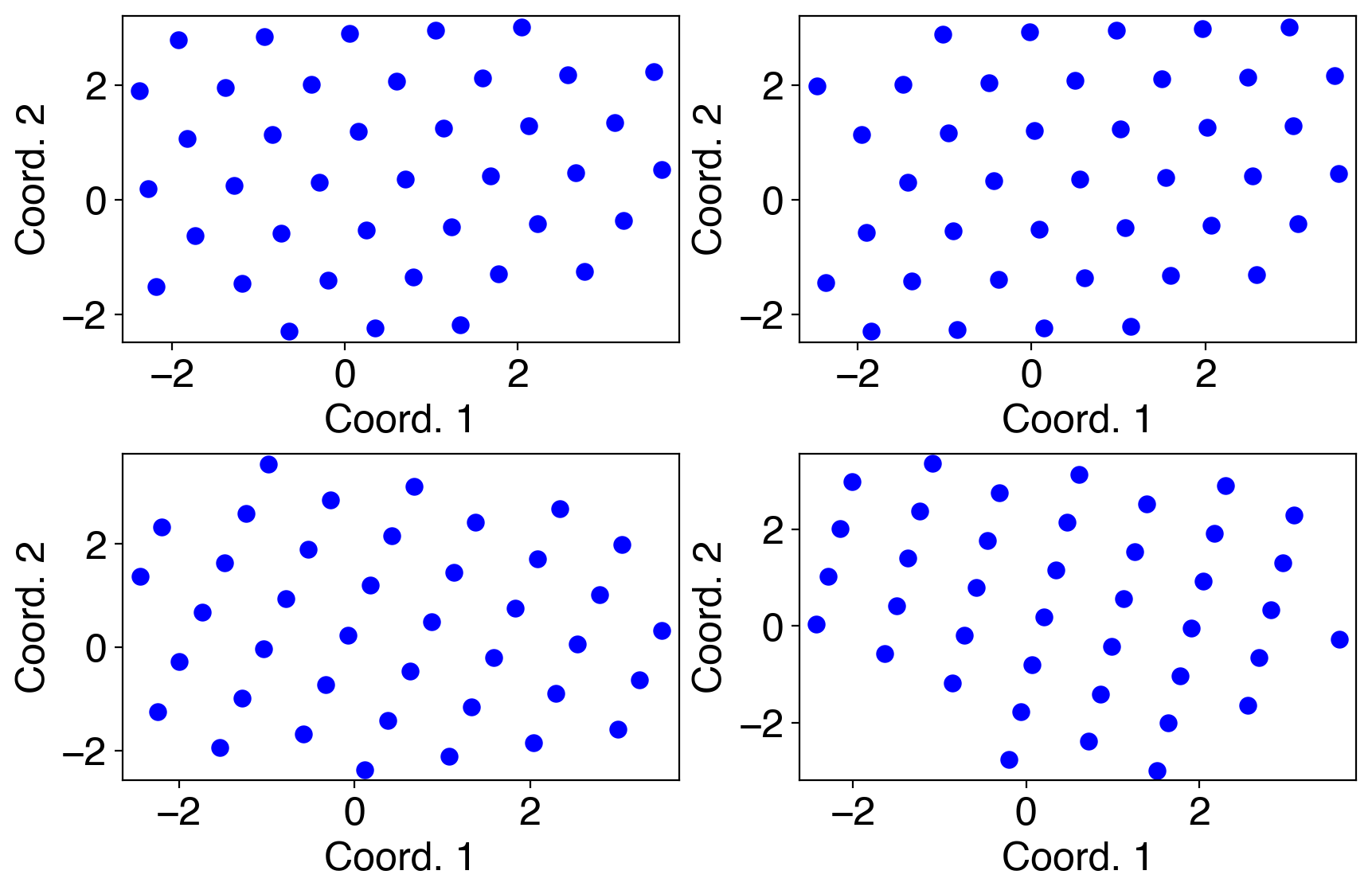}
        \caption{Evolution of the crystal
         steady-state configuration under $\hat\phi$.}
        \label{fig:LJ_traj_comp}
    \end{subfigure}
    \hfill
    \begin{subfigure}[t]{0.46\linewidth}
        \centering
        \includegraphics[width=\linewidth]{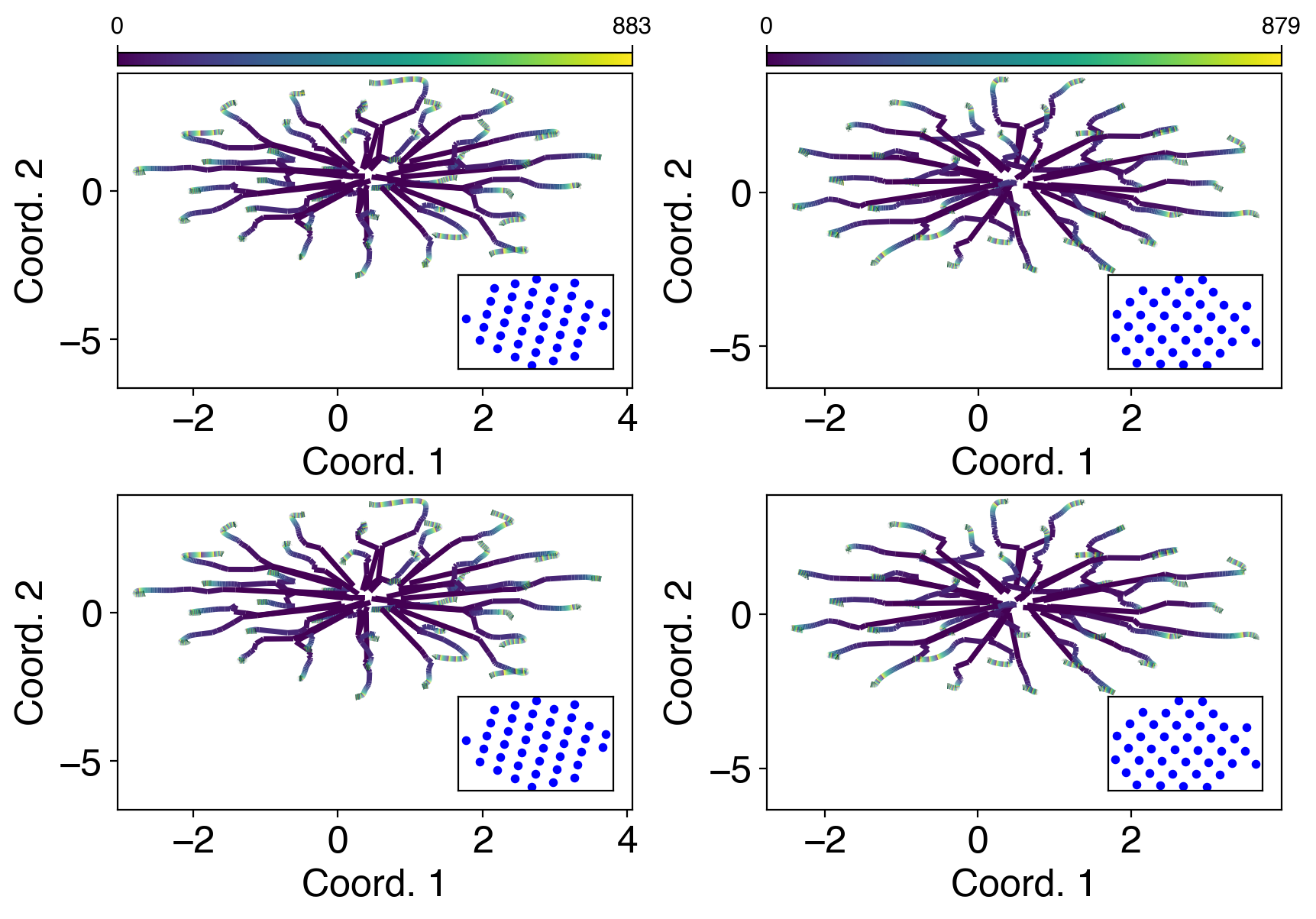}
        \caption{Two trajectories: true dynamics (top) vs. learned dynamics with $\hat{c}_*\hat{\phi}$ (bottom). }
        \label{fig:LJ_ic_traj_comp}
    \end{subfigure}
    \caption{Crystal recovery: steady-state evolution (left) and full trajectories (right, with final configurations in the insets) under true and learned dynamics.}
    \label{fig:LJ_combined}
\end{figure}
Similar to the results shown in the case of $\phi(r) = r - 1$, the two panels of Figure~\ref{fig:LJ_combined} confirm that $\hat\phi$ admits the observed crystal as a fixed point and that $\hat c_*\hat\phi$ reproduces the full trajectories from fresh initial conditions. The new feature here is that $\rho_T$ leaves gaps between clusters, so the observable range no longer covers the whole interaction range and the recovery is no longer exact: the Crystal row of Table~\ref{tab:traj-performance} reports $\theta=9.81\times10^{-4}$ rather than zero. The hardest region is the repulsive core, which the equilibrium barely samples because agents never approach that closely; what is unlearnable is set by the geometry of $\supp{\rho_T}$, not by the shape of $\phi$. The recovered $\hat c_*$ (Table~\ref{tab:cstar-recovery}) also carries larger variance than for the ring, as the multi-modal $\rho_T$ makes the first-stopping-time criterion more sensitive.

Varying $N$ confirms this finding (Appendix, Figure~\ref{fig:LJ_rhoT}). At small $N$ the clusters cover only part of the interaction range and both $\hat\phi$ and the interpolation baseline $\hat\phi^{\mathrm{interp}}$ miss the repulsive region; as $N$ grows the clusters spread, the gaps close, and both reconstructions converge to $\phi$ on $\supp{\rho_T}$, with $\hat\phi$ showing negligible variance across trials. The clustered regime thus recovers $\phi$ wherever $\rho_T$ has mass but cannot reach across the inter-cluster gaps, a limitation the continuous support of the next example removes.
\subsubsection{Soccer Ball Pattern}\label{sec:tanh}
\begin{figure}[H]
    \centering
    \begin{subfigure}[t]{0.46\linewidth}
        \centering
        \includegraphics[width=\linewidth]{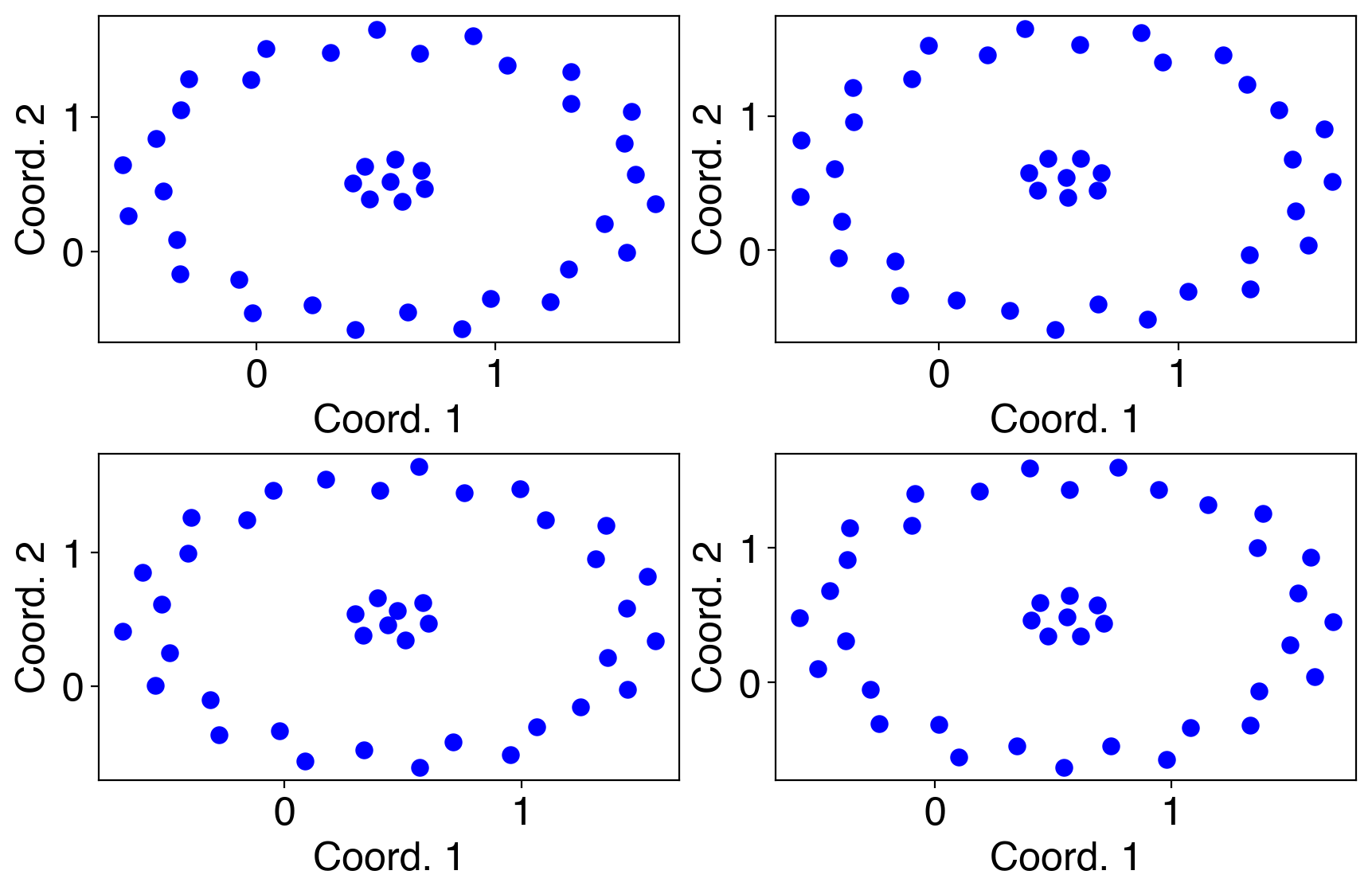}
        \caption{Evolution of the soccer ball
         steady-state configuration under $\hat\phi$.}
        \label{fig:tanh_traj_comp}
    \end{subfigure}
    \hfill
    \begin{subfigure}[t]{0.46\linewidth}
        \centering
        \includegraphics[width=\linewidth]{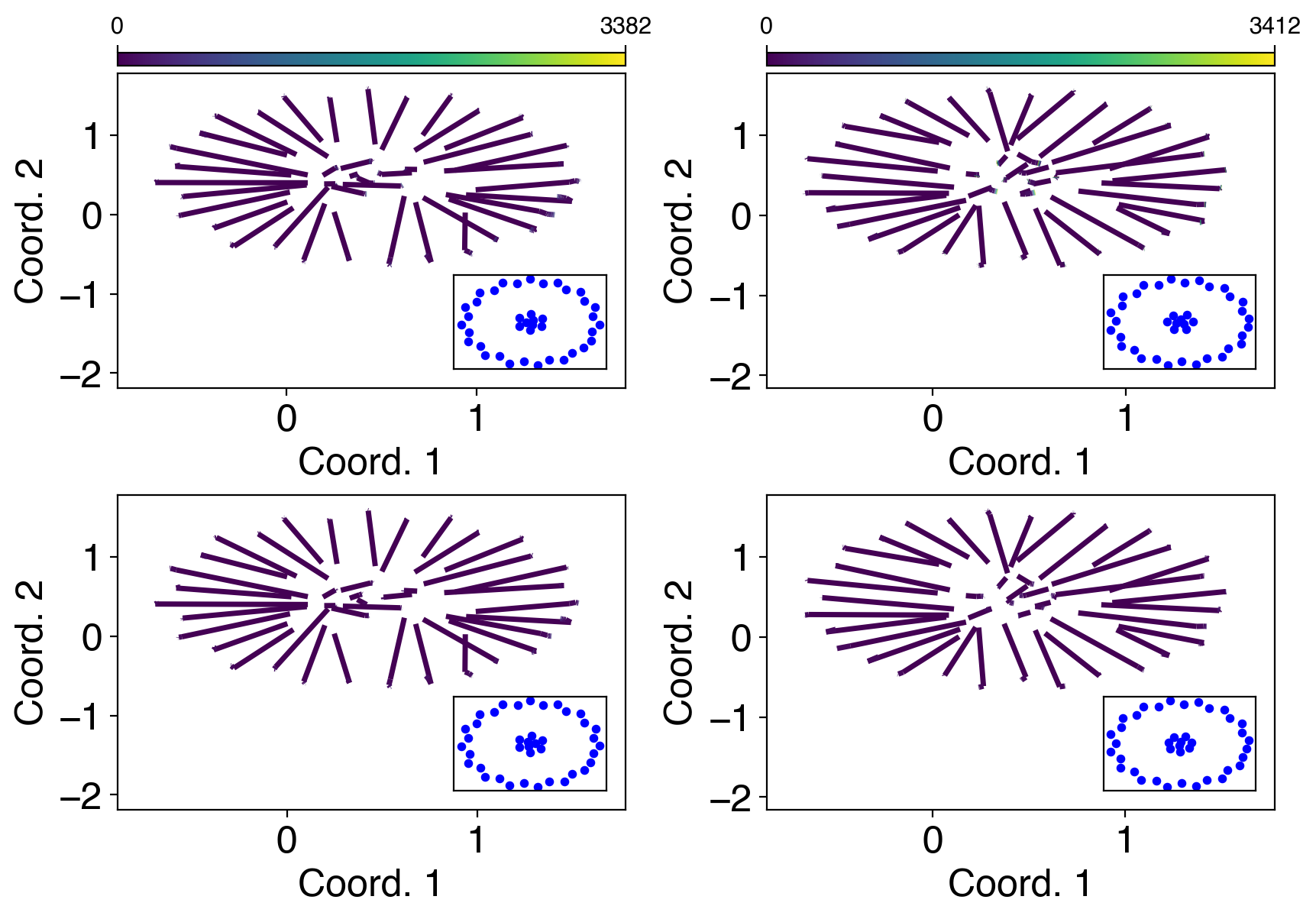}
        \caption{Two trajectories: true dynamics (top) vs. learned dynamics with $\hat{c}_*\hat{\phi}$ (bottom). }
        \label{fig:tanh_ic_traj_comp}
    \end{subfigure}
    \caption{Soccer ball recovery: steady-state evolution (left) and full trajectories (right, with final configurations in the insets) under true and learned dynamics.}
    \label{fig:tanh_combined}
\end{figure}
The tanh-type force~\cite{pattern2011} switches smoothly but sharply from short-range repulsion to long-range attraction, producing a rich family of two-dimensional equilibria, from rings and annuli to soccer-ball structures with $N$-fold symmetry~\cite{pattern2011, von2012soccer}. We take the first-order system~\eqref{eq:first_order} with
\[
\phi(r) =
\begin{cases}
-\dfrac{\tanh\!\big(a(1 - r)\big) + b}{r}, & r \ge r_c, \\[6pt]
-\dfrac{\tanh\!\big(a(1 - r_c)\big) + b}{r_c}, & r < r_c,
\end{cases}
\qquad a = 5,\ b = 0.6,\ r_c = 0.05,
\]
where $r_c$ regularizes the $1/r$ factor near the origin. This is the continuous end of the progression: unlike the ring and LJ equilibria, the resulting $\rho_T$ fills a full interval of distances rather than collapsing onto points or clusters. We use the finest partition of the three ($P^{\mathrm{tol}}=0.005$) to resolve its fine-scale structure.

As in the previous cases, the two panels of Figure~\ref{fig:tanh_combined} verify that $\hat\phi$ admits the observed pattern as a fixed point and that $\hat c_*\hat\phi$ reproduces the trajectories from fresh initial conditions (Tables~\ref{tab:traj-performance} and~\ref{tab:cstar-recovery}, Soccer-ball row). Because $\rho_T$ now covers the interaction range without gaps, $\hat\phi$ tracks $\phi$ throughout $\supp{\rho_T}$ with no unlearnable region of the kind seen in LJ. The same continuity makes the recovery insensitive to $N$ (Appendix, Figure~\ref{fig:tanh_rhoT}): even at small $N$ the coverage is already adequate, and both $\hat\phi$ and the interpolation baseline $\hat\phi^{\mathrm{interp}}$ are accurate across all tested $N$.

These three homogeneous examples sweep $\rho_T$ from a finite point set (ring) through sharp clusters (LJ) to a continuous interval (tanh), that is, from the most to the least ill-conditioned steady-state data. Across the whole range the $\rho_T$-adaptive recovery stays accurate on $\supp{\rho_T}$, and the only intrinsic limit is the coverage of $\rho_T$: it shrinks the unlearnable region as the support grows and the gap between small-$N$ and large-$N$ performance closes. We turn next to whether this picture survives in a heterogeneous system.
\subsubsection{Predator--Prey Model: Static Patterns}\label{sec:ms-static}
\begin{figure}[H]
    \centering
        \begin{subfigure}[t]{0.46\linewidth}
        \centering
        \includegraphics[width=\linewidth]{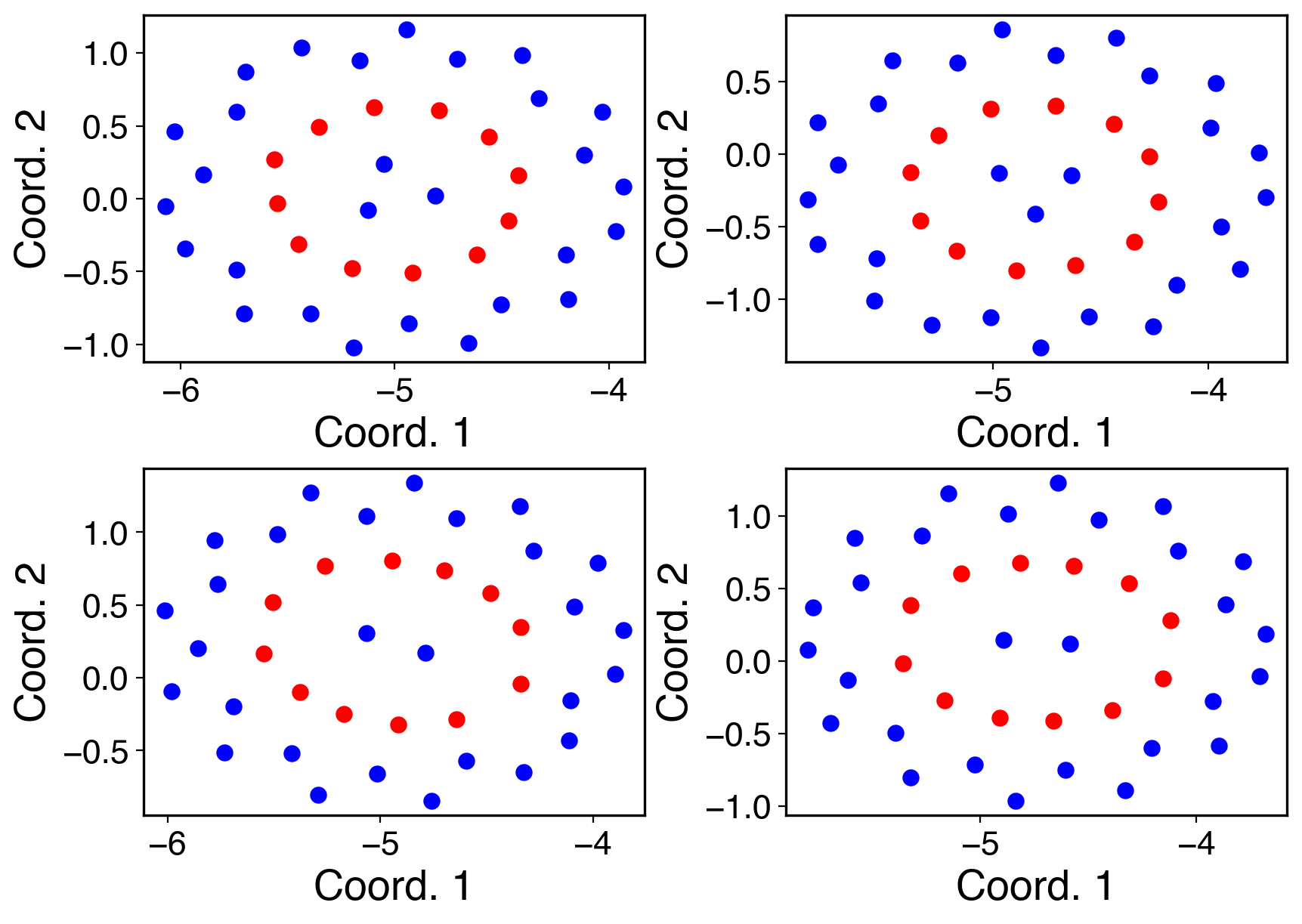}
        \caption{Steady state evolved under $\hat\phi$. }
         \label{fig:MS-static_traj_comp}
    \end{subfigure}
    \hfill
        \begin{subfigure}[t]{0.46\linewidth}
        \centering
        \includegraphics[width=\linewidth]{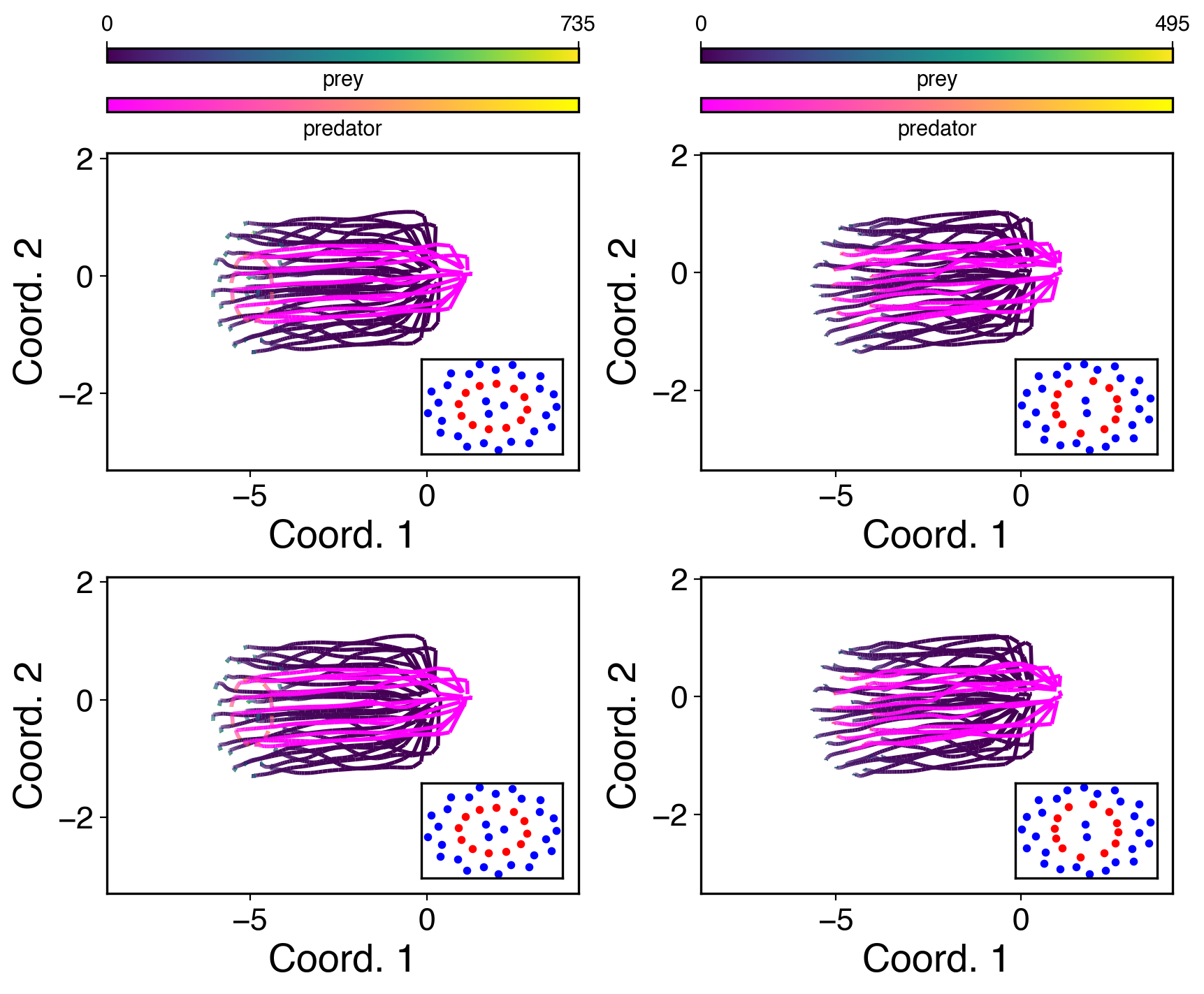}
        \caption{True (top) vs.\ learned (bottom). 
    }
         \label{fig:ms-static_ic_traj_comp}
    \end{subfigure}
   \caption{Predator--prey (static): (left) observed steady state evolved under
$\hat\phi$; (right) trajectories from two initial conditions, true (top) vs.\
learned (bottom), final configurations inset. Prey in blue, predators in red.}
   \label{fig:MS-static_combined}
\end{figure}
To test whether the same picture persists when several kernels must be recovered at once, we move to a heterogeneous system of two species, prey~($s$) and predator~($p$), with $\numcl=2$, $N_s=28$, $N_p=12$. The first-order system~\eqref{eq:first_order} now carries four kernels $\phi_{k_1,k_2}$, $k_1,k_2\in\{s,p\}$,
\begin{align*}
\phi_{ss}(r) &= a_{ss} - \frac{1}{r^2},
& \phi_{sp}(r) &= -\frac{b_{sp}}{r^2}, \\
\phi_{ps}(r) &= \frac{c_{ps}}{r^2}, 
& \phi_{pp}(r) &= -\frac{\tanh\!\big(d_{pp}(1 - r)\big) + e_{pp}}{r},
\end{align*}
with $a_{ss}=1$, $b_{sp}=0.5$,  $c_{ps}=0.6$,  $d_{pp}=8$, $e_{pp}=0.1$, and a common cutoff $r_c=0.01$. These combine prey self-organization through repulsion, an asymmetric coupling in which prey are repelled by predators while predators are attracted to prey, and tanh-type predator self-interaction. The equilibrium is a static configuration with the prey forming a structured shell around a tighter predator cluster (Figure~\ref{fig:MS-static_combined}). Each kernel is learned on its own $\rho_{T,k_1,k_2}$-adaptive basis with $P^{\mathrm{tol}}=0.01$.

\begin{figure}[H]
    \centering
    \begin{subfigure}[t]{0.46\textwidth}
        \centering
        \includegraphics[width=\linewidth]{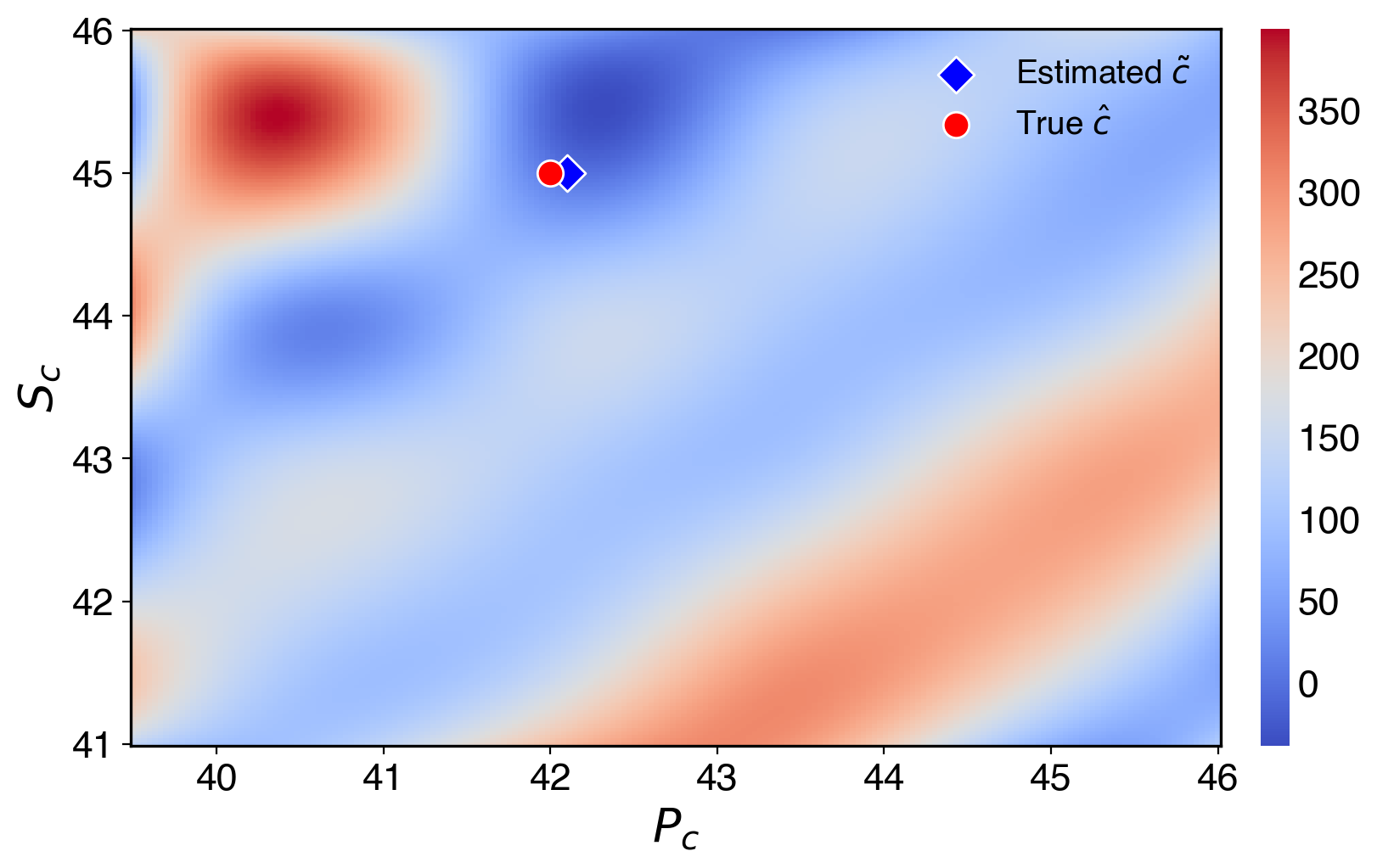}
        \caption{Estimated optimal $C_*$.}
        \label{fig:MS-static_C_comp}
    \end{subfigure}%
    \hfill
    \begin{subfigure}[t]{0.46\textwidth}
        \centering
        \includegraphics[width=\linewidth]{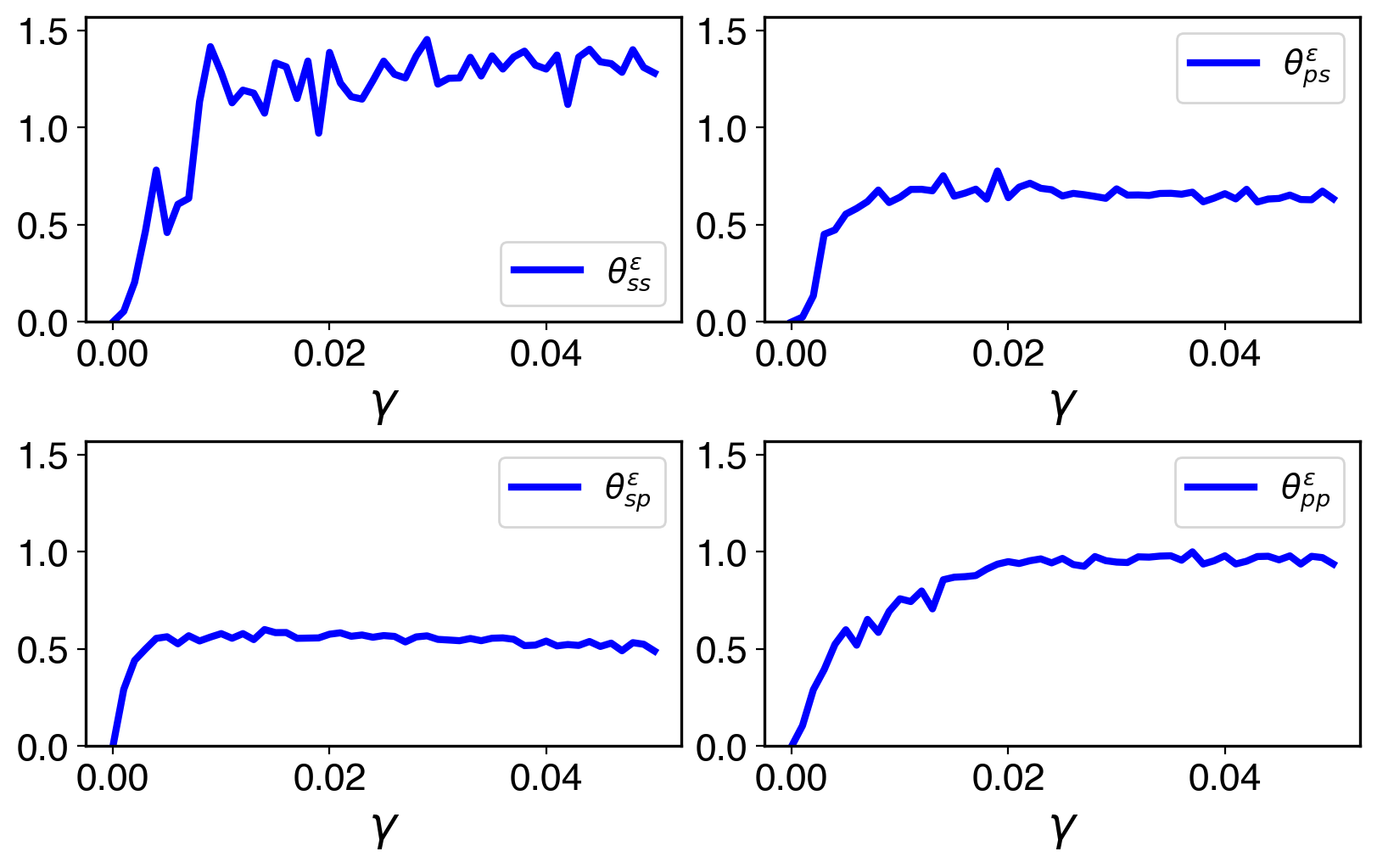}
        \caption{Noise effects in learning.}
        \label{fig:MS-static_noise}
    \end{subfigure}
    \caption{Additional learning from heterogeneous steady patterns.}
\end{figure}
As in the single-species cases, the two panels of Figure~\ref{fig:MS-static_combined} verify that the four learned kernels admit the observed pattern as a fixed point and that the rescaled kernels reproduce the trajectories from fresh initial conditions. The new ingredient is the scaling ambiguity: each species now carries its own constant, so the recovery returns a pair $(c_{s,*},c_{p,*})$, obtained jointly by the first-stopping-time procedure (Tables~\ref{tab:traj-performance} and~\ref{tab:cstar-recovery}).

The coverage principle of the homogeneous examples carries over directly, now across four distributions of differing geometry. The pair $\rho_{T,pp}$ is the narrowest, since the few predators realize few pairwise distances. The largest angles in Table~\ref{tab:traj-performance} are $\theta_{ps}$ and $\theta_{pp}$, the two kernels acting on the predators: with only $N_p=12$ predators, the equations constraining them are the fewest, so these kernels are the least accurately recovered, whereas the kernels acting on the prey, $\theta_{ss}$ and $\theta_{sp}$, are near-exact.  Since $\phi_{sp}$ and $\phi_{ps}$ share the same distance distribution, $\rho_{T,sp}=\rho_{T,ps}$, their gap reflects this difference in the number of constraints rather than in coverage. Varying $N$ at fixed species ratio, $(N,N_s,N_p)\in\{(10,7,3),(20,14,6),(40,28,12)\}$, confirms this (Appendix, Figure~\ref{fig:MS-static_rhoT}): the predator-related kernels are the last to converge, and all four $\hat\phi_{k_1,k_2}$ recover $\phi_{k_1,k_2}$ on their supports once $N$ is large enough. The heterogeneous setting thus inherits the single-species behavior kernel by kernel, with learnability set by the coverage of each $\rho_{T,k_1,k_2}$.
\subsection{Semi-static Patterns}
We next consider semi-static configurations, in which the system does not come to rest but approaches a coherent flocking state. In the limiting regime, the agents move with a common velocity $\bv_*$ while maintaining an approximately fixed relative configuration. This includes moving flocks in first-order multi-species systems and flocking states in second-order systems.

The key point in this subsection is that these two flocking mechanisms fall into different identifiability regimes. For first-order moving flocks, $\dot{\bx}_i \to \bv_* \neq \zero$, so the right-hand side of the learning equation is nonzero; accordingly, kernel accuracy is measured in the weighted norm $\norm{\cdot}_{\rho_T}$ without any scaling correction. By contrast, for second-order flocking states, $\dot{\bv}_i\to \zero$ and $\bv_i\to\bv_*$, so the right-hand side of the learning equation vanishes in the limiting flocking state. These examples therefore fall into the scale-ambiguous zero-right-hand-side
regime, where we report angular errors and recover the corresponding
scaling constant.

Initial positions are sampled as in the static examples. For second-order systems, the initial velocities are additionally drawn from $\mu^{\bv}$, uniform on $[0,1]^d$. First-order flocking examples are monitored using the relative-velocity criterion~\eqref{eq:c1-flock}, while second-order flocking examples use the acceleration criterion~\eqref{eq:c2}, with $\mathrm{Tol}=10^{-9}$. The interaction kernels are approximated using the adaptive B-spline bases specified in Table~\ref{table:common_param}. 

Dynamical recovery in this regime is evaluated by the flocking score $I_{\mathrm{flock}}$ defined in~\eqref{eq:Iflock}, computed from trajectories evolved under the learned interaction $\estphivec$ from time $T$ to time $2T$. Here $I_{\mathrm{flock}}$ serves as the primary dynamical diagnostic, since the emergent behavior is characterized by velocity alignment rather than convergence to a fixed spatial configuration.
\subsubsection{First-order Predator--Prey Model: Flocking}
\begin{figure}[H]
    \centering
    \begin{subfigure}[t]{0.46\linewidth}
        \centering
        \includegraphics[width=\linewidth]{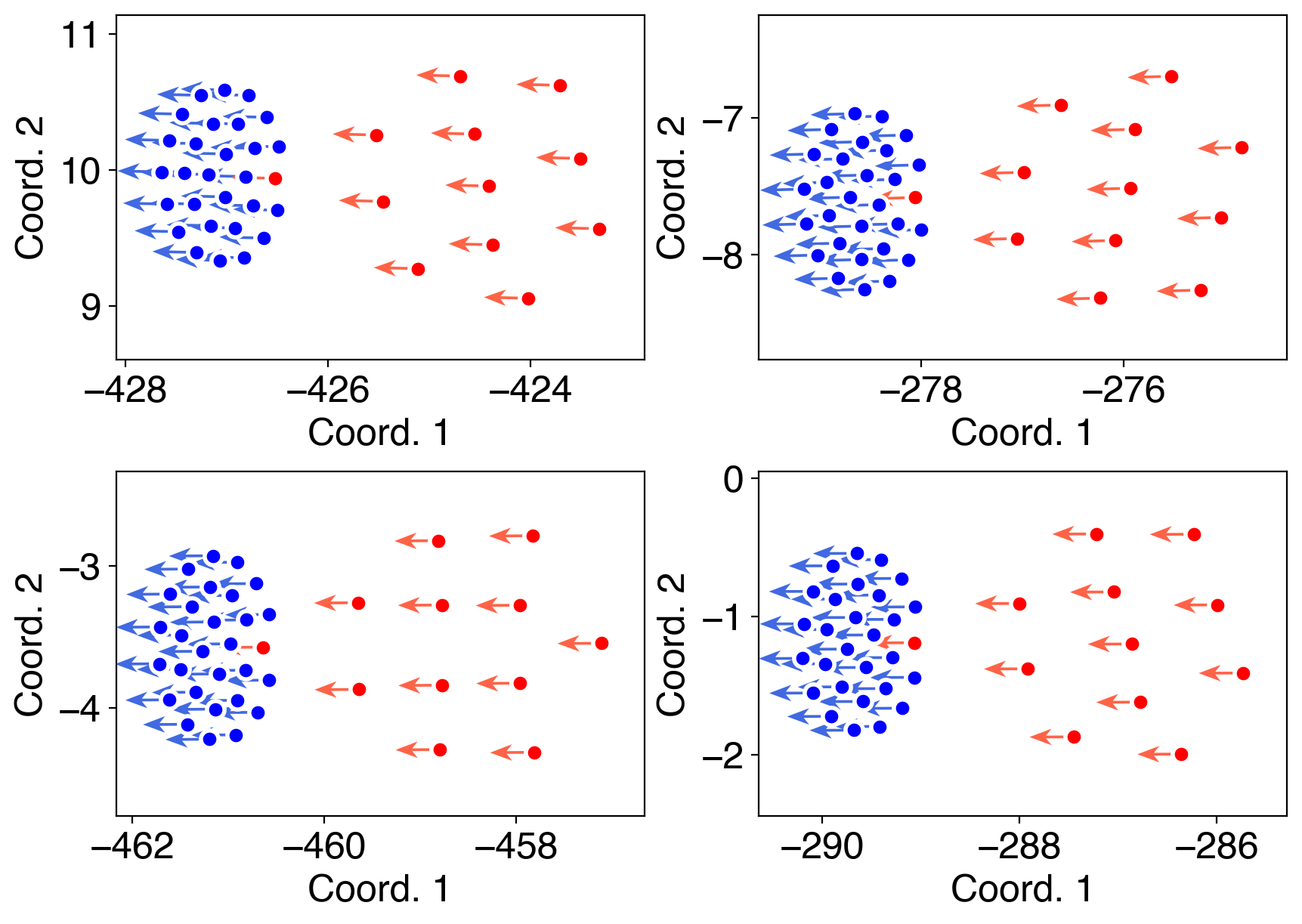}
        \caption{Evolution from the observed flocking configuration under the learned interaction $\hat \phi$.}
        \label{fig:MS-flock_traj_comp}
    \end{subfigure}
    \hfill
    \begin{subfigure}[t]{0.46\linewidth}
        \centering
        \includegraphics[width=\linewidth]{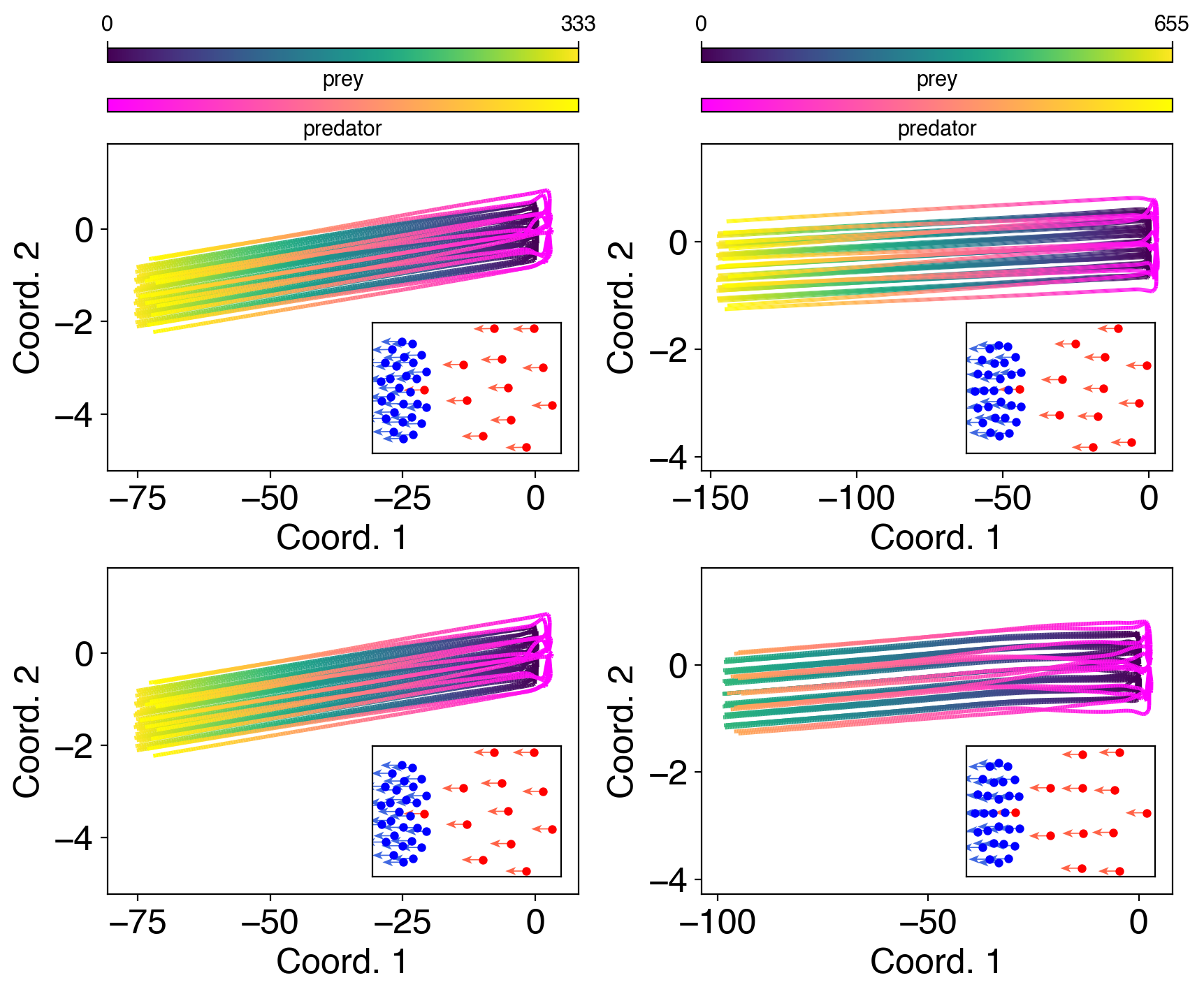}
        \caption{True (top) vs.\ learned (bottom).}
        \label{fig:ms-flock_ic_traj_comp}
    \end{subfigure}
    \caption{Predator--prey (flocking): (left) preservation of the observed terminal flocking state under $\hat \phi$; (right) trajectories from two initial conditions, true (top) vs. learned (bottom), final configurations inset. Prey in blue, predators in red.
}
\label{fig:MS-flock_combined}
\end{figure}
We first consider a first-order predator--prey system in a flocking regime. The
four directed interaction kernels have the same functional forms as in the
static predator--prey example, but their parameters are retuned so that the
system approaches a moving flock:
\begin{align*}
\phi_{ss}(r) &= a_{ss} - \frac{1}{r^2}, 
&\phi_{sp}(r) &= -\frac{b_{sp}}{r^2}, 
 \\
 \phi_{ps}(r) &= \frac{c_{ps}}{r^2},
& \phi_{pp}(r) &= -\frac{\tanh\!\big(d_{pp}(1-r)\big)+e_{pp}}{r},
\end{align*}
with $a_{ss}=2, b_{sp}=0.5, c_{ps}=0.5, d_{pp}=4, e_{pp}=0.5$, and a common cutoff $r_c=0.1$. In the terminal regime, the agents share a nonzero common velocity, i.e. $\dot{\bx}_i \to \bv_* \neq \zero$, while their relative positions remain approximately fixed. Each kernel is learned independently using a $\rho_T^{\idxcl_1,\idxcl_2}$-adaptive B-spline basis with refinement tolerance $P^{\mathrm{tol}}=0.008$.

This example represents the well-conditioned branch of the semi-static regime. At the learned terminal state, the first-order equation gives
\[
    \F_{\phivec^E}(\bX_T)=\bV_T,
    \qquad
    \bV_T=[\bv_T;\ldots;\bv_T]\in\R^{Nd},
\]
so the right-hand side of the learning equation is nonzero. Thus the problem falls into the nonzero-right-hand-side regime: the kernels are identified directly in the corresponding $\rho_T$-weighted norms, without an angular comparison or a post-processing scaling step.

The two panels of Figure~\ref{fig:MS-flock_combined} test the learned dynamics at the pattern level. Starting from the flocking configuration at time $T$ (observed terminal configuration), the learned interaction preserves the predator--prey flock; starting from two new initial configurations, it generates trajectories that closely match those of the true dynamics. Figure~\ref{fig:MS-flock_phi_comp} compares the four learned kernels with the true kernels and shows accurate recovery on the supports of the corresponding pairwise-distance distributions. Additional tests on the dependence on $N$ and robustness to observational noise are reported in Figures~\ref{fig:MS-flock_rhoT} and~\ref{fig:MS-flock_noise}.

\begin{figure}[H]
    \centering
    \includegraphics[width=\linewidth]{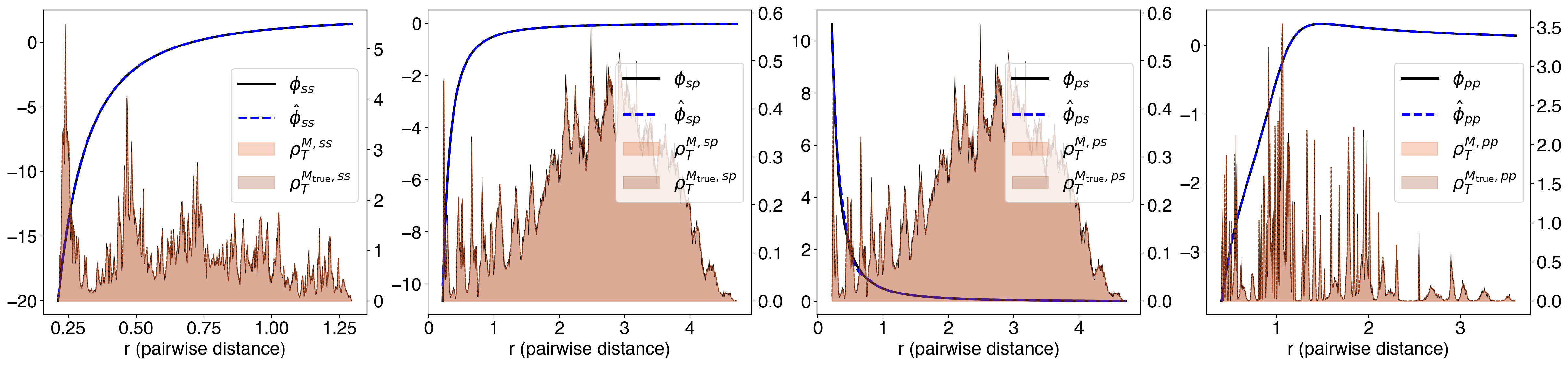}
    \caption{True kernels $\phi_{\idxcl_1,\idxcl_2}$ and learned kernels
    $\hat\phi_{\idxcl_1,\idxcl_2}$ for each directed species pair. The
    background measures show $\rho_T^{M,\idxcl_1,\idxcl_2}$ and
    $\rho_T^{M_{\rm true},\idxcl_1,\idxcl_2}$.}
    \label{fig:MS-flock_phi_comp}
\end{figure}
Quantitative results are summarized in Table~\ref{tab:MS-flocking}.
Since the learning equation has a nonzero right-hand side in this example,
the kernel errors are reported directly in the corresponding
$\rho_T^{\idxcl_1,\idxcl_2}$-weighted norms. The learned dynamics also produce a coherent flock, as reflected by the flocking score $I_{\mathrm{flock}}$ computed from trajectories evolved under the learned interaction.
\begin{table}[H]
\centering
\begin{tabular}{c c c c c | c c} 
\toprule
& $\rho_T^{ss}$ err. & $\rho_T^{sp}$ err. & $\rho_T^{ps}$ err. & $\rho_T^{pp}$ err. & $I_{\mathrm{flock}}$ (Mean) & $I_{\mathrm{flock}}$ (Std) \\
\midrule
Mean 
& $0.041$ & $0.026$  & $0.005$ & $0.005$ 
& $0.933$ & $0.101$ \\ 
Std  
& $0.011$ & $0.011$ & $0.002$ & $0.002$ 
& $0.005$ & $0.010$ \\
\bottomrule
\end{tabular}
\caption{Predator--prey flocking ($N_s=28$, $N_p=12$, $M=250$): relative kernel errors per type-pair, measured in the weighted norm $\|\cdot\|_{\rho_T^{k_1,k_2}}$, and the flocking score $I_{\mathrm{flock}}$, following the reporting convention of Section~\ref{sec:example}.}
\label{tab:MS-flocking}
\end{table}
This example shows that the conditioning of the learning problem is determined not only by the interaction laws but also by the type of collective pattern observed in the data. The static predator--prey pattern leads to a scale-ambiguous problem, whereas the moving predator--prey flock has a nonzero right-hand side and therefore fixes the scaling directly. This provides the first well-conditioned semi-static example before the second-order flocking cases below, where the limiting acceleration vanishes and the scaling ambiguity reappears.
\subsubsection{Second-order Fish Flocking}\label{sec:fish-flocking}
We turn to the second-order self-propelling particle model in~\cite{PhysRevLett.96.104302}. Each agent carries a self-propulsion and friction force together with a generalized Morse interaction; in the notation of~\eqref{eq:second_order} the alignment kernel is $\phi^A\equiv0$ and only the potential kernel $\phi^E$ is learned, with environmental force
\begin{equation}
    \f(\bx_i, \bv_i) = \alpha \bv_i - \beta \|\bv_i\|^2 \bv_i,
    \label{eq:morse_force}
\end{equation}
which drives the speed toward $\sqrt{\alpha/\beta}$, and
\begin{equation}
    \phi^E(r) =
\begin{cases}
\dfrac{1}{r}\!\left( \dfrac{C_a}{\ell_a} e^{-r/\ell_a}
                   - \dfrac{C_r}{\ell_r} e^{-r/\ell_r} \right), & r \ge r_c, \\[8pt]
\dfrac{1}{r_c}\!\left( \dfrac{C_a}{\ell_a} e^{-r_c/\ell_a}
                     - \dfrac{C_r}{\ell_r} e^{-r_c/\ell_r} \right), & r < r_c,
\end{cases}\label{eq:morse}
\end{equation}
with $C_a=1$, $\ell_a=1$, $C_r=0.6$, $\ell_r=0.5$, $\alpha=1$, $\beta=0.5$, and cutoff $r_c=0.01$. This same force and kernel are reused in the milling examples of Section~\ref{sec:ring_milling}, with only $C_r$ and the initial velocities changed. The parameters place the system in the catastrophic region of the H-stability diagram~\cite{PhysRevLett.96.104302}, where random initial velocities typically produce milling. Here we instead draw $\mu^{\bv}$ uniform on $[0,1]^d$, biasing the agents toward a common direction and selecting a flocking branch: they align and translate together at the characteristic speed with a rigid configuration.
\begin{figure}[H]
    \centering
        \begin{subfigure}[t]{0.46\linewidth}
        \centering
        \includegraphics[width=\linewidth]{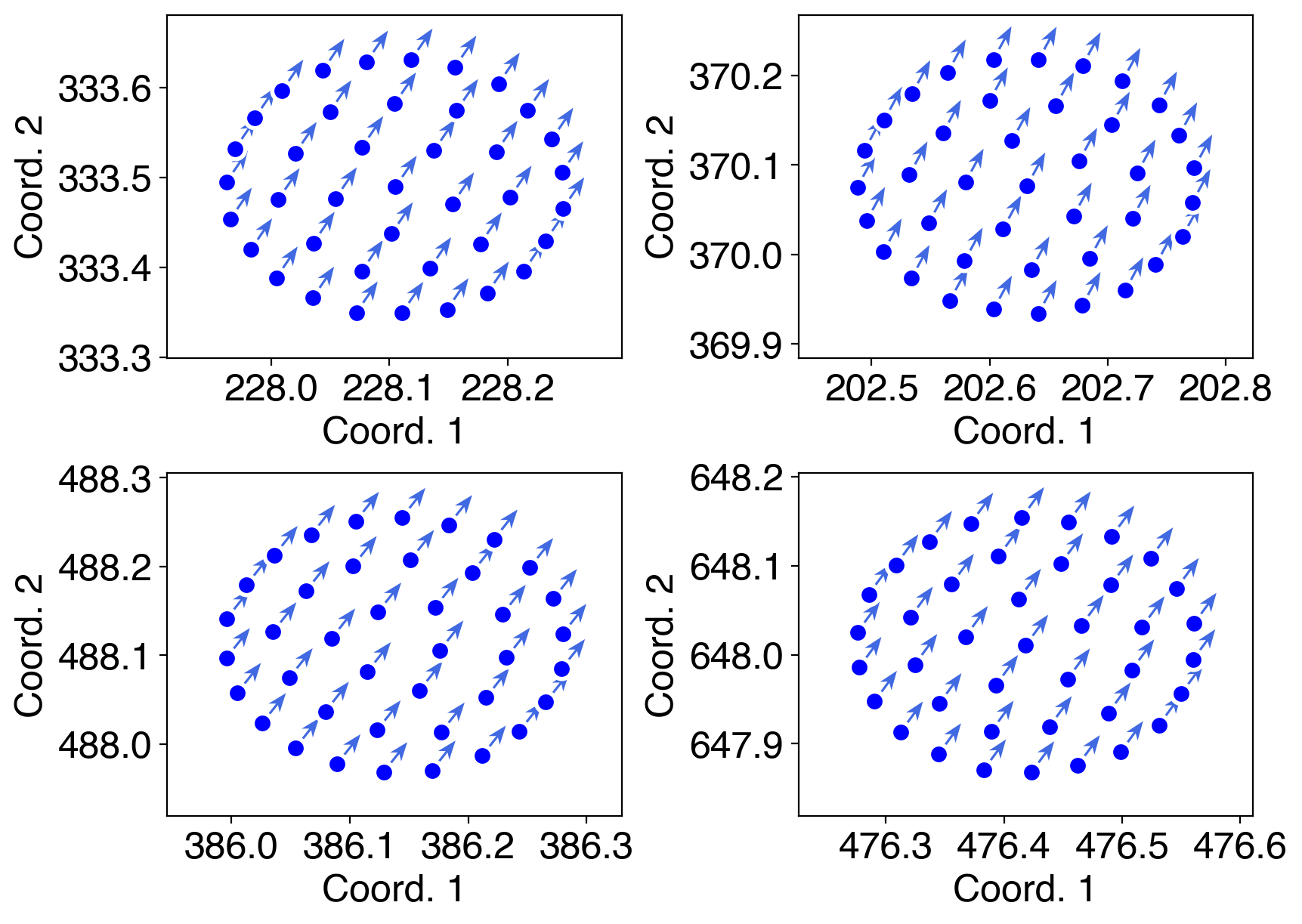}
        \caption{Flock evolved under $\hat\phi^E$. }
        \label{fig:fish-flock_traj_comp}
    \end{subfigure}
    \hfill
        \begin{subfigure}[t]{0.46\linewidth}
        \centering
        \includegraphics[width=\linewidth]{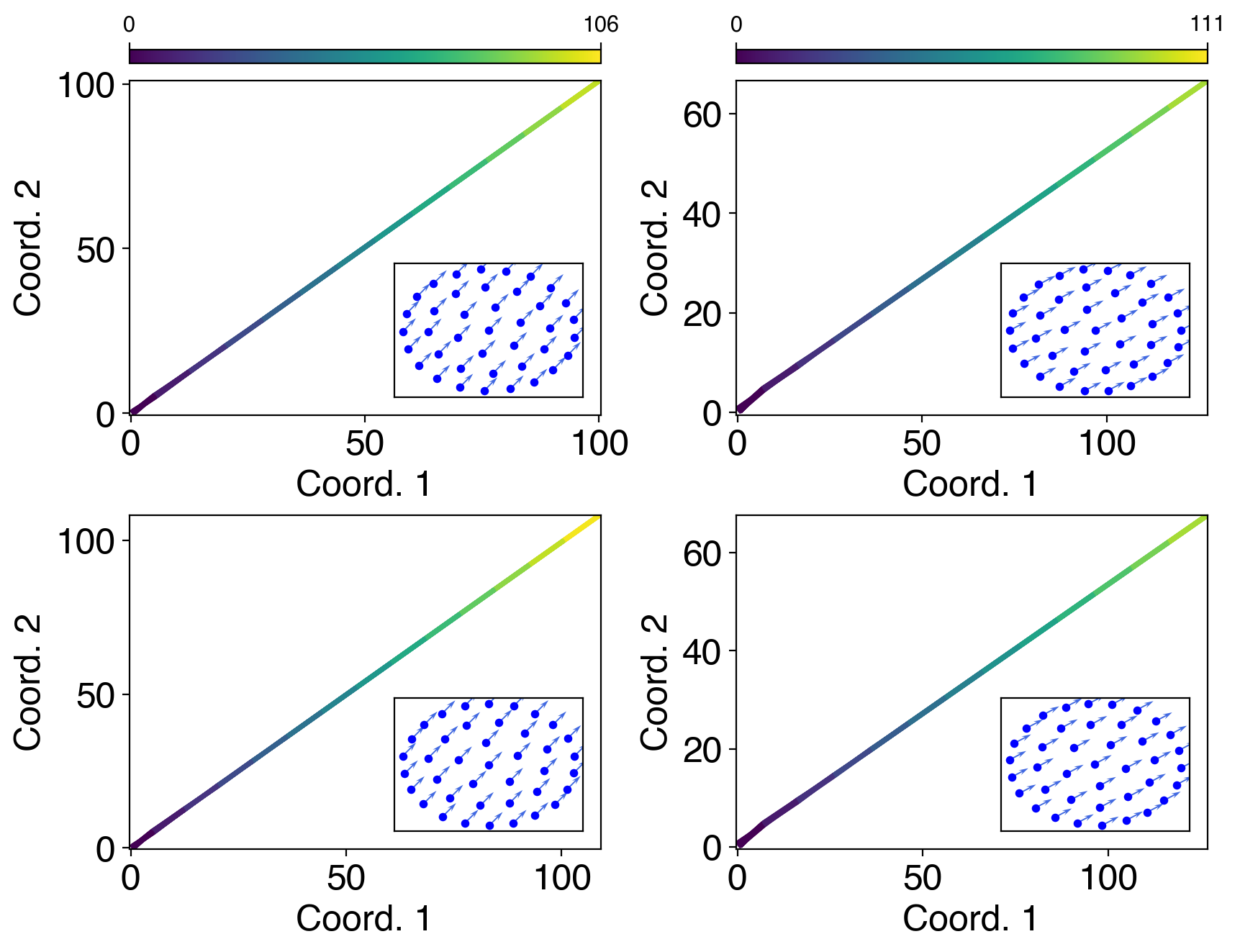}
        \caption{True (top) vs.\ learned $\hat c_*\hat\phi^E$ (bottom).
    }
          \label{fig:fish-flock_ic_traj_comp}
    \end{subfigure}
   \caption{Fish flocking: (left) observed flocking state evolved under $\hat\phi^E$;
(right) trajectories from two initial conditions, true (top) vs.\ learned
$\hat c_*\hat\phi^E$ (bottom), final configurations inset.}
   \label{fig:fish-flock_combined}
\end{figure}
This flocking state is a second-order flocking state, so its identifiability is opposite to the first-order case. The data entering the second-order learning problem~\eqref{eq:loss_2nd} are accelerations: in the flocking state $\dot\bv_i\to\zero$, and because the speed settles at $\|\bv_i\|^2=\alpha/\beta$, the self-propulsion force vanishes, $\f\equiv\zero$, on the data. The force is not absent but evaluated at its own zero set, so the right-hand side collapses to $\F_{\phi^E}(\bX_T)=\zero$. Where the first-order moving flock saw a nonzero
$\dot\bx_i=\bv_*$ and was well-posed, the second-order flock therefore
falls back into the scale-ambiguous zero-right-hand-side regime:
$\phi^E$ is identified up to
$\phi^E \approx c_*\hat\phi^E$,
with $c_*$ recovered by the first-stopping-time procedure. We use a degree-$1$ B-spline basis with $P^{\mathrm{tol}}=0.01$.

As in the previous cases, the two panels of Figure~\ref{fig:fish-flock_combined} verify that $\hat\phi^E$ preserves the observed flocking state and that the rescaled $\hat c_*\hat\phi^E$ reproduces the trajectories from fresh initial conditions; the normalized kernel comparison is in Figure~\ref{fig:fish-flock_phi_comp}. Since $\rho_T$ is concentrated on a narrow interval, the $N$-dependence (Appendix, Figure~\ref{fig:fish-flock_rhoT}) mirrors the support-coverage behavior of the static examples: small $N$ leaves gaps in the observed distances, while larger $N$ stabilizes the recovery on $\supp{\rho_T}$. Scaling recovery and noise robustness are reported in the Appendix (Figures~\ref{fig:fish-flock_C_comp} and~\ref{fig:fish-flock_noise}).
\begin{figure}[H]
    \centering
    \includegraphics[width=\linewidth]{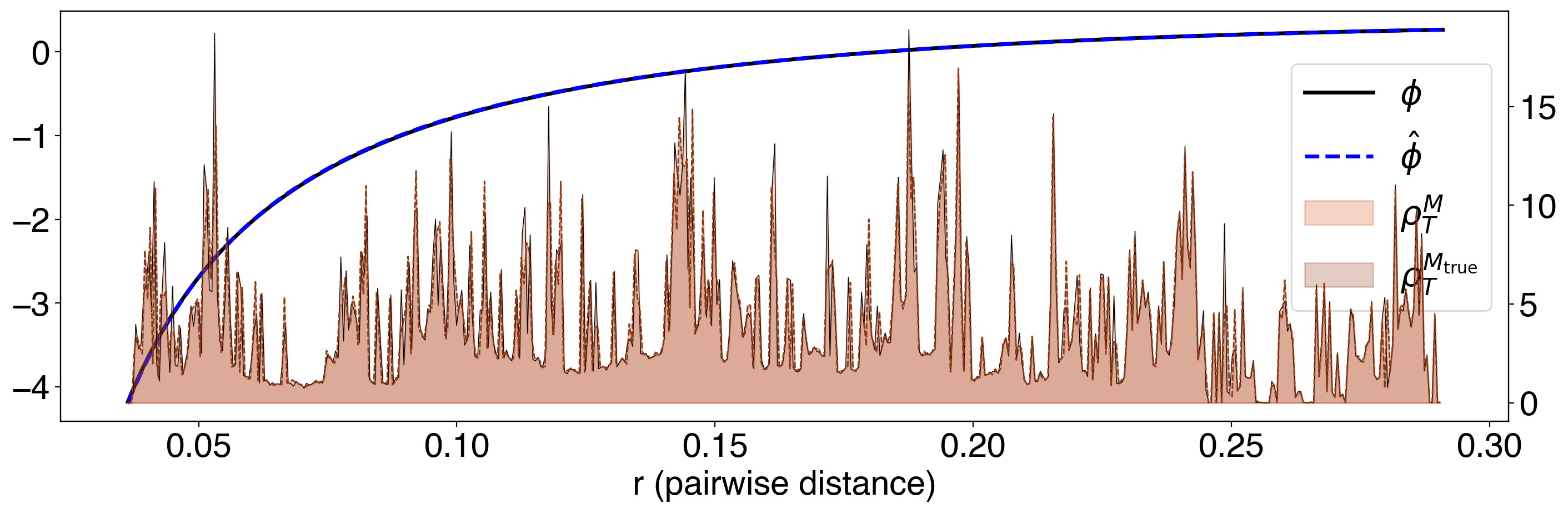}
    \caption{Fish flocking: normalized $\phi^E$ and $\hat\phi^E$; background
$\rho_T^{M}$ vs.\ $\rho_T^{M_{\text{true}}}$.}
    \label{fig:fish-flock_phi_comp}
\end{figure}
Table~\ref{tab:fish-flocking} reports $\theta$, the recovered $\hat c_*$, and the flocking score $I_{\mathrm{flock}}$~\eqref{eq:Iflock} evaluated under $\hat c_*\hat\phi^E$. The near-unit $I_{\mathrm{flock}}$ confirms that the learned dynamics reproduce the coherent translating flock. The example makes the semi-static dichotomy precise: a moving flock is not automatically well-posed. What matters is the right-hand side of the learning equation, and in second-order flocking the measured acceleration vanishes, so the scaling ambiguity of the static regime reappears even though the emergent pattern is a coherent flock.
\begin{table}[H]
\centering
\begin{tabular}{ c c c c c | c c} 
\toprule
 & $\theta$ & $c_*$ & $\hat c_*$ (Mean) & $\hat c_*$ (Std) 
 & $I_{\mathrm{flock}}$ (Mean) & $I_{\mathrm{flock}}$ (Std) \\ \hline
 Mean & $1.41\times 10^{-4}$ & $11.81$ & $11.77$ & $3.32\times 10^{-2}$ 
      & $0.9999999$ & $1.22\times 10^{-7}$ \\ 
 Std  & $7.27\times 10^{-9}$ & $0.14$  & $0.27$  & $1.69\times 10^{-2}$ 
      & $2.33\times 10^{-8}$ & $3.67\times 10^{-7}$ \\ 
\bottomrule
\end{tabular}
\caption{Fish flocking ($N=40$, $M=250$): angle $\theta$, true and recovered
scaling $c_*,\hat c_*$, and flocking score $I_{\mathrm{flock}}$, following the
reporting convention of Section~\ref{sec:example}.}
\label{tab:fish-flocking}
\end{table}
\subsubsection{Second-order Flocking with Anticipation Dynamics}\label{sec:anticipate}
Our final semi-static example is the anticipation dynamics proposed in~\cite{AD2021}, in which agents react to their neighbors' anticipated positions $\bx_i^\tau = \bx_i + \tau \bv_i$. It is a second-order variant of~\eqref{eq:second_order} in which the single-variable energy kernel is replaced by a two-variable kernel $\phi^E(r,\xi)$, with $\xi := (\bx_j-\bx_i)^\top (\bv_j-\bv_i)$. With $\f \equiv \zero$, and
\[
\dot\bv_i = \sum_{j \neq i}\frac{\tau}{N}
\Big(\phi^E\big(\norm{\bx_j - \bx_i}, \xi\big)(\bx_j - \bx_i)
   + \phi^A(\norm{\bx_j - \bx_i})(\bv_j - \bv_i)\Big),
\]
where, crucially, both kernels are induced by a single radial potential $U$, i.e.
\[
\phi^A(r) = \frac{U'(r)}{r},
\qquad
\phi^E(r,\xi) = -\frac{U'(r)}{r^3}\,\xi + \frac{U''(r)}{r^2}\,\xi
               + \frac{1}{\tau}\frac{U'(r)}{r}.
\]
The $\xi$-dependence comes from the radial and tangential parts of the Hessian $D^2U$ in the $\tau$-expansion of the dynamics~\cite{AD2021}, so $\phi^E$ and $\phi^A$ are tied through $U$, a coupling which the authors use to prove velocity alignment and spatial concentration.

We take the cutoff-regularized convex potential
\[
U(r) = \frac{r^2}{2} - \ln r + C, \qquad r \ge r_c, \quad r_c = 0.1,
\]
which gives
\[
\phi^A(r) = 1 - \frac{1}{r^2},
\qquad
\phi^E(r,\xi) = \frac{2}{r^4}\,\xi + \frac{1}{\tau}\Big(1 - \frac{1}{r^2}\Big),
\]
with anticipation parameter $\tau = 40$. Initial positions are drawn from $\mu^{\bx}$ uniform on $[0,1]^d$, and initial velocities from $\mu^{\bv}$ uniform on $[0,1]^d$; the positive velocity components bias the agents toward a common direction, so the system selects a flocking branch in which $\dot\bv_i \to \zero$, $\bv_i \to \bv_*$, and the configuration is rigid. We learn on a degree-$1$ B-spline basis adapted to the pairwise-distance distribution $\rho_T$, with $P^{\mathrm{tol}} = 0.01$.

This example sharpens the dichotomy of the previous two. On flocking data all velocities coincide, so $\bv_j - \bv_i = \zero$ and hence $\xi = 0$. Two things follow. First, the alignment term $\phi^A(r_{i,j})(\bv_j - \bv_i)$ is identically zero, so $\phi^A$ contributes nothing to the loss and cannot be identified directly from flocking data. Second, the Hessian part of $\phi^E$ vanishes with $\xi$, and since the force carries the prefactor $\tau$, the observed contribution reduces to $\tau\,\phi^E(r,0) = U'(r)/r$, which is exactly the form of $\phi^A$.  On the data the two kernels therefore collapse onto the same scalar quantity $U'(r)/r$, and the learning problem reduces to $\F^E_{\varphi}(\bX_T) = \zero$. As in fish flocking, this is the scale-ambiguous zero-right-hand-side
regime, with $\varphi := U'(r)/r$ identified up to
$\varphi \approx c_*\hat\varphi$ and $c_*$ recovered by the
first-stopping-time procedure.

The shared-potential structure is what makes the otherwise unlearnable $\phi^A$ recoverable. Rather than fit the two kernels separately, we learn the single scalar function $\varphi = U'(r)/r$ and integrate it back to the potential,
\[
\hat U(r) = \int_{r_c}^{r} s\,\hat\varphi(s)\, ds,
\]
recovered up to the same multiplicative scaling $c_*$ and an arbitrary additive constant. Differentiating the B-spline representation of $\hat\varphi$ then gives $\hat U', \hat U''$, and hence the reconstructed kernels $\hat\phi^A$ and the two-variable function $\hat\phi^E$, which inherit the scaling $c_*$. Figure~\ref{fig:anticipate_kernels_comp} compares these with the truth: the one-dimensional $\hat\phi^A$ and $\hat U$ agree on the observed $r$-support, while the $\xi$-dependence of the surface $\hat\phi^E$ is supplied by the structure
rather than observed, since the data lie on the slice $\xi=0$.
\begin{figure}[H]
    \centering
    \begin{subfigure}[c]{0.34\linewidth}
        \centering
        \includegraphics[width=\linewidth]{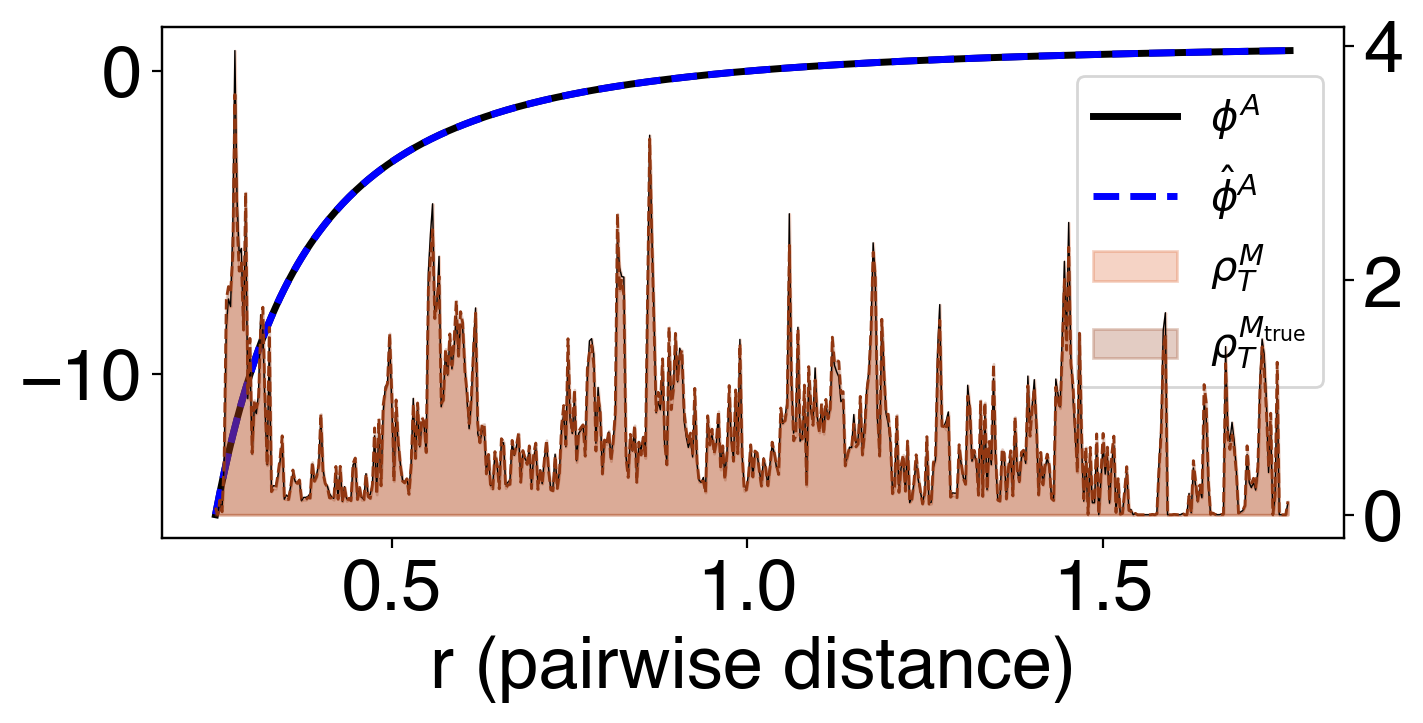}\\
        \includegraphics[width=\linewidth]{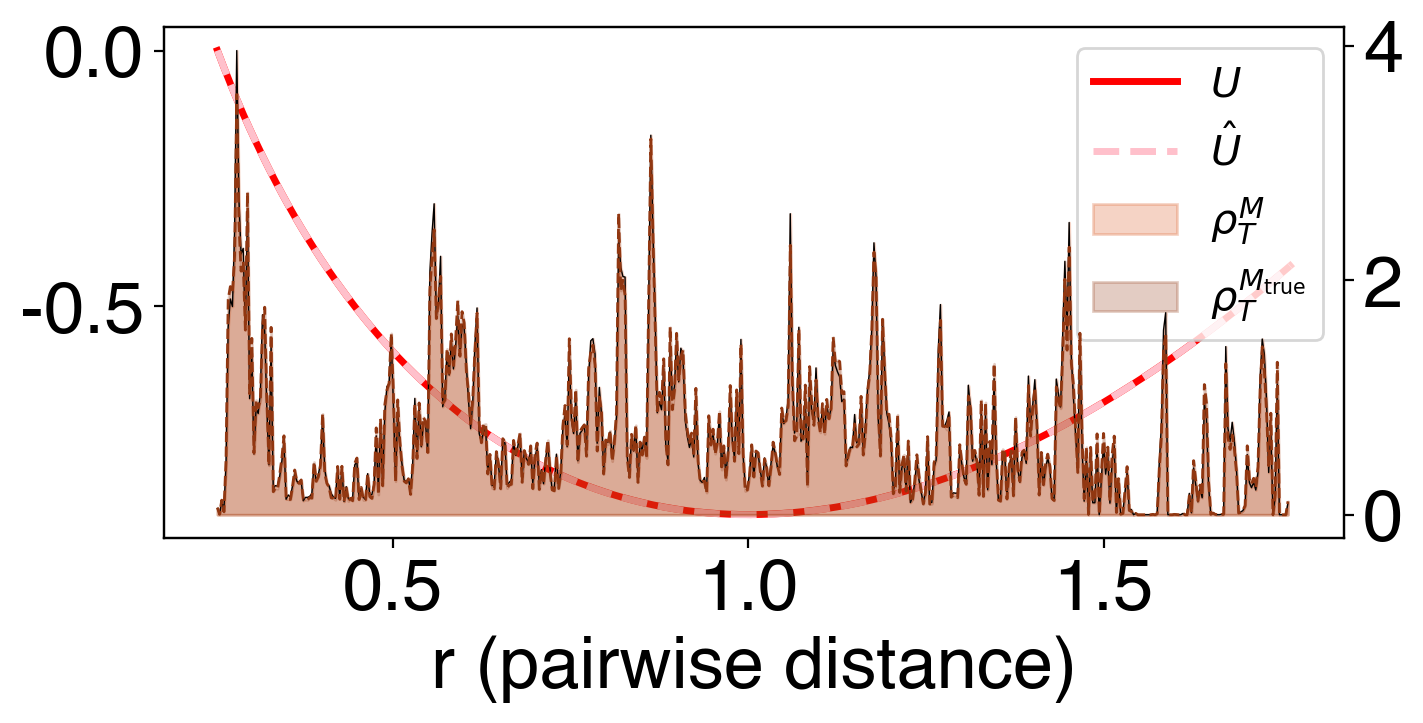}
    \end{subfigure}
    \hfill
    \begin{subfigure}[c]{0.64\linewidth}
        \centering
        \includegraphics[width=\linewidth]{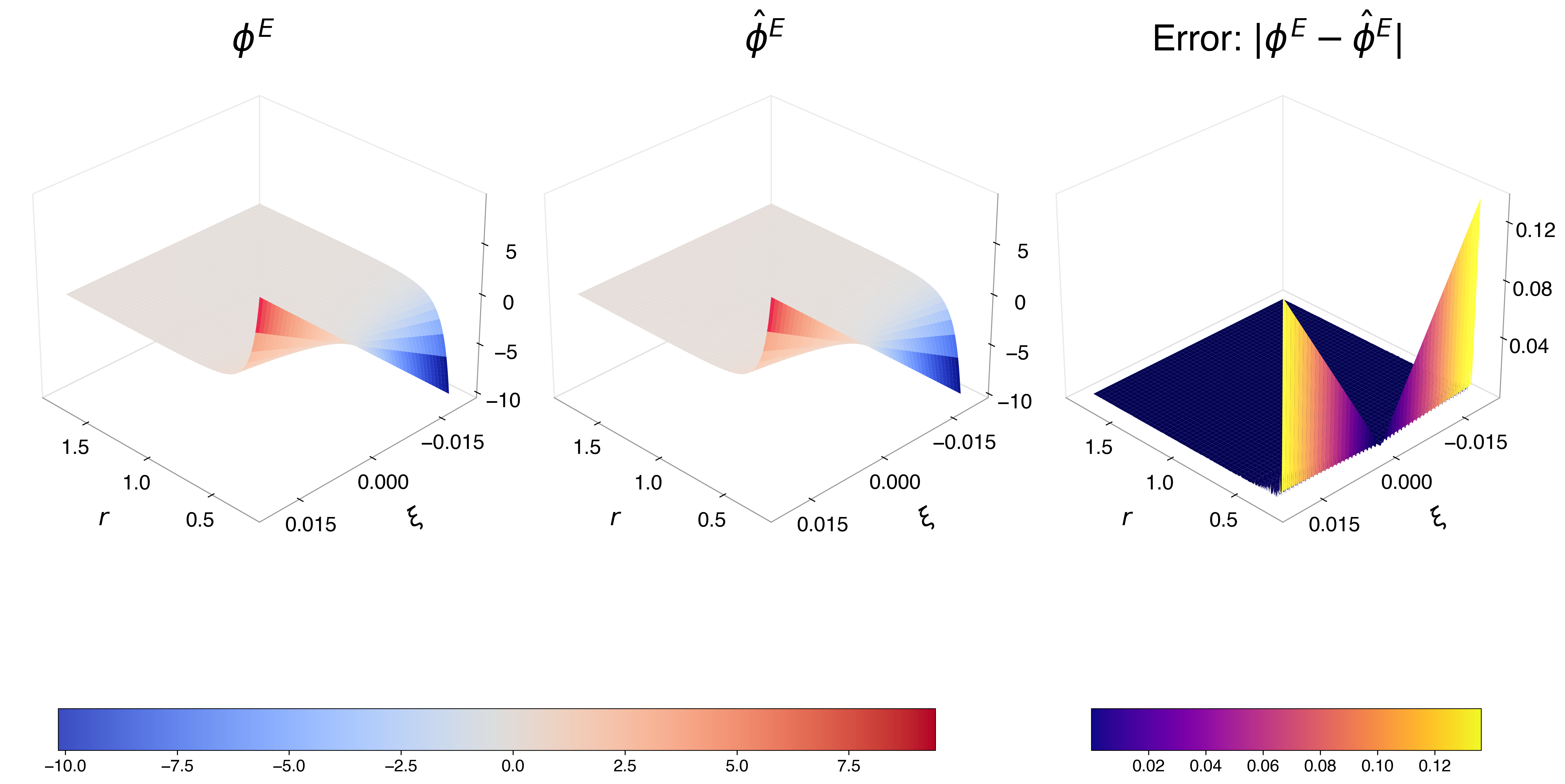}
    \end{subfigure}
    \caption{Anticipated flocking, normalized reconstructions: $\phi^A$ and $\hat\phi^A$ (top left), $U$ and $\hat U$ (bottom left), and the two-variable $\phi^E$ and $\hat\phi^E$ (right). Background: pairwise-distance measures $\rho_T^{M}$ and $\rho_T^{M_{\rm true}}$.}
    \label{fig:anticipate_kernels_comp}
\end{figure}
As in the previous cases, the two panels of Figure~\ref{fig:an-flock_combined} verify that the reconstructed kernels preserve the observed flocking state and that their rescaled form reproduces the trajectories from fresh initial conditions. The $N$-dependence matches fish flocking: the rigid flock gives a compactly supported $\rho_T$, so small $N$ leaves gaps and larger $N$ stabilizes the recovery (Appendix, Figure~\ref{fig:anticipate_rhoT}). Scaling recovery and noise robustness are in the Appendix (Figures~\ref{fig:anticipate_C_comp} and~\ref{fig:anticipate_noise}).
\begin{figure}[H]
    \centering
    \begin{subfigure}[t]{0.46\linewidth}
        \centering
        \includegraphics[width=\linewidth]{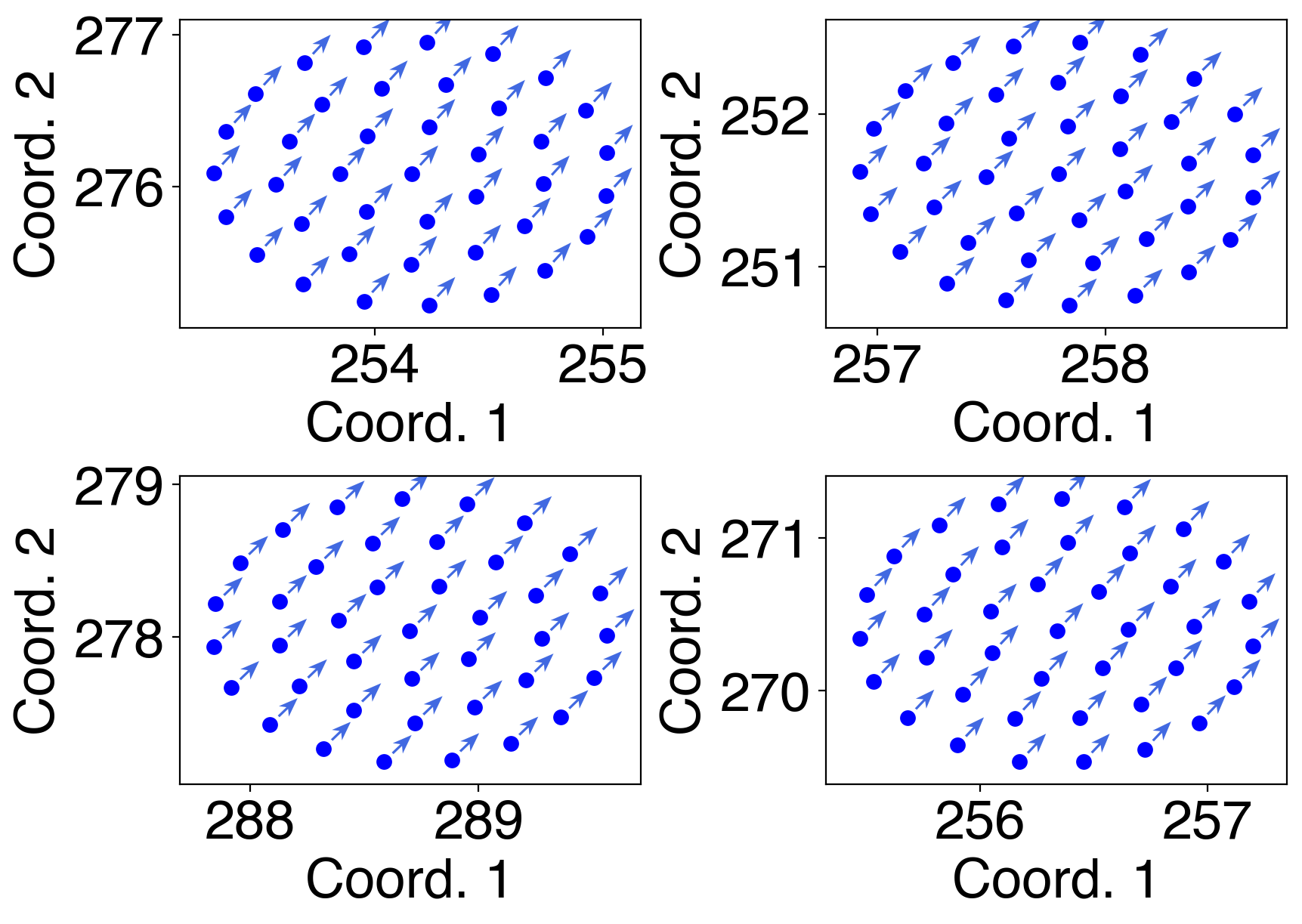}
        \caption{Flock evolved under the reconstructed kernels.}
        \label{fig:anticipate_traj_comp}
    \end{subfigure}
    \hfill
    \begin{subfigure}[t]{0.46\linewidth}
        \centering
        \includegraphics[width=\linewidth]{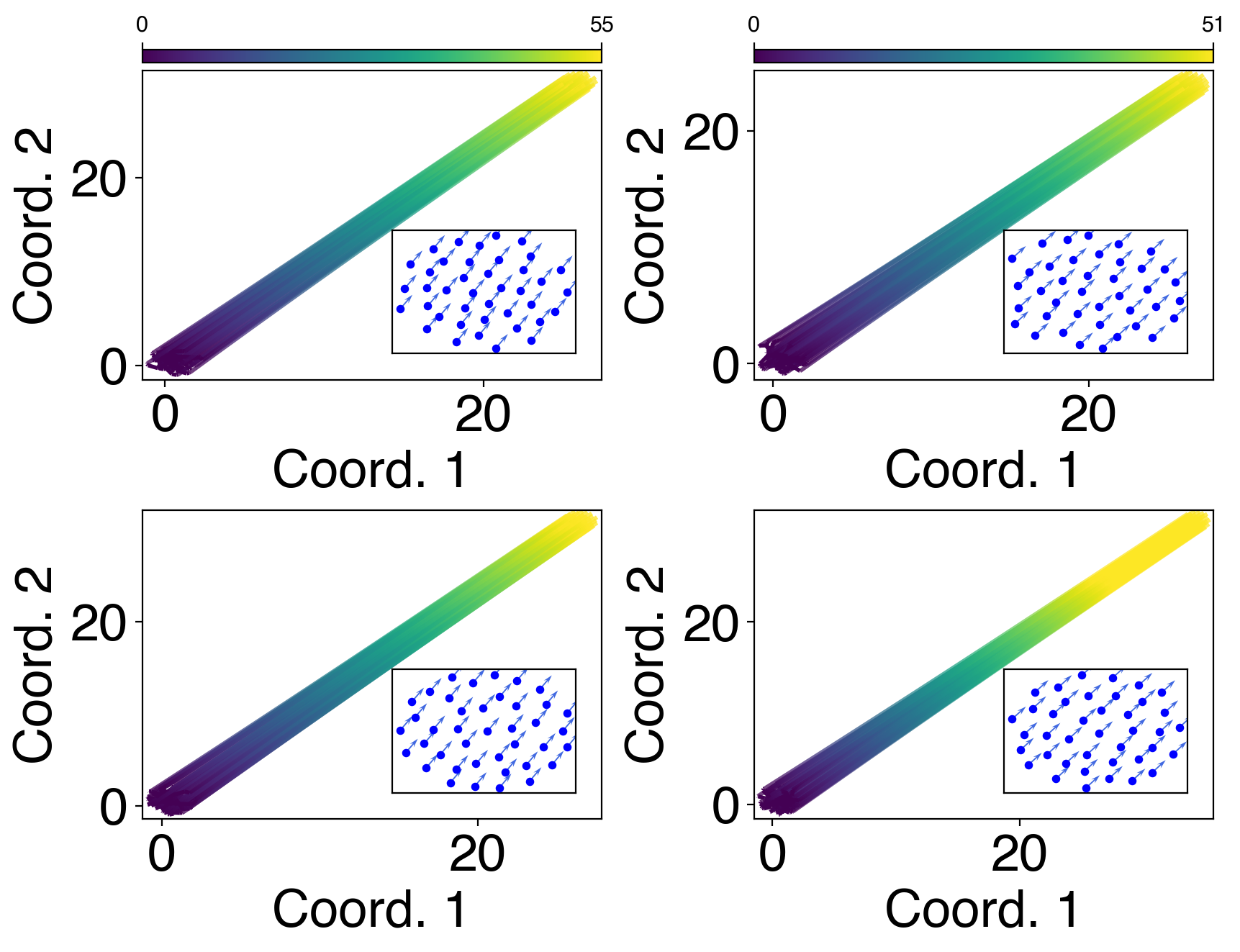}
        \caption{True (top) vs.\ rescaled reconstructed (bottom).}
        \label{fig:an_ic_traj_comp}
    \end{subfigure}
    \caption{Anticipated flocking: (left) observed flock evolved under the reconstructed kernels; (right) trajectories from two initial conditions, true (top) vs.\ rescaled reconstructed (bottom), final configurations inset.}
    \label{fig:an-flock_combined}
\end{figure}
Table~\ref{tab:anticipate} reports $\theta$, the recovered $\hat c_*$, and the flocking score $I_{\mathrm{flock}}$~\eqref{eq:Iflock} evaluated under the rescaled reconstructed kernels; the near-unit $I_{\mathrm{flock}}$ confirms a coherent flock. This closes the semi-static progression. Fish flocking showed that a second-order moving flock sends the learning right-hand side to zero, so the scaling ambiguity of the static regime returns. Anticipated flocking goes one step further: the flocking data do not identify the alignment kernel $\phi^A$ at all, and its recovery is possible only because the model ties $\phi^A$ and $\phi^E$ to a common potential $U$. Structure, not data, resolves the un-learnability of $\phi^A$.
\begin{table}[H]
\centering
\begin{tabular}{ c c c c c | c c}
\toprule
 & $\theta$ & $c_*$ & $\hat c_*$ (Mean) & $\hat c_*$ (Std)
 & $I_{\mathrm{flock}}$ (Mean) & $I_{\mathrm{flock}}$ (Std) \\
\midrule
 Mean & $2.45\times 10^{-4}$ & $47.46$ & $48.12$ & $0.43$
      & $1 - 2.00\times 10^{-7}$ & $1.47\times 10^{-6}$ \\
 Std  & $9.73\times 10^{-9}$ & $0.65$  & $0.81$  & $0.12$
      & $1.36\times 10^{-7}$ & $2.16\times 10^{-6}$ \\
\bottomrule
\end{tabular}
\caption{Anticipated flocking ($N=40$, $M=250$): angle $\theta$, true and
recovered scaling $c_*,\hat c_*$, and flocking score $I_{\mathrm{flock}}$,
following the reporting convention of Section~\ref{sec:example}.}
\label{tab:anticipate}
\end{table}
\subsection{Quasi-static Patterns from Second-Order Systems}
We close with quasi-static configurations, where positions and velocities never become constant; instead the agents organize into a persistent rotating pattern with frozen pairwise distances. This is the self-propelling particle model of Section~\ref{sec:fish-flocking}, in the same catastrophic regime as studied in~\cite{PhysRevLett.96.104302}, but with a different terminal state, and the contrast is exactly what makes the regime informative. In fish flocking the speed settled at the zero set of the self-propulsion
force, the acceleration vanished, and the learning problem fell back to the
scale-ambiguous zero-right-hand-side regime. In a milling state the agents
keep turning, so the centripetal acceleration $\dot\bv_i$ stays nonzero on
the data. The right-hand side of~\eqref{eq:loss_2nd} is therefore nonzero,
placing the problem in the well-posed nonzero-right-hand-side regime, and we
report errors in the $\rho_T$-weighted norm directly, with no scaling or sign
recovery.
\begin{figure}[H]
    \centering
    \begin{subfigure}[t]{0.46\linewidth}
        \centering
        \includegraphics[width=\linewidth]{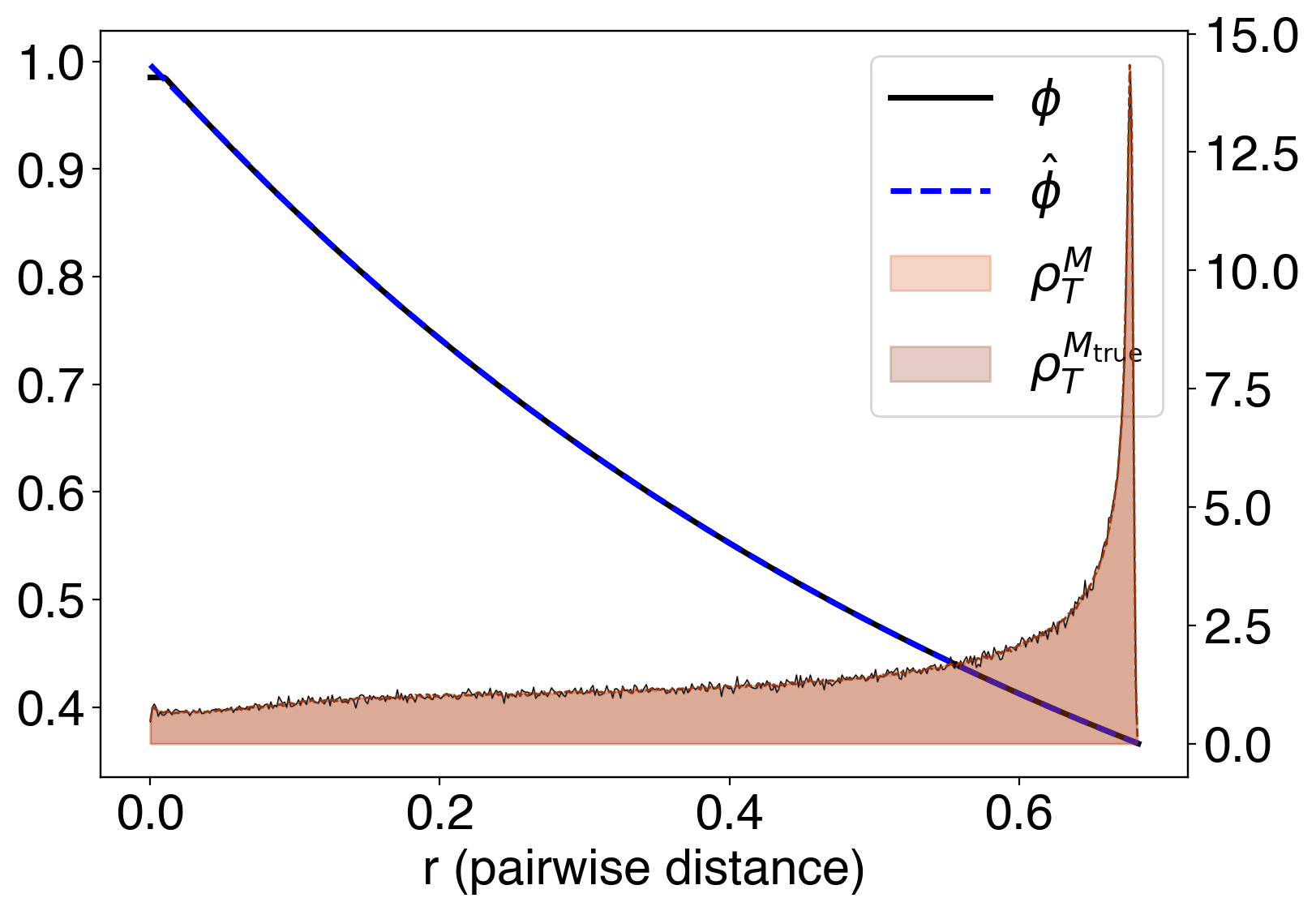}
        \caption{Ring milling.}
        \label{fig:fish-ring_phi_comp}
    \end{subfigure}
    \hfill
    \begin{subfigure}[t]{0.46\linewidth}
        \centering
        \includegraphics[width=\linewidth]{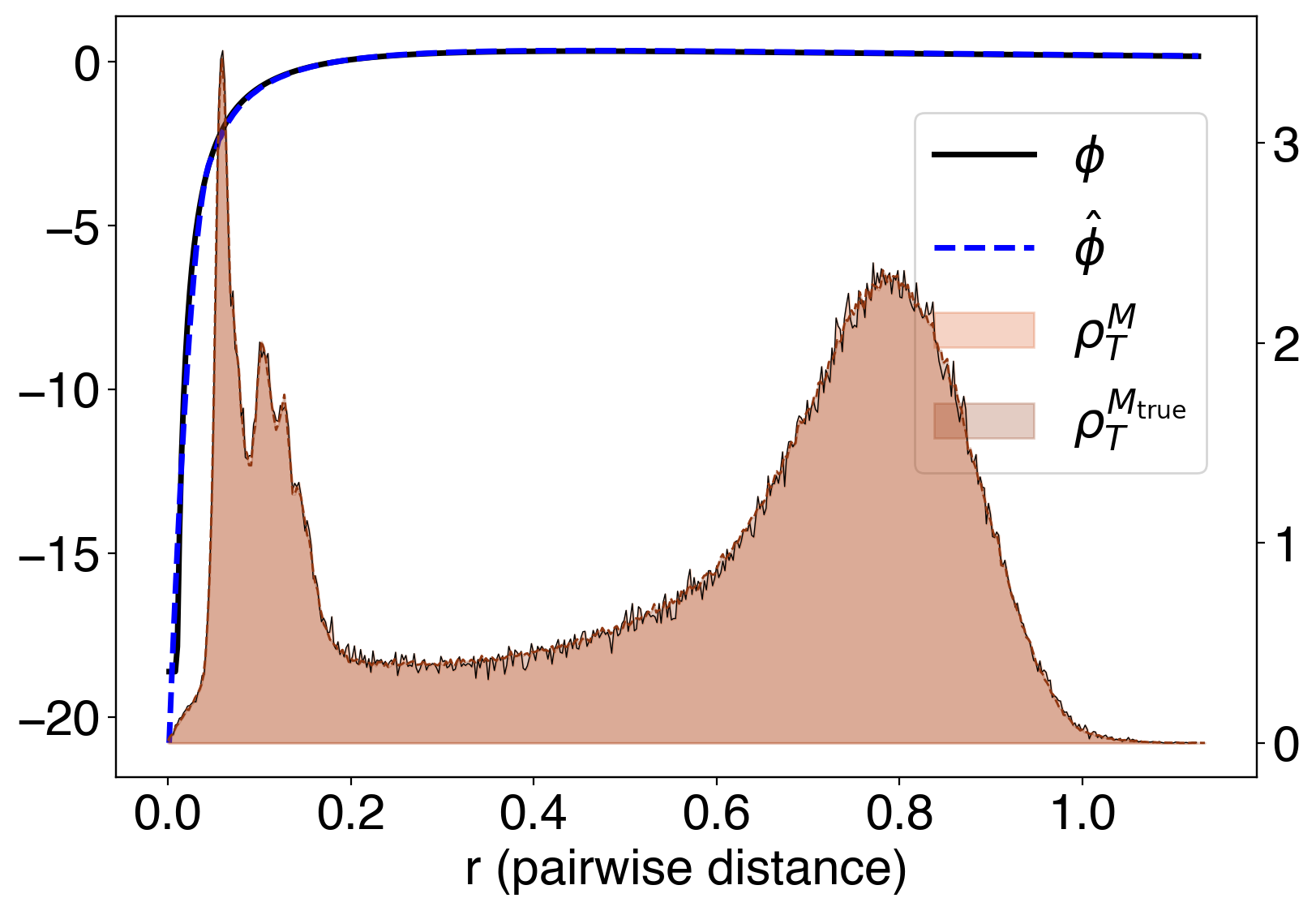}
        \caption{Double milling.}
        \label{fig:double-mill_phi_comp}
    \end{subfigure}
    \caption{Quasi-static milling: true and learned kernels $\phi^E$ and
    $\hat\phi^E$. Background $\rho_T^{M}$ vs.\ $\rho_T^{M_{\rm true}}$.}
    \label{fig:milling_phi_comp}
\end{figure}
We reuse the self-propulsion force~\eqref{eq:morse_force} and the Morse form~\eqref{eq:morse}, keeping $(\alpha,\beta,\ell_a,\ell_r,C_a)$ as in Section~\ref{sec:fish-flocking} and varying only $C_r$ to realize the two canonical milling states of~\cite{PhysRevLett.96.104302}. Where fish flocking used biased initial velocities to pick the flocking branch, here we set the initial velocities to zero and let the self-propulsion generate rotation, selecting a milling branch; initial positions follow the static and semi-static cases. Convergence is assessed by the milling criterion, which is based on the milling score~\eqref{eq:Imill}, with $\mathrm{Tol}=10^{-6}$.

The two subsections below give the single-ring and double-milling cases in turn; their results are summarized in Table~\ref{tab:fish-millings} and Figure~\ref{fig:milling_phi_comp}. Both kernels are recovered accurately, but the double mill has a substantially larger error than the single ring, reflecting its broader, multi-modal $\rho_T$ and the more involved geometry we discuss in Section~\ref{sec:double_milling}.
\begin{table}[H]
\centering
\small
\begin{tabular}{c c c c}
\toprule
Pattern & $\rho_T$ err. & $I_{\mathrm{mill}}$ (Mean) & $I_{\mathrm{mill}}$ (Std) \\
\midrule
Ring milling
 & $(8.88\pm 0.07)\times 10^{-5}$
 & $1 - (4.10\pm 0.19)\times 10^{-7}$
 & $(2.97\pm 0.04)\times 10^{-7}$ \\
Double milling
 & $(6.33\pm 0.66)\times 10^{-2}$
 & $1 - (5.00\pm 0.10)\times 10^{-3}$
 & $(2.72\pm 0.23)\times 10^{-3}$ \\
\bottomrule
\end{tabular}
\caption{Quasi-static milling ($N=40$, $M=250$): kernel error in the weighted norm $\|\cdot\|_{\rho_T}$ and milling score $I_{\mathrm{mill}}$, following the reporting convention of Section~\ref{sec:example}.}
\label{tab:fish-millings}
\end{table}
\subsubsection{Fish Milling: Rings}\label{sec:ring_milling}
The first quasi-static case is the single-ring milling state of~\cite{PhysRevLett.96.104302}, with Morse parameters $C_a=\ell_a=1$, $C_r=0.5$, $\ell_r=0.5$, self-propulsion $\alpha=1$, $\beta=0.5$, and cutoff $r_c=0.01$. With $\ell_r/\ell_a=C_r/C_a=0.5$, the agents settle onto a single ring rotating at the characteristic speed $\sqrt{\alpha/\beta}$ with fixed pairwise distances (Figure~\ref{fig:fish-ring_traj_comp}). They split into two subgroups circulating in opposite senses, but both orbit the common center $\bar\bx=\frac{1}{N}\sum_j\bx_j$.
\begin{figure}[H]
    \centering
    \begin{subfigure}[t]{0.46\linewidth}
        \centering
        \includegraphics[width=\linewidth]{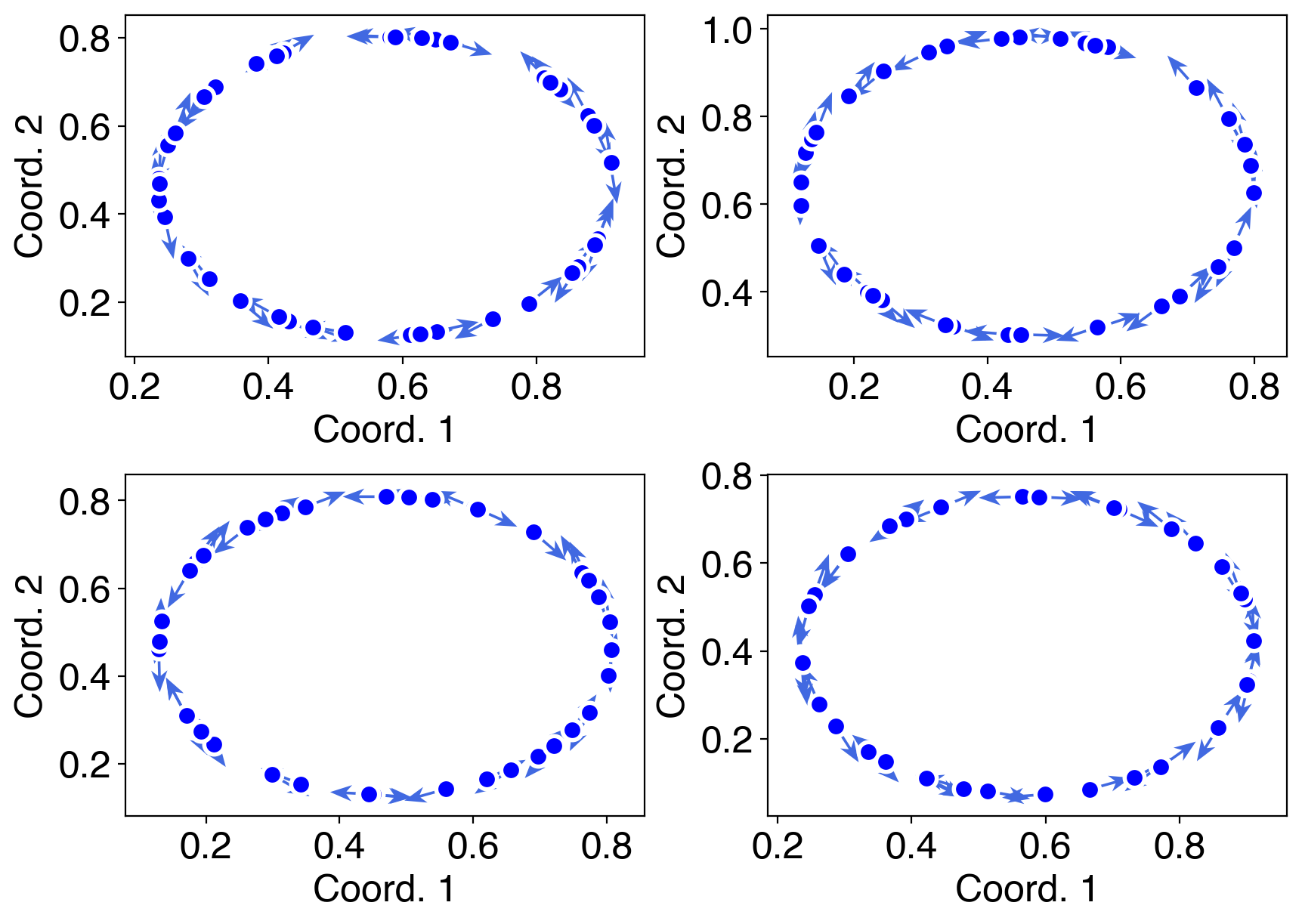}
        \caption{Ring milling evolved under $\hat\phi^E$.}
        \label{fig:fish-ring_traj_comp}
    \end{subfigure}
    \hfill
    \begin{subfigure}[t]{0.46\linewidth}
        \centering
        \includegraphics[width=\linewidth]{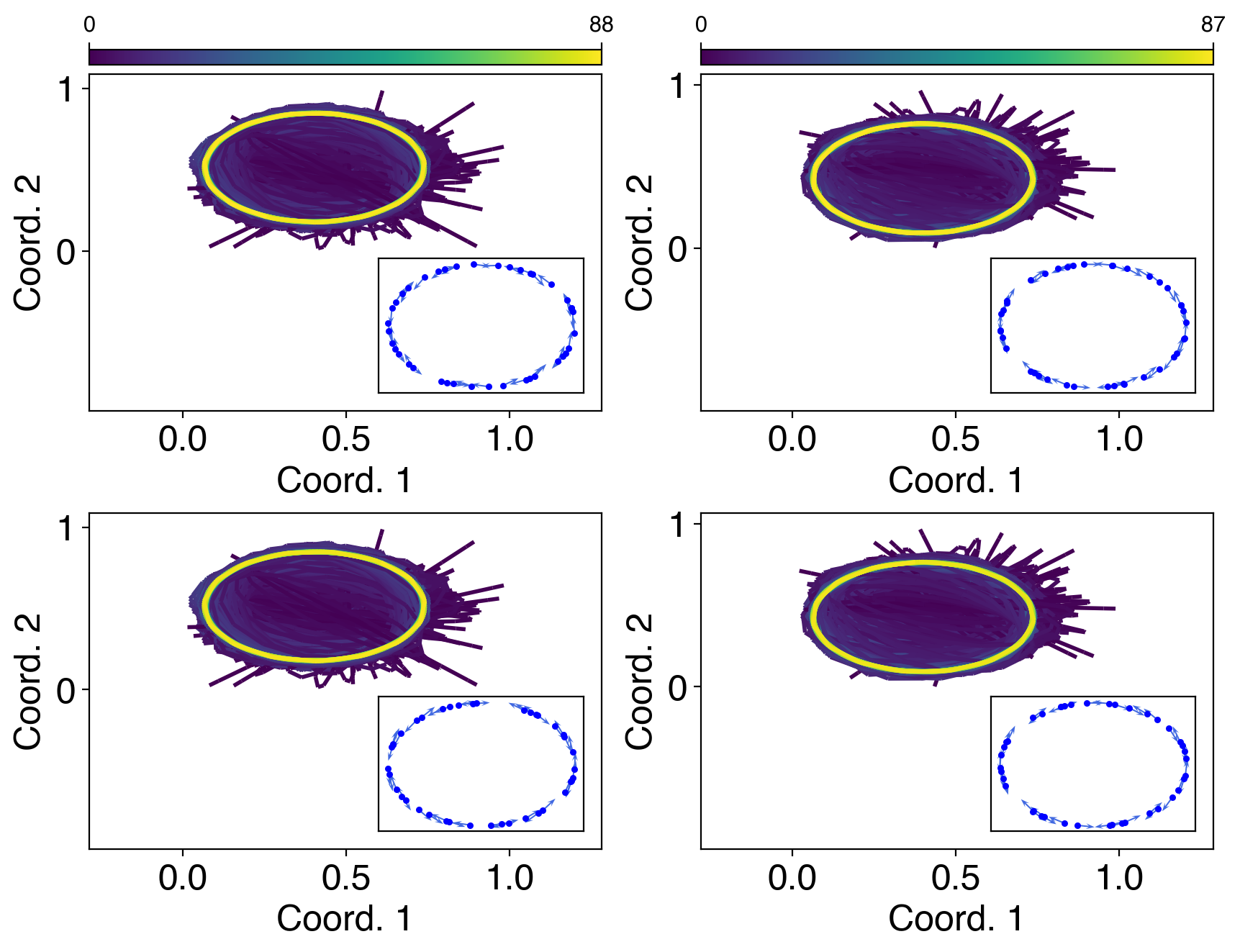}
        \caption{True (top) vs.\ learned $\hat\phi^E$ (bottom).}
        \label{fig:fish-mill-1_ic_traj_comp}
    \end{subfigure}
    \caption{Ring milling: (left) observed milling state evolved under
    $\hat\phi^E$; (right) trajectories from two initial conditions, true (top)
    vs.\ learned (bottom), final configurations inset.}
    \label{fig:fish-ring_combined}
\end{figure}
What distinguishes this regime from every earlier example is that the observed data are positions and velocities $\{\bx_i^m,\bv_i^m\}$ only, with no direct access to the accelerations $\dot\bv_i^m$ on the left-hand side of~\eqref{eq:loss_2nd}. For a rigid rotation, however, $\dot\bv_i$ is the
centripetal acceleration: it points from $\bx_i$ toward $\bar\bx$ with magnitude $\|\bv_i\|^2/\|\bx_i^c\|$, where $\bx_i^c=\bx_i-\bar\bx$. We therefore reconstruct the centripetal acceleration
\begin{equation}\label{eq:centripetal}
\dot\bv_i^m = -\frac{\|\bv_i^m\|^2}{\|\bx_i^{c,m}\|^2}\,\bx_i^{c,m}
\end{equation}
per snapshot and use it as the right-hand side of~\eqref{eq:loss_2nd}. The reconstruction uses only $\bx_i$, $\bar\bx$, and the speed $\|\bv_i\|$, not the sense of circulation, so it applies identically to the clockwise and counter-clockwise subgroups. Since the ring rotates at the characteristic speed, $\f\equiv\zero$ on the
data while $\dot\bv_i\neq\zero$, so, as anticipated in the opening of this
section, the problem lies in the well-posed nonzero-right-hand-side regime:
we report the $\|\cdot\|_{\rho_T}$ error of $\hat\phi^E$ directly, with no
scaling recovery.

As in the previous cases, the two panels of Figure~\ref{fig:fish-ring_combined} verify that $\hat\phi^E$ preserves the observed milling state and reproduces the trajectories from fresh initial conditions. The learned kernel tracks $\phi^E$ on $\supp{\rho_T}$, which here is the narrow band of pairwise distances realized on the ring (Figure~\ref{fig:fish-ring_phi_comp}); Table~\ref{tab:fish-millings} reports a small $\|\cdot\|_{\rho_T}$ error and a near-unit $I_{\mathrm{mill}}$, and noise robustness is in the Appendix (Figure~\ref{fig:fish-ring_noise}). The recovery rests on the rigid-rotation reconstruction, which presumes a single common center: all agents orbit $\bar\bx$. 
\subsubsection{Fish Milling: Double Milling}\label{sec:double_milling}
Keeping all parameters as in Section~\ref{sec:ring_milling} except $C_r=0.6$, the small increase from $0.5$ shifts the system into a double mill: the agents split into two interleaved, counter-rotating subgroups about a common center, each at squared speed $\|\bv_i\|^2=\alpha/\beta$ with rigid relative positions (Figure~\ref{fig:double-milling_combined}). Two opposed circulations are now superposed on the same annulus, so the velocity field is no longer a single coherent rotation: an agent's motion is not a uniform circular orbit about $\bar\bx$, the rigid-rotation reconstruction~\eqref{eq:centripetal} of the ring case no longer holds, and we take the accelerations $\dot\bv_i$ as observed alongside positions and velocities. Recovering the local rotational structure from $\{\bx_i,\bv_i\}$ is a separate geometric problem, outside the scope of the kernel learning studied here. With $\dot\bv_i\neq\zero$, the right-hand side of~\eqref{eq:loss_2nd}
is again nonzero, so the problem lies in the well-posed
nonzero-right-hand-side regime. We report the
$\|\cdot\|_{\rho_T}$ error of $\hat\phi^E$ directly, using a degree-$1$
B-spline basis with $P^{\mathrm{tol}}=0.01$.
\begin{figure}[H]
    \centering
    \begin{subfigure}[t]{0.46\linewidth}
        \centering
        \includegraphics[width=\linewidth]{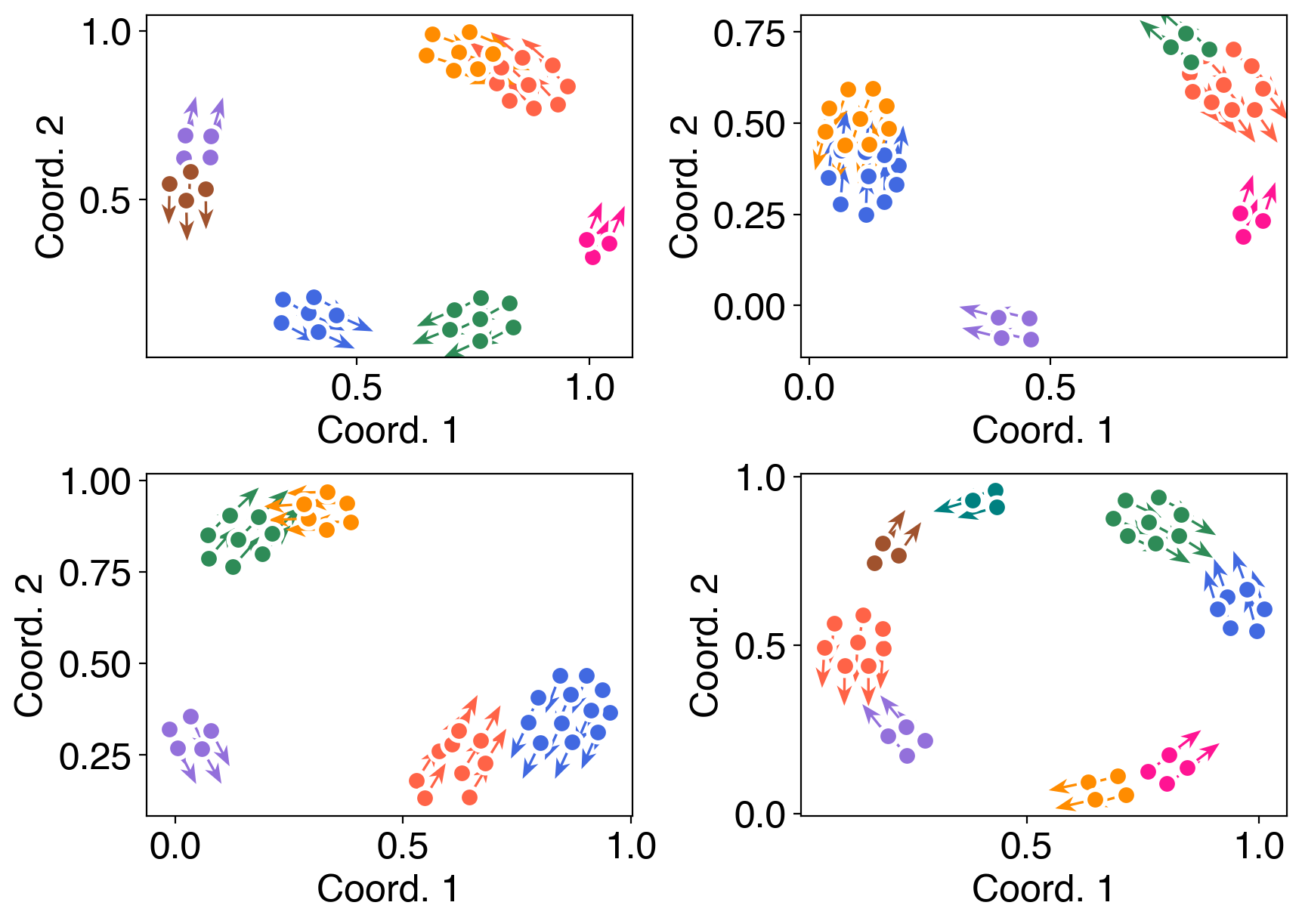}
        \caption{Double mill evolved under $\hat\phi^E$.}
        \label{fig:double-mill_traj_comp}
    \end{subfigure}
    \hfill
    \begin{subfigure}[t]{0.46\linewidth}
        \centering
        \includegraphics[width=\linewidth]{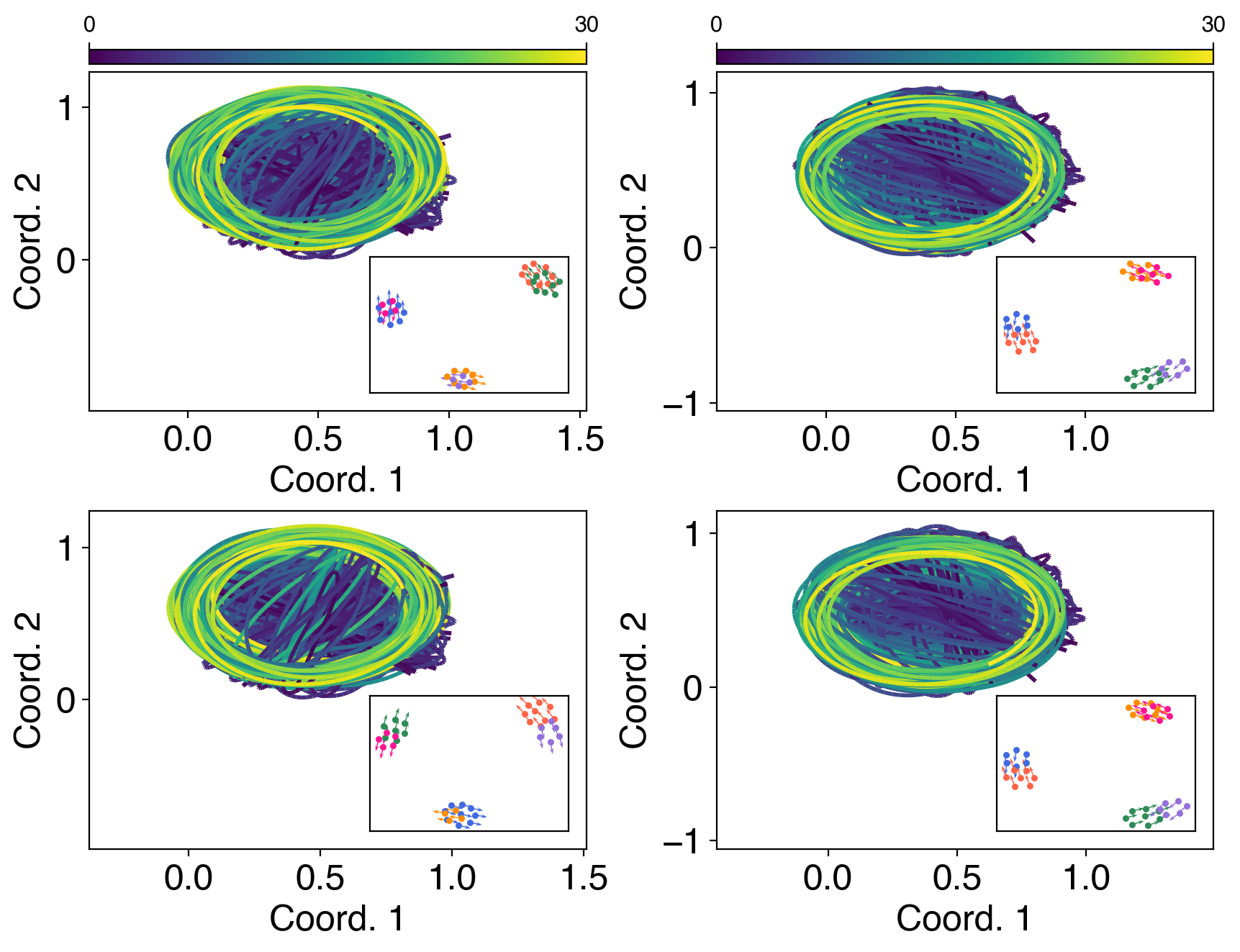}
        \caption{True (top) vs.\ learned $\hat\phi^E$ (bottom).}
        \label{fig:fish-mill-2_ic_traj_comp}
    \end{subfigure}
    \caption{Double milling: (left) observed double mill evolved under $\hat\phi^E$; (right) trajectories from two initial conditions, true (top) vs. learned (bottom), final configurations inset.}
    \label{fig:double-milling_combined}
\end{figure}
As in the previous cases, the two panels of Figure~\ref{fig:double-milling_combined} verify that $\hat\phi^E$ preserves the observed double mill and reproduces the trajectories from fresh initial conditions, with the kernel comparison in Figure~\ref{fig:double-mill_phi_comp}. Here, however, kernel accuracy and pattern fidelity come apart. Table~\ref{tab:fish-millings} shows that the $\|\cdot\|_{\rho_T}$ error is three orders of magnitude larger than for the single ring, driven by the broader, multi-modal $\rho_T$ of the two-subgroup geometry, yet $I_{\mathrm{mill}}$ remains near unity. The milling score measures only rotational coherence; it cannot tell whether the learned flow preserves one ring or two interleaved counter-rotating subgroups. A larger kernel error therefore does not by itself mean the emergent pattern is lost, and a scalar score cannot settle the question. We need diagnostics sensitive to the \emph{topology} of the pattern.

To this end we assign each realization $m$ a signature $\bs^m=(s_1^m,s_2^m,s_3^m)$ probing three successive levels of the collective steady state geometry, computed only for post-training assessment and never used in learning. The type indicator $s_1^m\in\{0,1\}$ distinguishes single from double milling, from the distribution of signed angular velocities and the total angular momentum at steady state. The count $s_2^m\in\mathbb{N}$ is the number of substructures detected at time $T$ by $K$-means on the normalized state $(\bx_i,\tilde\bv_i)$ with $\tilde\bv_i=\bv_i/\|\bv_i\|$. The persistence $s_3^m\in[-1,1]$ is the average silhouette score at $T$, $1.5T$, and $2T$ when the grouping inferred at $T$ is held fixed; a value near $1$ means that grouping remains a faithful description at later times, so the learned dynamics sustains the structure rather than matching it only at the observation time. Evolving the learned dynamics yields $\hat\bs^m$, and the agreement between $\bs^m$ and $\hat\bs^m$ measures how faithfully the topology is reproduced.

The two levels of recovery are reported separately. At the kernel level, Table~\ref{tab:fish-millings} gives the $\|\cdot\|_{\rho_T}$ error and the near-unit $I_{\mathrm{mill}}$. At the topology level, the type $s_1$ is recovered with over $99\%$ accuracy (Table~\ref{tab:s1_metrics}) and the persistence $s_3$ to within $10^{-3}$ on average, while the count $s_2$ is matched in roughly $81\%$ of realizations (Table~\ref{tab:fish_milling_s23}); $s_2$ is the strictest discrete diagnostic and the most sensitive to borderline assignments, which is why its agreement is lower while $s_3$ stays accurate. Despite a kernel error far larger than in the single-ring case, the learned $\hat\phi^E$ thus reproduces not only the rotational coherence of the flow but the topology of the double mill: in the quasi-static regime the framework recovers both the kernel and the collective geometry it induces.
\begin{table}[H]
\centering
\begin{tabular}{l c c c c}
\toprule
 & Accuracy & Precision & Recall & F1 score \\
\midrule
Mean $\pm$ Std
& $0.9924 \pm 0.0030$
& $0.9969 \pm 0.0025$
& $0.9955 \pm 0.0030$
& $0.9962 \pm 0.0015$ \\
\bottomrule
\end{tabular}
\caption{Topology recovery, type indicator: classification of $\hat s_1$ against $s_1$ (single vs.\ double milling), over $10$ trials.}
\label{tab:s1_metrics}
\end{table}
\begin{table}[H]
\centering
\begin{tabular}{l c c c c}
\toprule
 & Mis. rate of $s_2$ & $s_3$ & $\hat s_3$ & $|s_3 - \hat s_3|$ \\
\midrule
Mean $\pm$ Std
& $0.1942 \pm 0.0221$
& $0.8792 \pm 0.0004$
& $0.8785 \pm 0.0009$
& $0.0006 \pm 0.0007$ \\
\bottomrule
\end{tabular}
\caption{Topology recovery, count and persistence: misclassification rate $\mathbb{E}[\mathbf{1}(s_2\neq\hat s_2)]$ and silhouette score $s_3$ vs. $\hat s_3$, over $10$ trials.}
\label{tab:fish_milling_s23}
\end{table}
\section{Conclusion}\label{sec:conclude}
We have studied how to recover interaction kernels from single snapshots of collective steady state patterns rather than from full trajectories. Such data are degenerate: without temporal information, and with a right-hand side that may vanish at a steady state, the kernel need not be identifiable. Recasting the recovery as a $\rho_T$-weighted variational problem turns it into an eigenvalue problem whose solution, under the assumption of a suitable identifiability condition, estimates the kernel on the support of the observed pairwise-distance distribution.

What the examples show is that identifiability depends on the observed regime. The same model is well-posed when the learning equation carries a nonzero right-hand side, as in moving flocks and milling, but only identifiable up to a scaling when that side vanishes, as in static states and second-order flocking; the missing scale is then recovered from the first stopping time. In every case the kernel is constrained only on $\supp{\rho_T}$, so what can be learned is set by the geometry of that support, not by the form of the kernel. Where the data themselves fall short, structure can compensate: a shared potential recovers an alignment kernel that flocking data cannot identify, and a rigid-rotation constraint supplies unobserved accelerations. A larger kernel error need not destroy the emergent geometry, however, as the double mill shows, so kernel recovery should be checked against the pattern it generates.

Collective steady state patterns are therefore not just the end states of a dynamics; read through the right structure, they carry constraining information about the laws that produced them. Open questions remain on recovery under noise or partial observation, and on fixing the missing scale from other auxiliary information. The central mathematical question, however, is to characterize how the geometry of the terminal pairwise-distance distribution $\rho_T$ controls the null space and spectral gap of the observation operator, and hence which components of the interaction law are identifiable from collective patterns alone.
\section*{Acknowledgements}
BH developed, implemented, and analyzed the algorithms and data, MM and MZ designed the research, all authors wrote the manuscript.

Maggioni is partially supported by AFOSR award FA9550-23-1-0445. Zhong is partially supported by NSF-AoF grant $\#2225507$.
%
\appendix
\section{Numerical Implementation}\label{app:numerical-implementation}
We describe the finite-dimensional implementation of the scale-ambiguous learning problem arising from static first-order data and second-order flocking data. In these regimes, the observed interaction residual vanishes, and the kernel direction is recovered from a constrained homogeneous minimization problem.

Because the interaction equations decouple according to the receiving type, we solve one generalized eigenvalue problem for each $\idxcl\in\{1,\ldots,\numcl\}$.  For a fixed receiving type \(\idxcl\), define
\(
    \Hspace^E_{\idxcl}
    :=
    \bigoplus_{\idxcl'=1}^{\numcl}
    \Hspace^E_{\idxcl,\idxcl'},
    \;
    n_{\idxcl}
    :=
    \sum_{\idxcl'=1}^{\numcl}
    n_{\idxcl,\idxcl'},
\)
where
\(
    n_{\idxcl,\idxcl'}
    =
    \dim(\Hspace^E_{\idxcl,\idxcl'}).
\)
Let
\(
    \left\{
        \psi^{\idxcl,\idxcl'}_{\eta}
    \right\}_{\eta=1}^{n_{\idxcl,\idxcl'}}
    =
    \operatorname{basis}
    \bigl(
        \Hspace^E_{\idxcl,\idxcl'}
    \bigr)
\)
be the data-adaptive basis constructed by Algorithm~\ref{alg:adaptive-partition}. We write
\(
    \varphi^E_{\idxcl,\idxcl'}(r)
    =
    \sum_{\eta=1}^{n_{\idxcl,\idxcl'}}
    \alpha^{\idxcl,\idxcl'}_{\eta}
    \psi^{\idxcl,\idxcl'}_{\eta}(r)
\)
and collect the coefficients associated with the receiving type
\(\idxcl\) into
\[
    \balpha_{\idxcl}
    =
    \begin{bmatrix}
        \alpha^{\idxcl,1}_{1}
        &
        \cdots
        &
        \alpha^{\idxcl,1}_{n_{\idxcl,1}}
        &
        \cdots
        &
        \alpha^{\idxcl,\numcl}_{1}
        &
        \cdots
        &
        \alpha^{\idxcl,\numcl}_{n_{\idxcl,\numcl}}
    \end{bmatrix}^{\top}
    \in\R^{n_{\idxcl}}.
\]
For one observed configuration \(\bX^m\), the interaction residual for
receiving type \(\idxcl\) is linear in \(\balpha_{\idxcl}\):
\[
    \F^{\idxcl,E}_{\varphivec^E_{\idxcl}}(\bX^m)
    =
    \Psi^E_{\idxcl}(\bX^m)\balpha_{\idxcl},
\]
where
\(
    \Psi^E_{\idxcl}(\bX^m)
    \in
    \R^{N_{\idxcl}d\times n_{\idxcl}}
\)
is given by
\[
\begin{aligned}
    \Psi^E_{\idxcl}(\bX^m)
    =
    \Big[
    \F^{\idxcl,E}_{\psi^{\idxcl,1}_{1}}(\bX^m),
    \ldots,
    \F^{\idxcl,E}_{\psi^{\idxcl,1}_{n_{\idxcl,1}}}(\bX^m),
    \ldots,
    \F^{\idxcl,E}_{\psi^{\idxcl,\numcl}_{1}}(\bX^m),
    \ldots,
    \F^{\idxcl,E}_{\psi^{\idxcl,\numcl}_{n_{\idxcl,\numcl}}}(\bX^m)
    \Big].
\end{aligned}
\]
For a basis function
\(
    \psi^{\idxcl,\idxcl'}_{\eta}
\),
the corresponding column is
\[
    \F^{\idxcl,E}_{\psi^{\idxcl,\idxcl'}_{\eta}}(\bX^m)
    =
    \begin{bmatrix}
        \vdots
        \\[2pt]
        \displaystyle
        \sum_{\substack{
            j\in\cl_{\idxcl'}\\
            j\neq i
        }}
        \frac{
            \psi^{\idxcl,\idxcl'}_{\eta}(r^m_{ij})
        }{
            N_{\idxcl'}
        }
        \br^m_{ij}
        \\[4pt]
        \vdots
    \end{bmatrix}
    \in\R^{N_{\idxcl}d},
    \qquad
    i\in\cl_{\idxcl}.
\]
Stacking the \(M\) observed configurations gives
\[
    \Psi^E_{\idxcl,[1:M]}
    =
    \begin{bmatrix}
        \Psi^E_{\idxcl}(\bX^1)
        \\
        \vdots
        \\
        \Psi^E_{\idxcl}(\bX^M)
    \end{bmatrix}
    \in
    \R^{MN_{\idxcl}d\times n_{\idxcl}}.
\]
The type-weighted empirical residual norm is represented by
\(
    W_{\idxcl}
    :=
    \frac{1}{MN_{\idxcl}}
    \bm I_{MN_{\idxcl}d},
\)
and hence
\[
\begin{aligned}
    \frac{1}{MN_{\idxcl}}
    \sum_{m=1}^{M}
    \left\|
        \F^{\idxcl,E}_{\varphivec^E_{\idxcl}}(\bX^m)
    \right\|^2
    =
    \balpha_{\idxcl}^{\top}
    H_{\idxcl}
    \balpha_{\idxcl}, \quad
    H_{\idxcl}
    :=
    \bigl(
        \Psi^E_{\idxcl,[1:M]}
    \bigr)^{\top}
    W_{\idxcl}
    \Psi^E_{\idxcl,[1:M]}.
\end{aligned}
\]
The matrix \(H_{\idxcl}\) is symmetric and positive semidefinite.

We next represent the empirical
\(\rho^{M}_{T}\)-weighted kernel norm. For every ordered type pair
\((\idxcl,\idxcl')\), define
\[
    \bigl(
        G_{\idxcl,\idxcl'}
    \bigr)_{\eta q}
    :=
    \left\langle
        \psi^{\idxcl,\idxcl'}_{\eta},
        \psi^{\idxcl,\idxcl'}_{q}
    \right\rangle_{\rho^{M}_{T,\idxcl,\idxcl'}}.
\]
In the numerical implementation, the same matrix may equivalently be evaluated by quadrature using the histogram approximation of \(\rho^{M}_{T,\idxcl,\idxcl'}\).

The Gram matrix for receiving type \(\idxcl\) is the block-diagonal matrix
\[
    G_{\idxcl}
    :=
    \operatorname{diag}
    \left(
        G_{\idxcl,1},
        \ldots,
        G_{\idxcl,\numcl}
    \right)
    \in\R^{n_{\idxcl}\times n_{\idxcl}}.
\]
It satisfies
\[
    \balpha_{\idxcl}^{\top}
    G_{\idxcl}
    \balpha_{\idxcl}
    =
    \sum_{\idxcl'=1}^{\numcl}
    \left\|
        \varphi^E_{\idxcl,\idxcl'}
    \right\|_{\rho^{M}_{T,\idxcl,\idxcl'}}^2.
\]
The adaptive hypothesis space is chosen so that each retained basis direction is observable under \(\rho^{M}_{T}\). We therefore assume that \(G_{\idxcl}\) is positive definite on the resulting coefficient space. If \(G_{\idxcl}\) is only positive semidefinite, basis directions belonging to \(\ker G_{\idxcl}\) are not supported by the observed pairwise distances and are removed before solving the learning problem.

The constrained minimization problem for receiving type \(\idxcl\) is
\begin{equation}
\label{eq:finite-dimensional-CS}
    \min_{\balpha_{\idxcl}\in\R^{n_{\idxcl}}}
    \balpha_{\idxcl}^{\top}
    H_{\idxcl}
    \balpha_{\idxcl}
    \qquad
    \text{subject to}
    \qquad
    \balpha_{\idxcl}^{\top}
    G_{\idxcl}
    \balpha_{\idxcl}
    =
    1.
\end{equation}
Equivalently, we solve the symmetric generalized eigenvalue problem
\begin{equation}
\label{eq:generalized-eigenproblem}
    H_{\idxcl}\balpha_{\idxcl,j}
    =
    \lambda_{\idxcl,j}
    G_{\idxcl}\balpha_{\idxcl,j},
    \qquad
    j=1,\ldots,n_{\idxcl},
\end{equation}
with
\(
    0
    \leq
    \lambda_{\idxcl,1}
    \leq
    \lambda_{\idxcl,2}
    \leq
    \cdots
    \leq
    \lambda_{\idxcl,n_{\idxcl}},
\)
and the generalized eigenvectors chosen \(G_{\idxcl}\)-orthonormally:
\[
    \balpha_{\idxcl,i}^{\top}
    G_{\idxcl}
    \balpha_{\idxcl,j}
    =
    \delta_{ij}.
\]
The coefficient vector associated with the smallest generalized eigenvalue,
\(
    \widehat{\balpha}_{\idxcl}
    :=
    \balpha_{\idxcl,1},
\)
gives the normalized estimator
\[
    \widehat{\varphi}^E_{\idxcl,\idxcl'}(r)
    =
    \sum_{\eta=1}^{n_{\idxcl,\idxcl'}}
    \widehat{\alpha}^{\idxcl,\idxcl'}_{\eta}
    \psi^{\idxcl,\idxcl'}_{\eta}(r),
    \qquad
    \idxcl'=1,\ldots,\numcl.
\]

The smallest generalized eigenvalue
\(
    \lambda_{\idxcl,1}
    =
    \min_{\balpha_{\idxcl}\neq\zero}
    \frac{
        \balpha_{\idxcl}^{\top}
        H_{\idxcl}
        \balpha_{\idxcl}
    }{
        \balpha_{\idxcl}^{\top}
        G_{\idxcl}
        \balpha_{\idxcl}
    }
\)
measures the minimum normalized equilibrium residual. When the true kernel
block belongs to the hypothesis space and the observed configurations are
exact equilibria, one expects
\(
    \lambda_{\idxcl,1}=0
\)
up to numerical error. The generalized spectral gap
\(
    \gamma_{\idxcl}
    :=
    \lambda_{\idxcl,2}
    -
    \lambda_{\idxcl,1}
\)
measures the separation between the minimizing kernel direction and the
nearest competing direction. If
\(
    \gamma_{\idxcl}>0,
\)
the minimizing direction is unique up to sign, and its perturbation
sensitivity is controlled by
Proposition~\ref{prop:spectral-gap}.

The eigenvalues and the gap are associated with a fixed data set and its
corresponding data-adaptive hypothesis space. They are therefore
within-problem stability indicators: they need not vary monotonically with
the number of agents, and eigenvalue magnitudes obtained from different
hypothesis spaces should not be interpreted as intrinsic condition numbers
that are directly comparable across different examples.

Finally, the generalized eigenvector determines only the normalized kernel
direction. Its physical sign is selected by the minimal-energy criterion
described in Section~\ref{sec:min_energy}, and the remaining positive
receiving-type scaling is recovered by the stopping-time procedure in
Algorithm~\ref{alg:timescale}. For homogeneous systems,
\(\numcl=1\), the construction reduces to a single generalized eigenvalue
problem.
\section{Additional Examples}
\begin{figure}[H]
    \centering
    \begin{subfigure}[t]{0.46\linewidth}
        \centering
        \includegraphics[width=\linewidth]{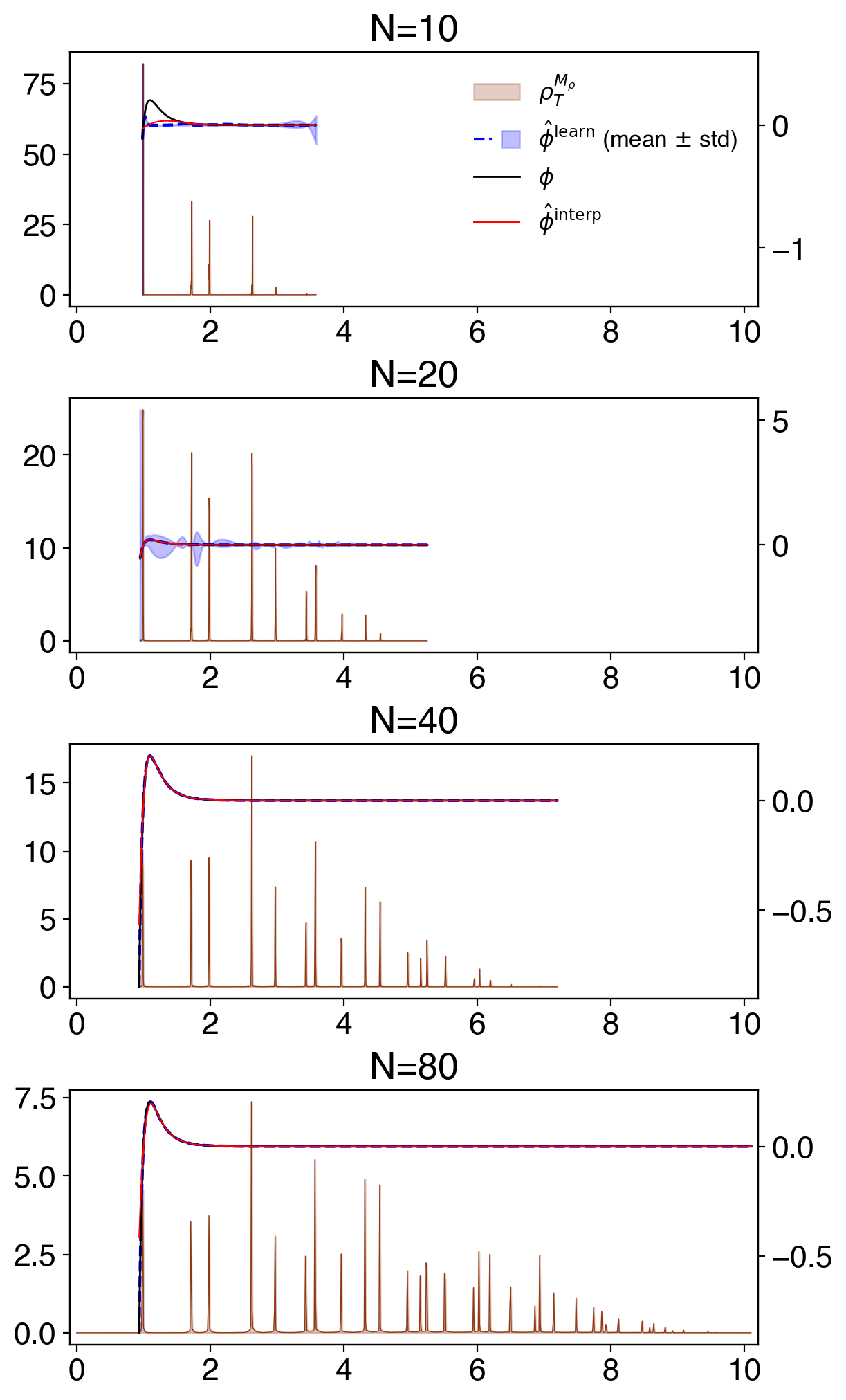}
           \caption{LJ}
    \label{fig:LJ_rhoT}
    \end{subfigure}
    \hfill 
    \begin{subfigure}[t]{0.46\linewidth}
        \centering
        \includegraphics[width=\linewidth]{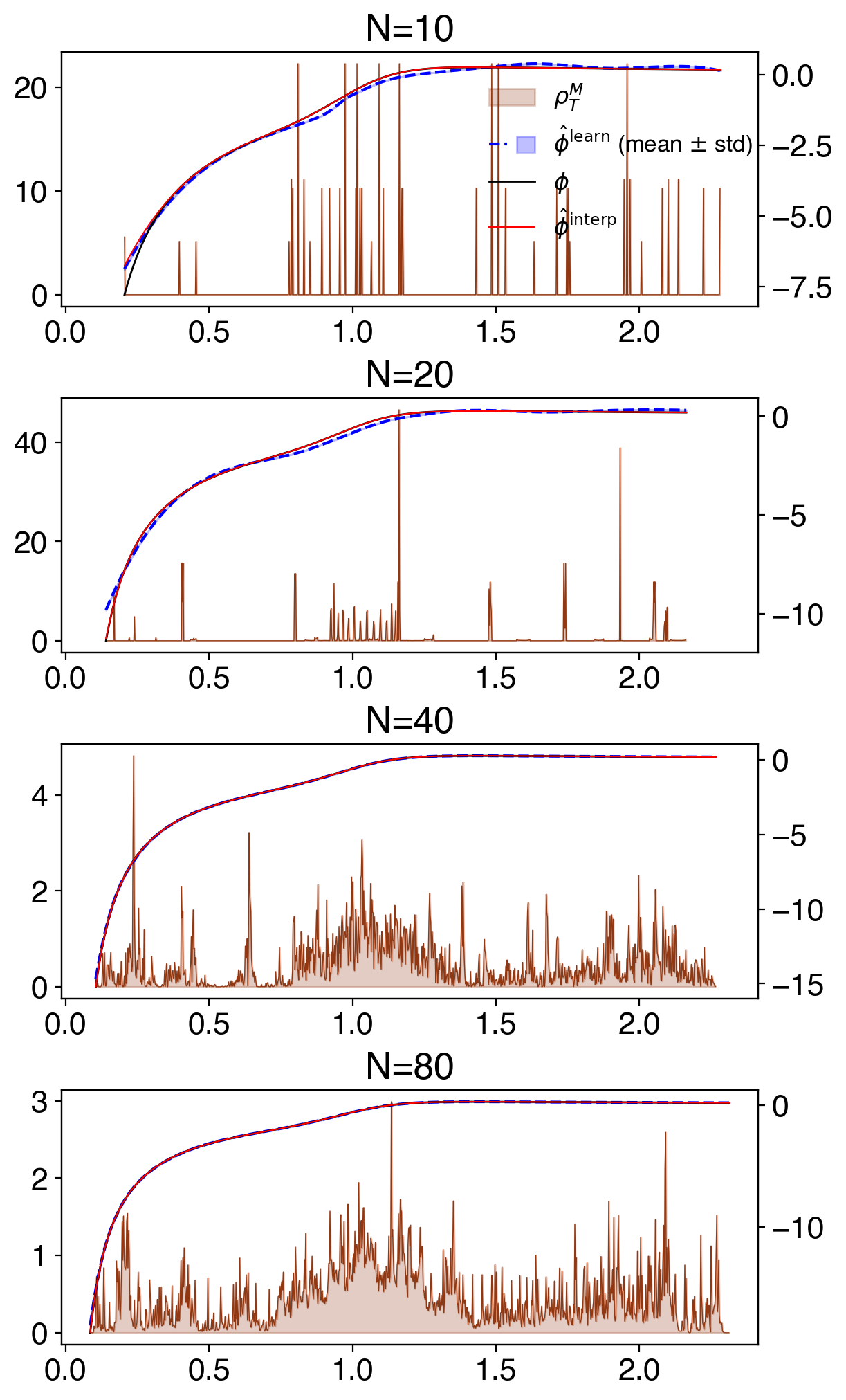}
        \caption{Tanh}
        \label{fig:tanh_rhoT}
    \end{subfigure}
    \caption{ Learning of $\phi$ with $N \in \{10, 20, 40, 80\}$, compared 
   to B-spline interpolation $\hat\phi^{\mathrm{interp}}$. Background shows 
   $\rho_T^{M}$.}
\end{figure}

\begin{figure}[H]
    \centering
    \begin{subfigure}[t]{0.46\linewidth}
        \centering
        \includegraphics[width=\linewidth]{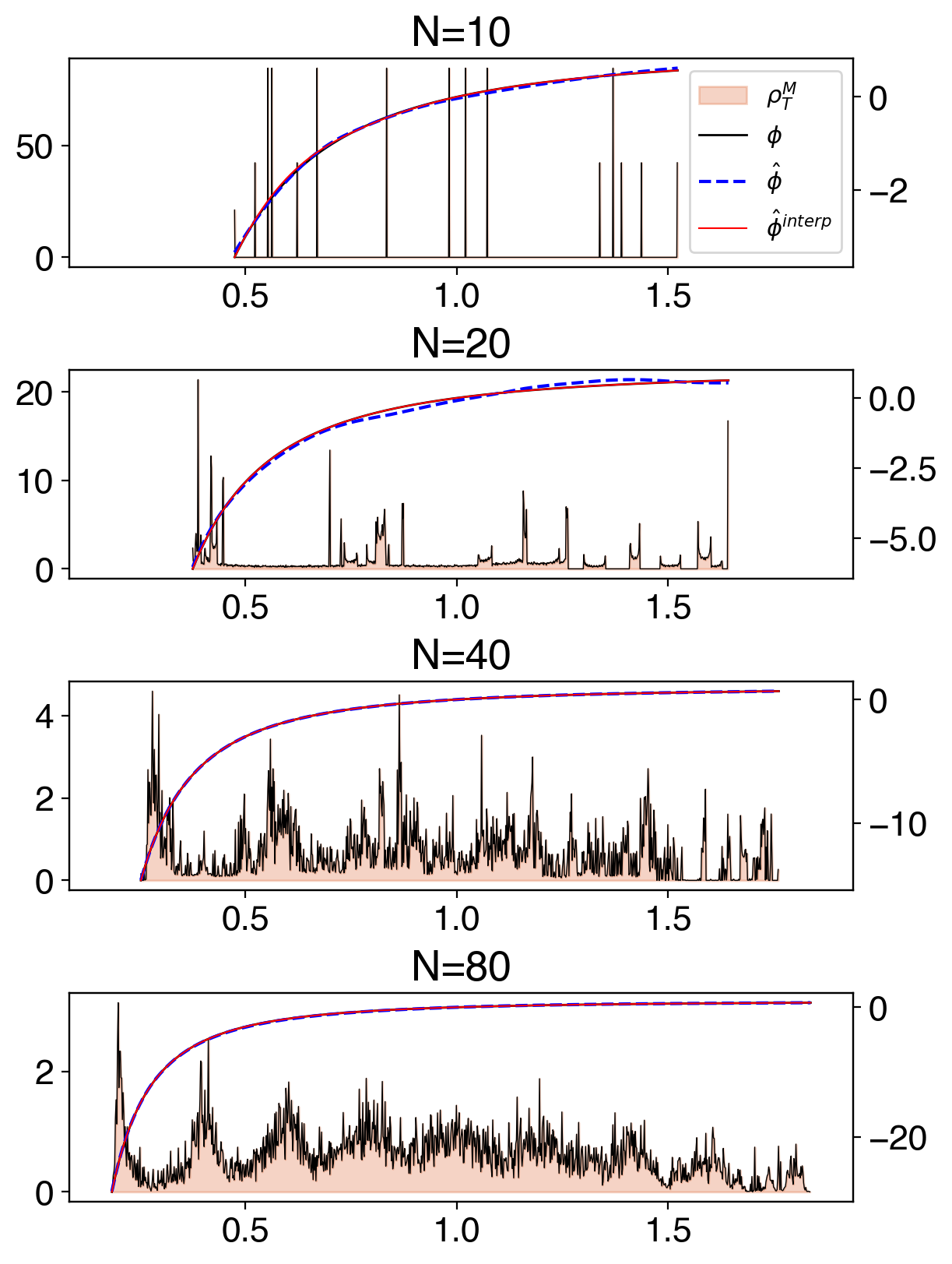}
           \caption{Anticipated flocking}
    \label{fig:anticipate_rhoT}
    \end{subfigure}
    \hfill 
    \begin{subfigure}[t]{0.46\linewidth}
        \centering
        \includegraphics[width=\linewidth]{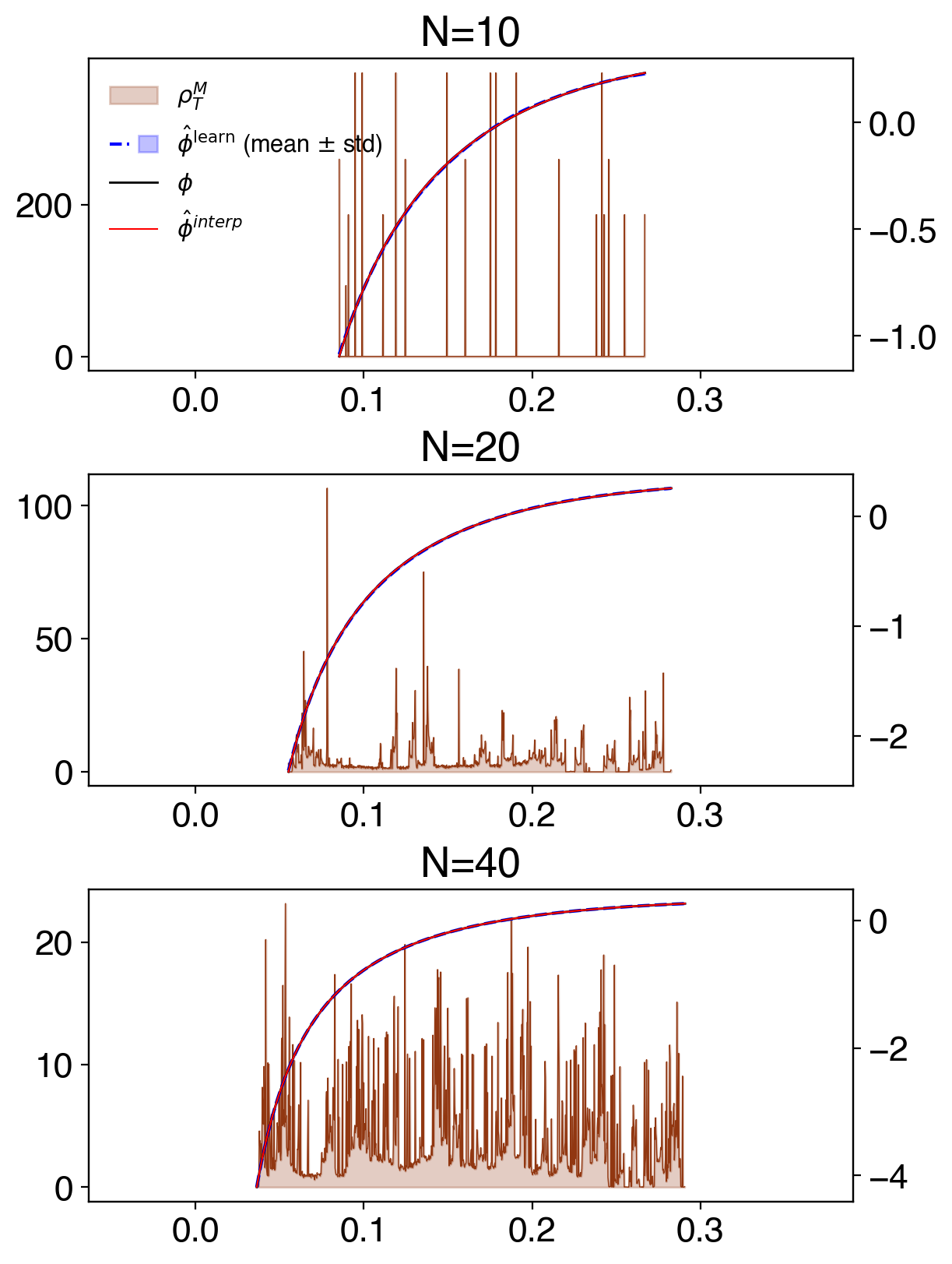}
        \caption{Fish flocking}
        \label{fig:fish-flock_rhoT}
    \end{subfigure}
    \caption{ Learning of $\phi$ with $N \in \{10, 20, 40, 80\}$, compared 
   to B-spline interpolation $\hat\phi^{\mathrm{interp}}$. Background shows 
   $\rho_T^{M}$.}
\end{figure}

\begin{figure}[H]
    \centering
    \includegraphics[width=\linewidth]{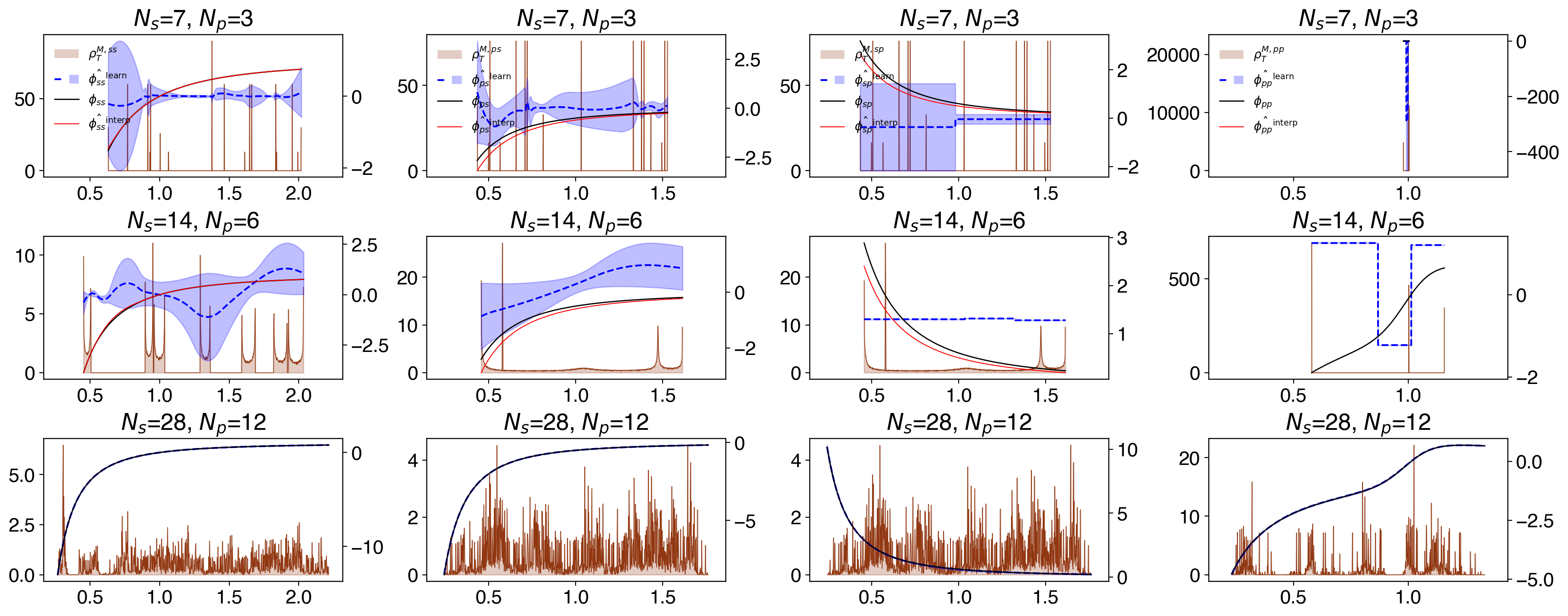}
    \caption{Predator--prey (static): learning of $\phi_{\idxcl_1, \idxcl_2}$ for 
    $(N, N_s, N_p) \in \{(10, 7, 3),\ (20, 14, 6),\ (40, 28, 12)\}$, compared 
    to B-spline interpolation $\hat\phi^{\mathrm{interp}}_{\idxcl_1, \idxcl_2}$. 
    Background shows $\rho_T^{M, \idxcl_1, \idxcl_2}$.}
    \label{fig:MS-static_rhoT}
\end{figure}

\begin{figure}[H]
    \centering
    \includegraphics[width=\linewidth]{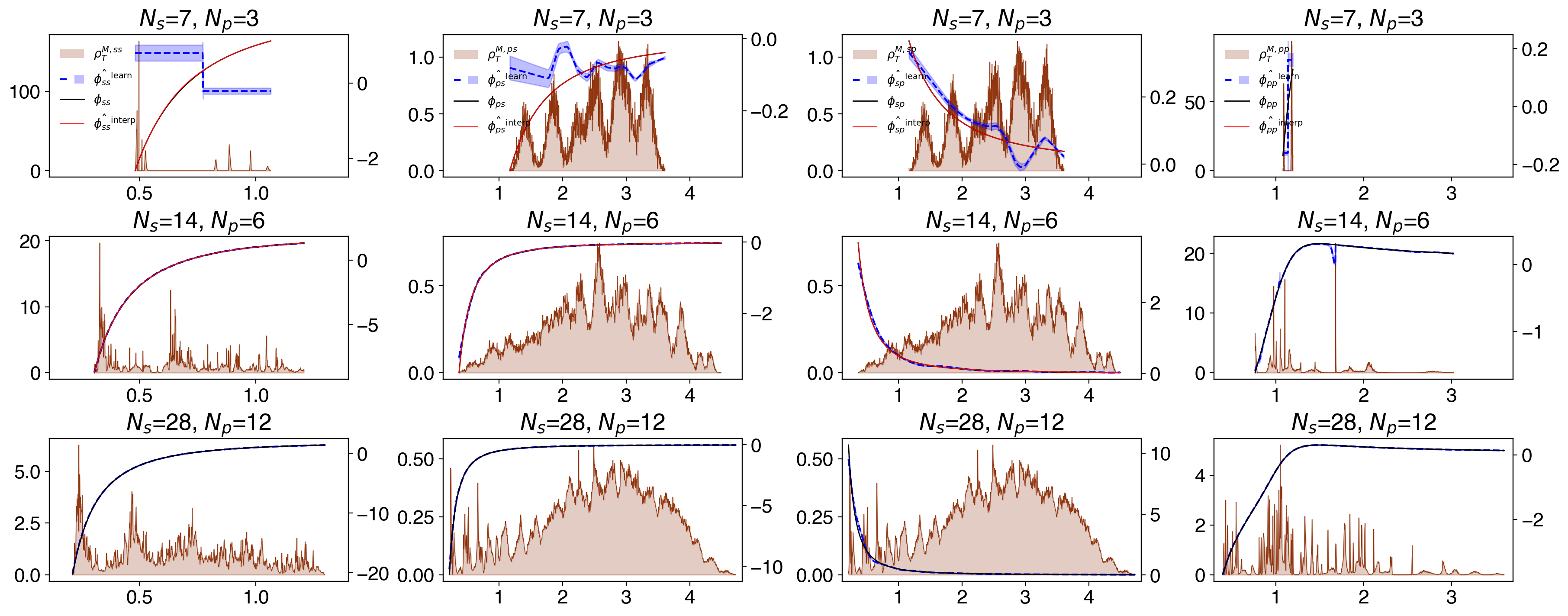}
    \caption{Predator--prey (flocking): learning of $\phi_{\idxcl_1, \idxcl_2}$ for 
    $(N, N_s, N_p) \in \{(10, 7, 3),\ (20, 14, 6),\ (40, 28, 12)\}$, compared 
    to B-spline interpolation $\hat\phi^{\mathrm{interp}}_{\idxcl_1, \idxcl_2}$. 
    Background shows $\rho_T^{M, \idxcl_1, \idxcl_2}$.}
    \label{fig:MS-flock_rhoT}
\end{figure}

\begin{figure}[H]
    \centering
    \begin{subfigure}[t]{0.46\textwidth}
        \centering
        \includegraphics[width=\linewidth]{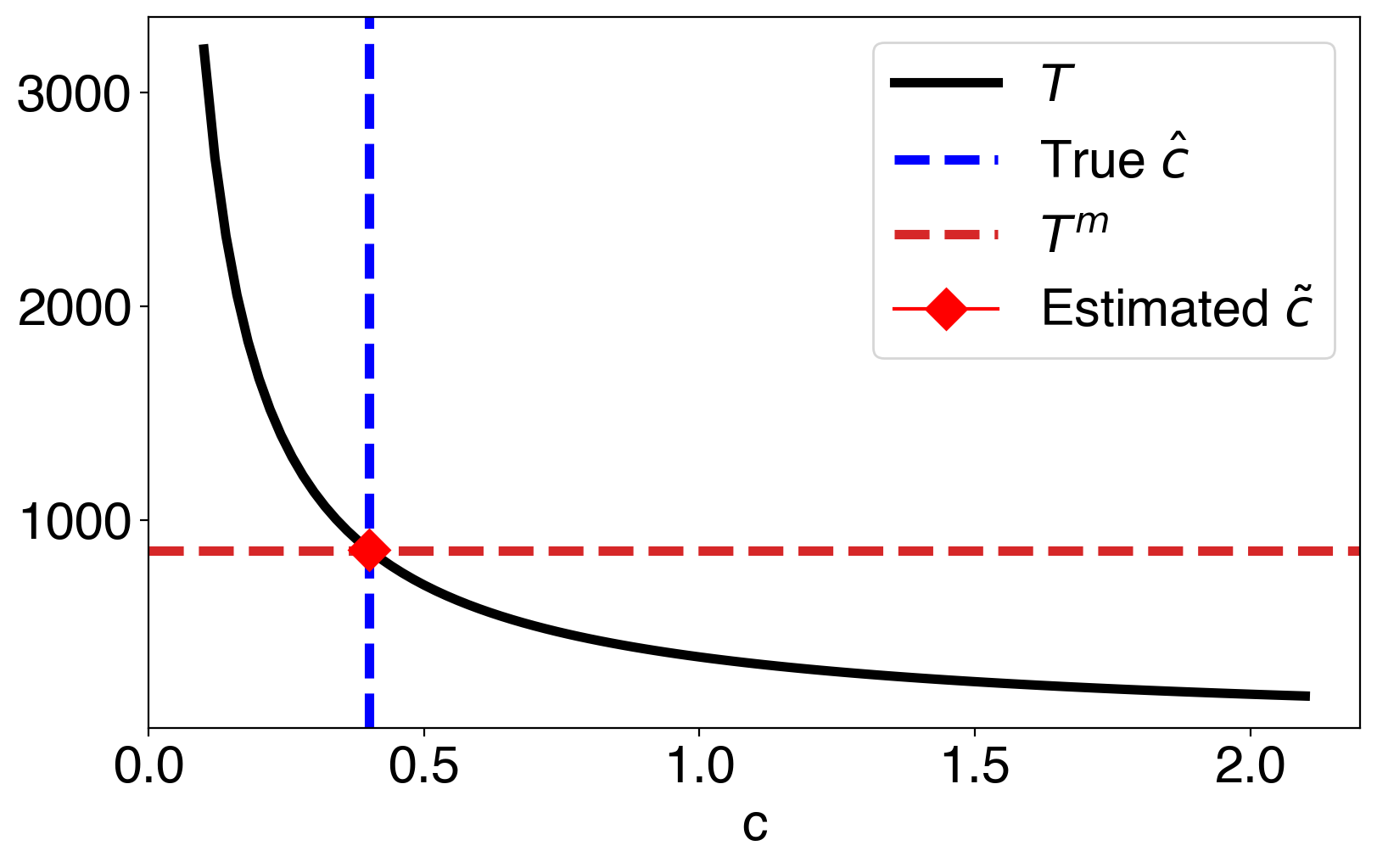}
        \caption{Estimated optimal $c_*$.}
        \label{fig:LJ_C_comp}
    \end{subfigure}%
    \hfill
    \begin{subfigure}[t]{0.46\textwidth}
        \centering
        \includegraphics[width=\linewidth]{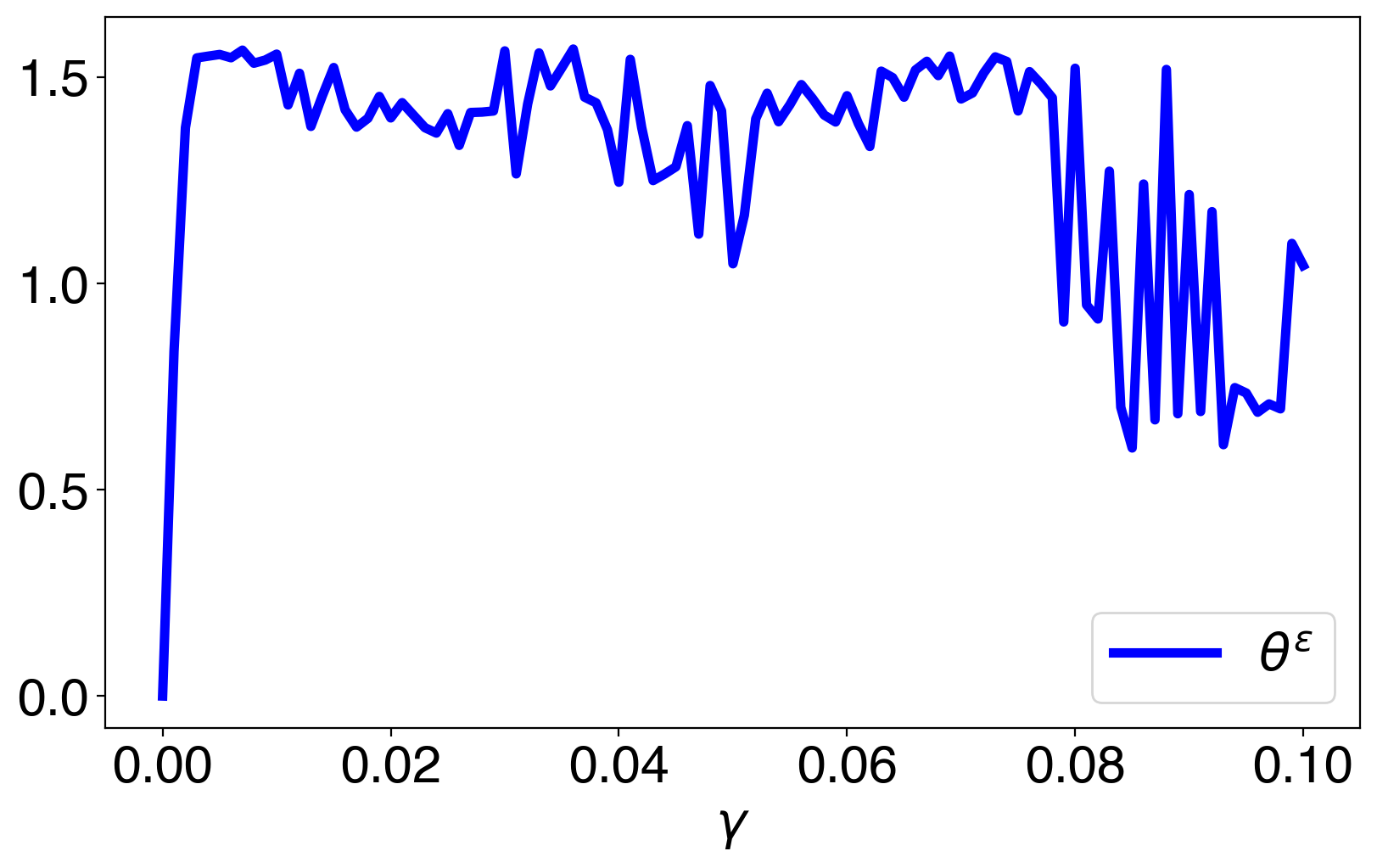}
        \caption{Noise effects in learning.}
        \label{fig:LJ_noise}
    \end{subfigure}
    \caption{ Additional learning from $LJ$-crystal steady patterns.}
\end{figure}

\begin{figure}[H]
    \centering
    \includegraphics[width = \linewidth]{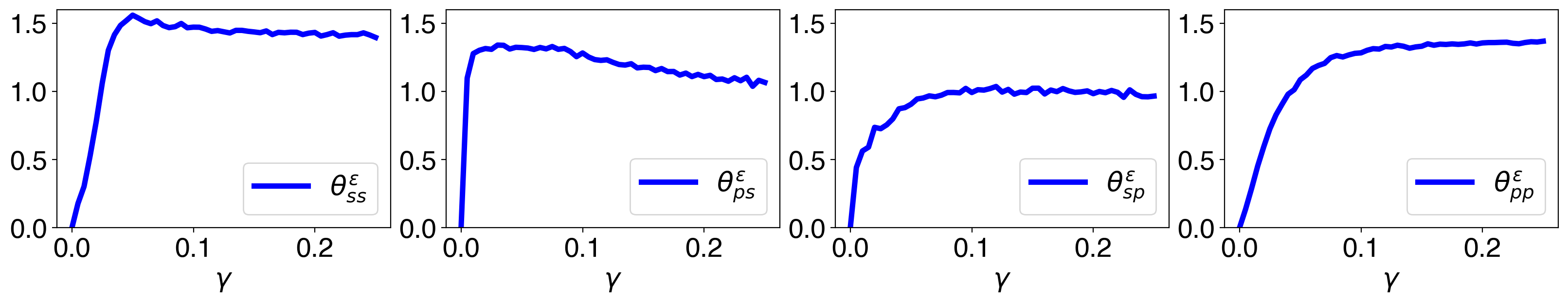}
    \caption{Predator--prey (flocking): Noise effects in learning.}
    \label{fig:MS-flock_noise}
\end{figure}

\begin{figure}[H]
    \centering
    \begin{subfigure}[t]{0.46\textwidth}
        \centering
        \includegraphics[width=\linewidth]{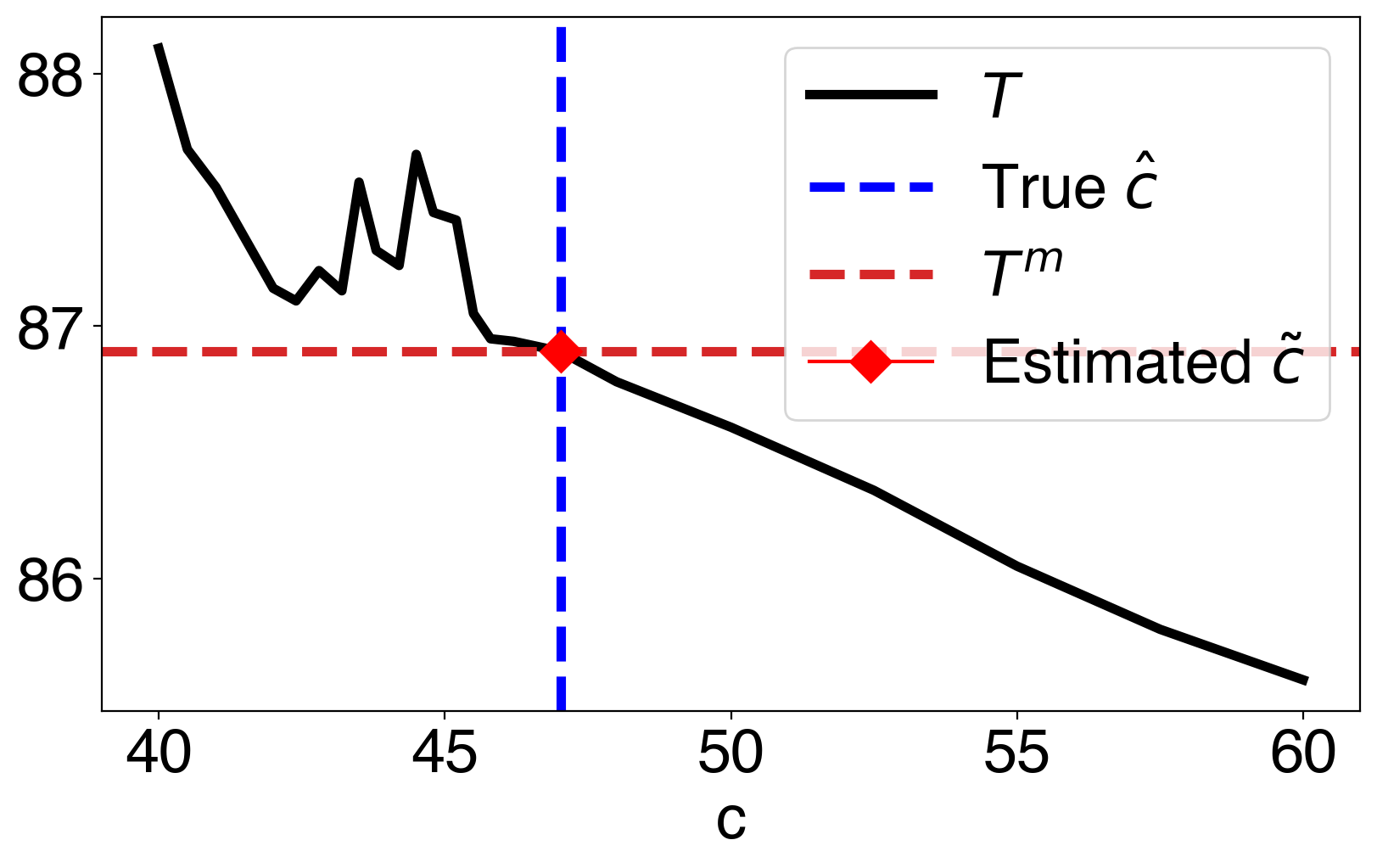}
        \caption{Estimated optimal $c_*$.}
        \label{fig:anticipate_C_comp}
    \end{subfigure}%
    \hfill
    \begin{subfigure}[t]{0.46\textwidth}
        \centering
        \includegraphics[width=\linewidth]{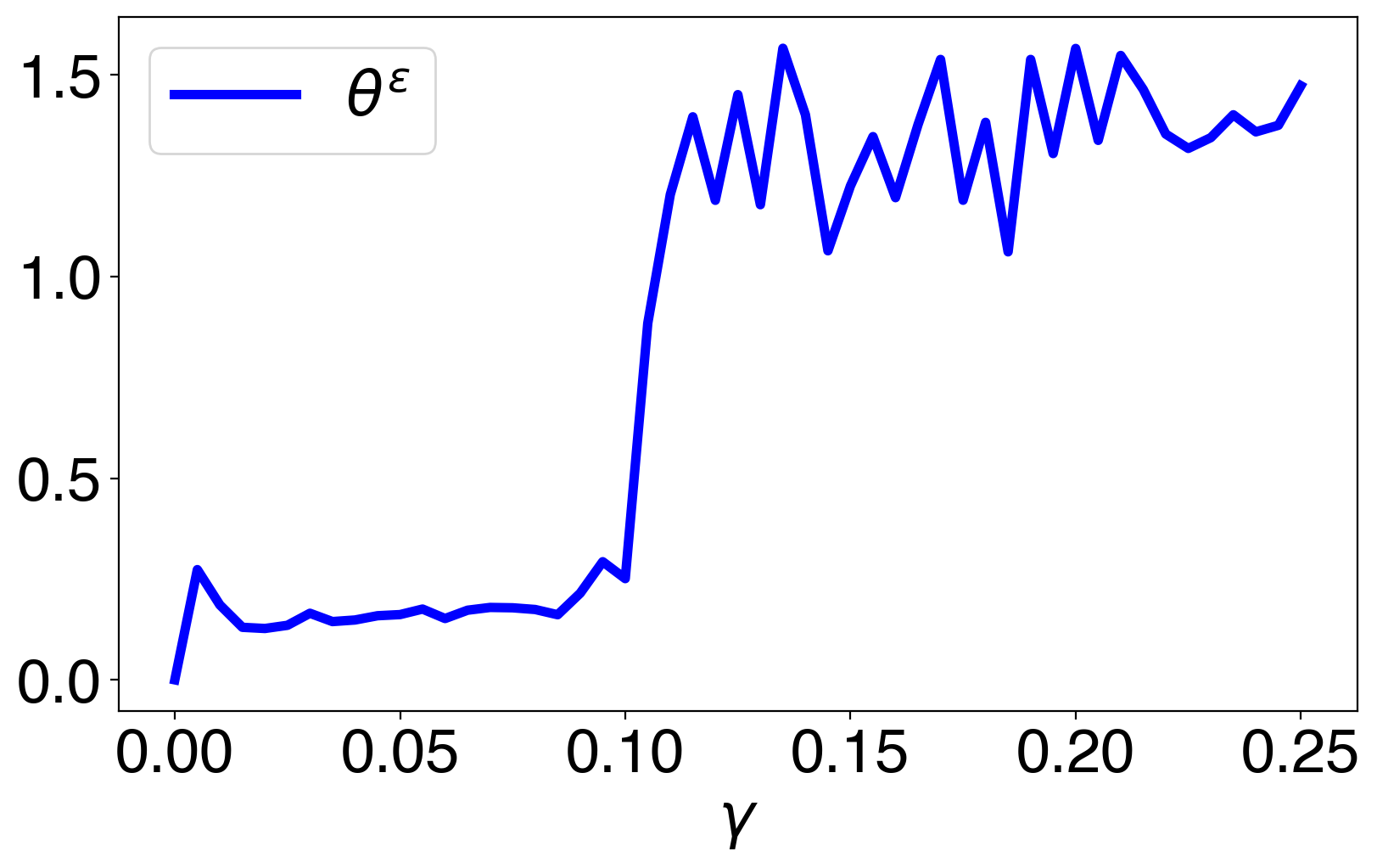}
        \caption{Noise effects in learning.}
        \label{fig:anticipate_noise}
    \end{subfigure}
    \caption{Additional learning from anticipated flocking steady patterns.}
\end{figure}

\begin{figure}[H]
    \centering
    \begin{subfigure}[t]{0.46\textwidth}
        \centering
        \includegraphics[width=\linewidth]{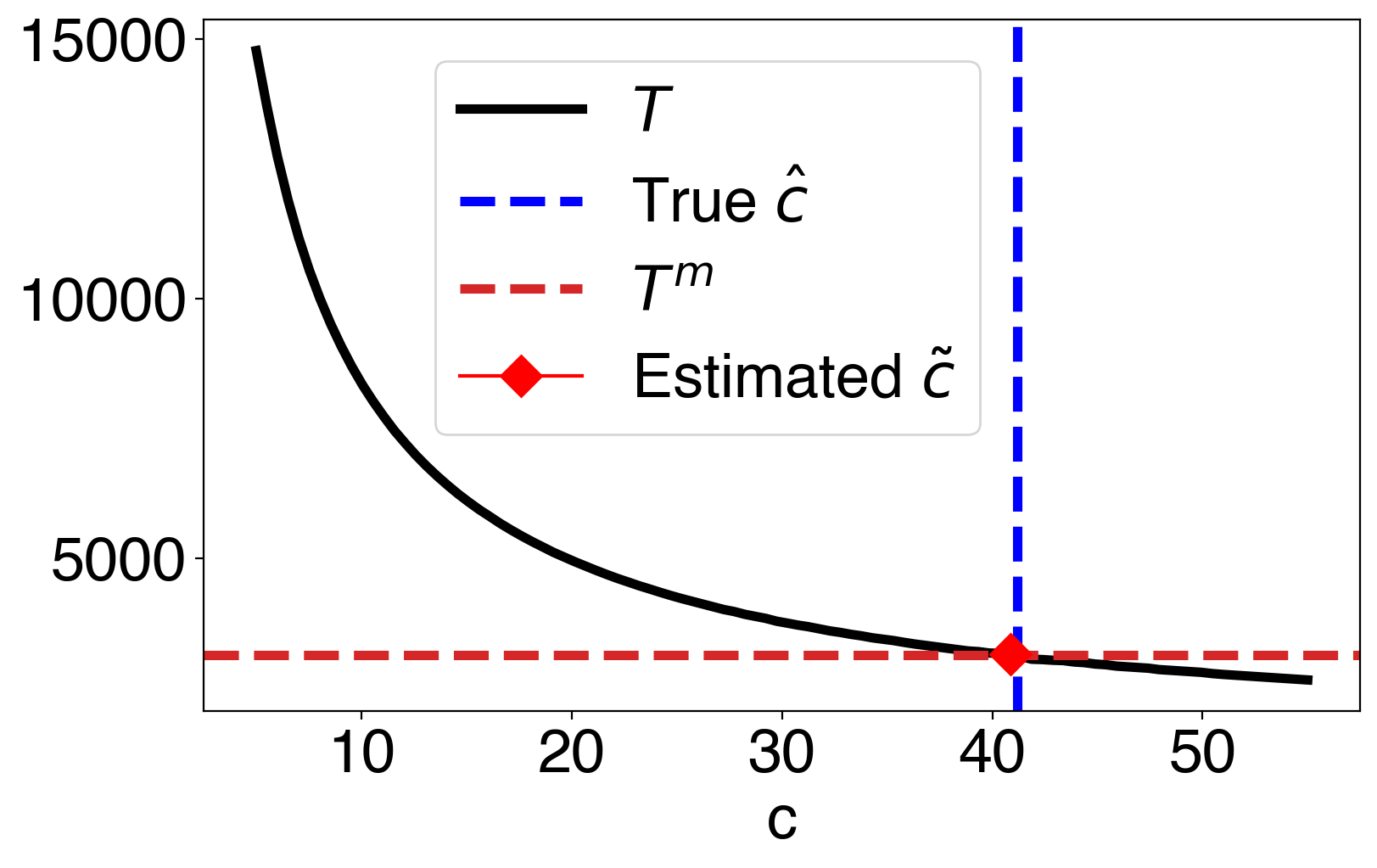}
        \caption{Estimated optimal $c_*$.}
        \label{fig:tanh_C_comp}
    \end{subfigure}%
    \hfill
    \begin{subfigure}[t]{0.46\textwidth}
        \centering
        \includegraphics[width=\linewidth]{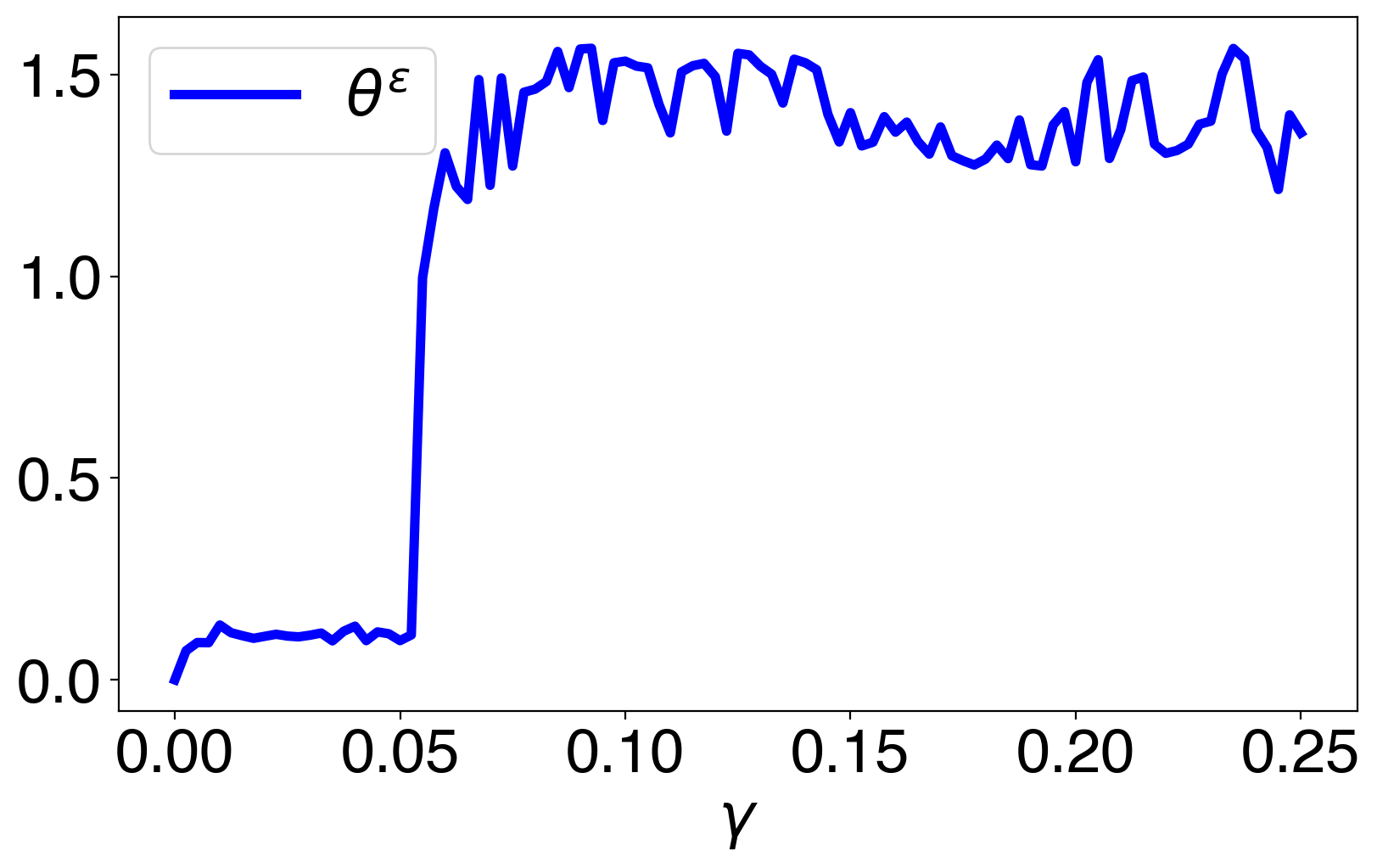}
        \caption{Noise effects in learning.}
        \label{fig:tanh_noise}
    \end{subfigure}
    \caption{Additional learning from  soccer ball steady patterns.}
\end{figure}


\begin{figure}[H]
    \centering
    \begin{subfigure}[t]{0.46\textwidth}
        \centering
        \includegraphics[width = \linewidth]{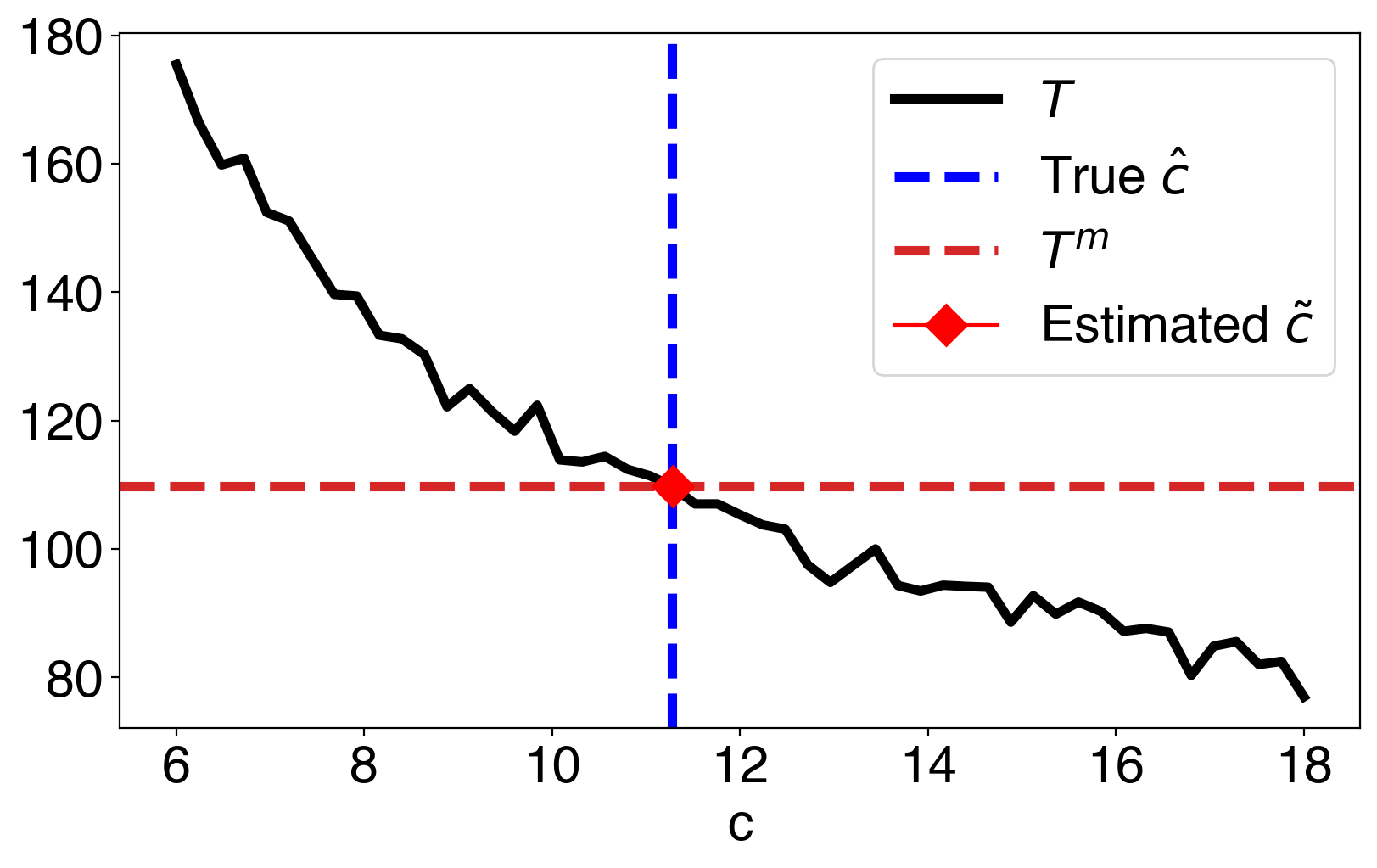}
        \caption{Estimated optimal $c_*$.}
        \label{fig:fish-flock_C_comp}
    \end{subfigure}%
    \hfill
    \begin{subfigure}[t]{0.46\textwidth}
        \centering
        \includegraphics[width = \linewidth]{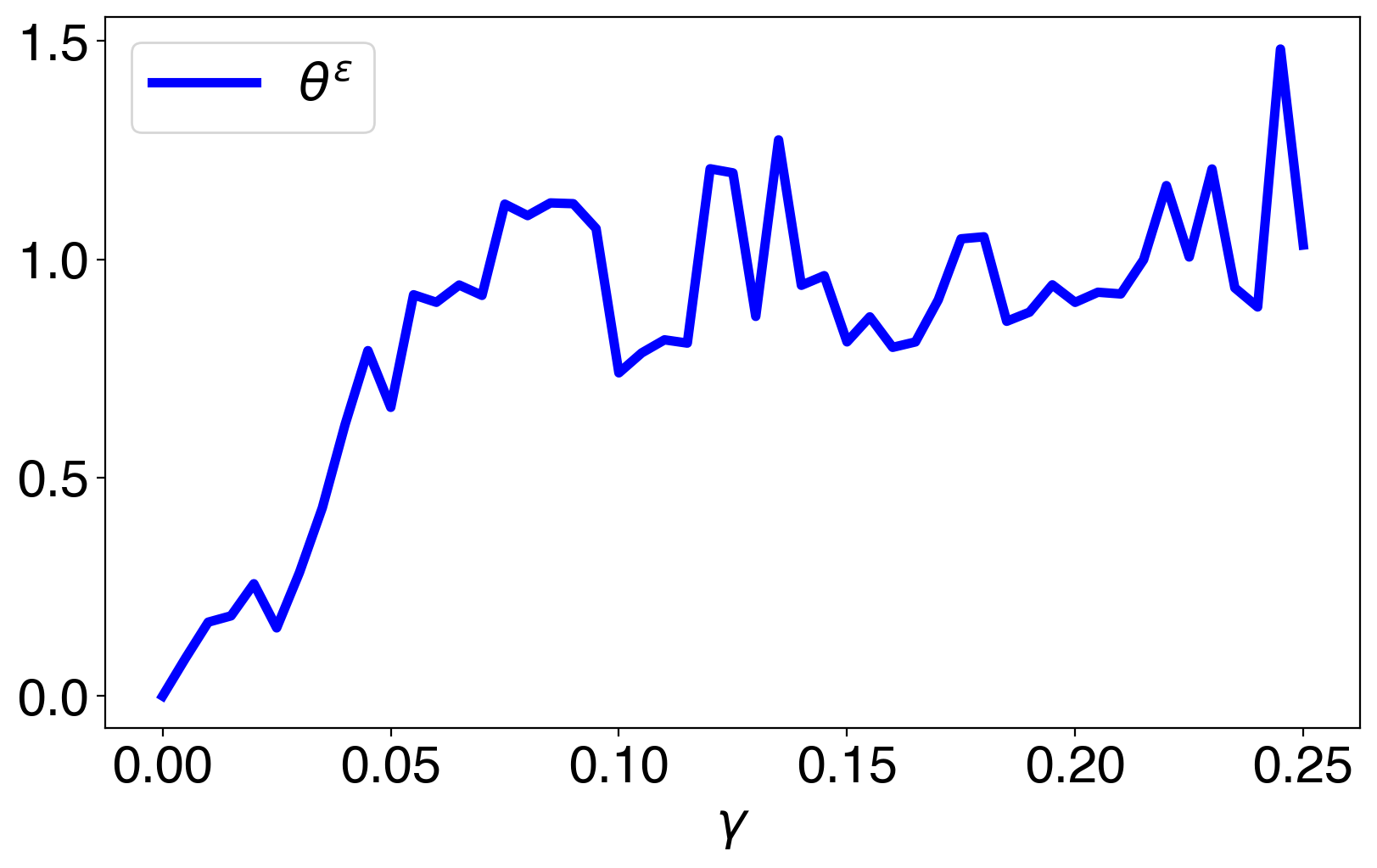}
        \caption{Noise effects in learning.}
        \label{fig:fish-flock_noise}
    \end{subfigure}
    \caption{Additional learning from fish flocking steady patterns.}
\end{figure}

\begin{figure}[H]
    \centering
    \begin{subfigure}[t]{0.46\textwidth}
        \centering
        \includegraphics[width=\linewidth]{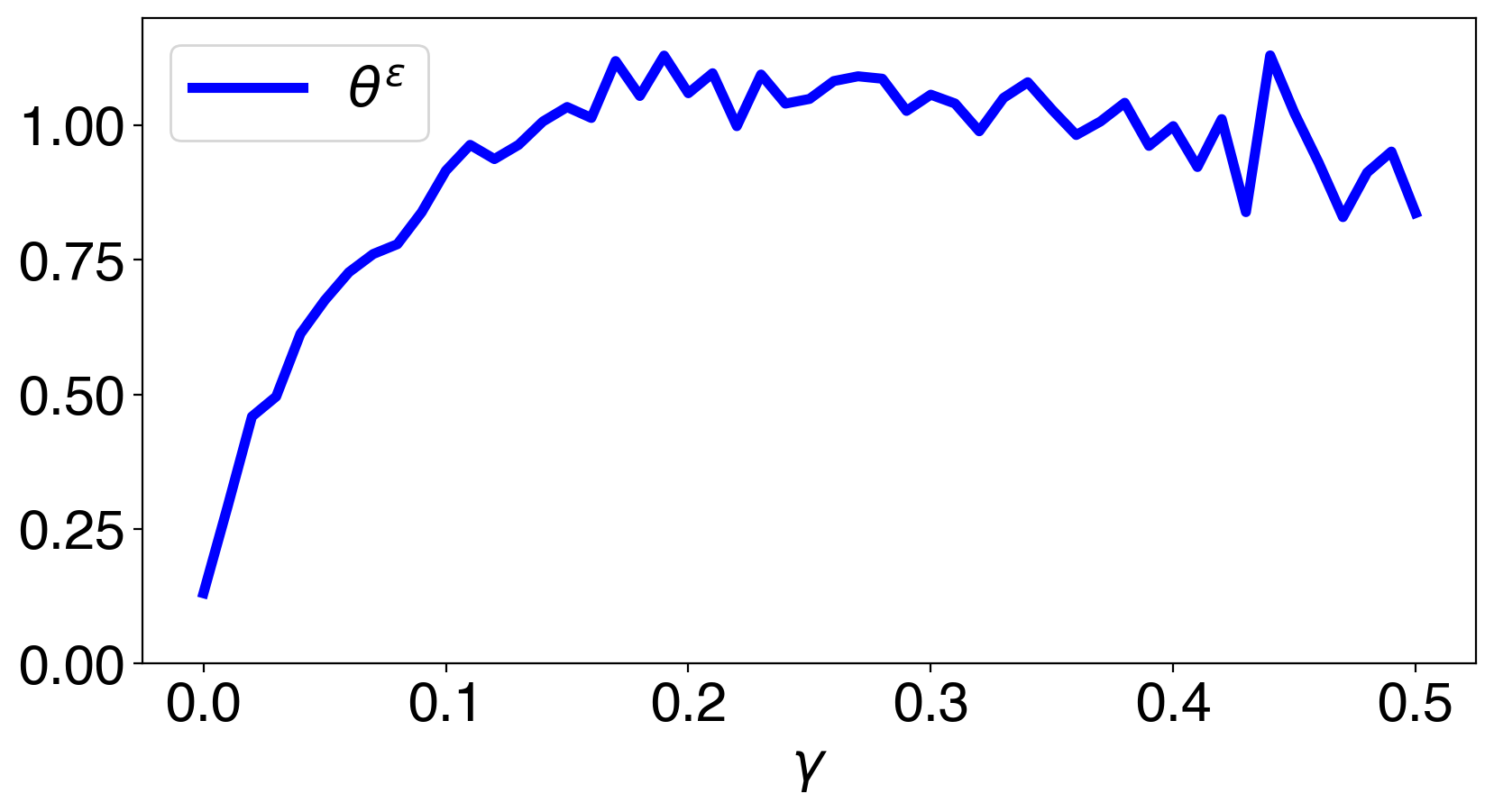}
    \caption{Fish milling (Rings).}
    \label{fig:fish-ring_noise}
    \end{subfigure}%
    \hfill
    \begin{subfigure}[t]{0.46\textwidth}
        \centering
        \includegraphics[width=\linewidth]{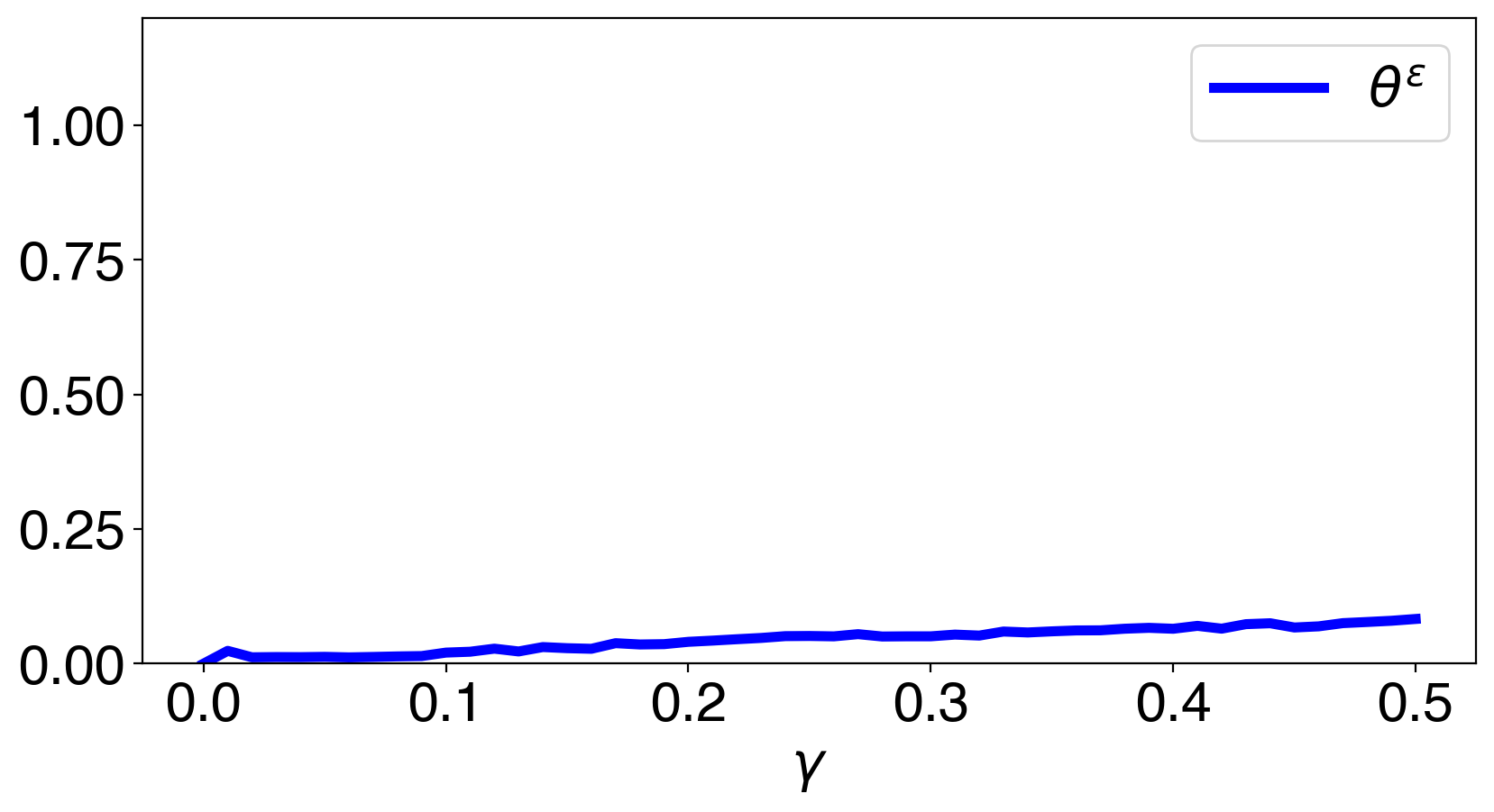}
    \caption{Double milling.}
    \label{fig:double-mill_noise}
    \end{subfigure}
    \caption{Noise effects in learning for quasi-static patterns from second-order systems}
\end{figure}
%
%

%
\end{document}